\theoremstyle{plain}
\newtheorem{theorem}{Theorem}[section]
\newtheorem{lemma}[theorem]{Lemma}
\theoremstyle{definition}
\newtheorem{definition}[theorem]{Definition}
\newtheorem{assumption}[theorem]{Assumption}
\crefname{assumption}{Assumption}{Assumptions}
\theoremstyle{remark}
\newtheorem{remark}[theorem]{Remark}
\title{Learning Bellman Complete Representations for Offline Policy Evaluation}
\begin{document}

\twocolumn[
\icmltitle{Learning Bellman Complete Representations for Offline Policy Evaluation}

\icmlsetsymbol{equal}{*}

\begin{icmlauthorlist}
\icmlauthor{Jonathan D. Chang}{equal,cu}
\icmlauthor{Kaiwen Wang}{equal,cu}
\icmlauthor{Nathan Kallus}{cuorie}
\icmlauthor{Wen Sun}{cu}
\end{icmlauthorlist}

\icmlaffiliation{cu}{Computer Science, Cornell University, Ithaca, NY, USA}
\icmlaffiliation{cuorie}{Operations Research and Information Engineering, Cornell Tech, New York, NY, USA}

\icmlcorrespondingauthor{Jonathan D. Chang}{\href{https://jdchang1.github.io/}{\nolinkurl{https://jdchang1.github.io}}}
\icmlcorrespondingauthor{Kaiwen Wang}{\href{https://kaiwenw.github.io/}{\nolinkurl{https://kaiwenw.github.io}}}

\icmlkeywords{Representation Learning, Bellman Completeness, Off-Policy Evaluation, Reinforcement Learning, Offline RL}

\vskip 0.3in
]

\printAffiliationsAndNotice{\icmlEqualContribution} %

\begin{abstract}

We study representation learning for Offline Reinforcement Learning (RL), focusing on the important task of Offline Policy Evaluation (OPE).
Recent work shows that, in contrast to supervised learning, realizability of the Q-function is not enough for learning it.
Two sufficient conditions for sample-efficient OPE are Bellman completeness and coverage.
Prior work often assumes that representations satisfying these conditions are given, with results being mostly theoretical in nature.
In this work, we propose \texttt{BCRL}, which directly learns from data an approximately linear Bellman complete representation with good coverage.
With this learned representation, we perform OPE using Least Square Policy Evaluation (LSPE) with linear functions in our learned representation.
We present an end-to-end theoretical analysis, showing that our two-stage algorithm enjoys polynomial sample complexity provided \emph{some} representation in the rich class considered is linear Bellman complete.
Empirically, we extensively evaluate our algorithm on challenging, image-based continuous control tasks from the Deepmind Control Suite.
We show our representation enables better OPE compared to previous representation learning methods developed for off-policy RL (e.g., CURL, SPR).
\texttt{BCRL} achieves competitive OPE error with the state-of-the-art method Fitted Q-Evaluation (FQE), and beats FQE when evaluating beyond the initial state distribution.
Our ablations show that both linear Bellman complete and coverage components of our method are crucial.

\end{abstract}

\newcommand{\rewardVec}{\rho}
\newcommand{\Mub}{\nm{M\opt}_2}
\newcommand{\Dcal}{\mathcal{D}}
\newcommand{\Tcal}{\mathcal{T}}
\newcommand{\Scal}{\mathcal{S}}
\newcommand{\Acal}{\mathcal{A}}
\newcommand{\alg}{\texttt{BCRL}}

\vspace{-20pt}
\section{Introduction}\label{sec:intro}

Deep Reinforcement Learning (RL) has developed agents that solve complex sequential decision making tasks, achieving new state-of-the-art results and surpassing expert human performance. Despite these impressive results, these algorithms often require a prohibitively large number of online interactions to scale to higher dimensional inputs.

To address these sample complexity demands, a recent line of work \cite{oord2018cpc, anand2019stdim, mazoure2020infomax, nachum2021pretraining} has incorporated advances in unsupervised representation learning from the supervised learning literature into developing RL agents.
For example, CURL \cite{laskin2020curl} and SPR \cite{spr} utilize contrastive representation objectives as auxiliary losses within an existing RL framework.
These efforts are motivated by the tremendous success such self-supervised techniques have offered in computer vision, natural language processing, speech processing, and beyond. %
While these formulations have shown sample complexity improvements empirically, it remains an open question whether these approaches successfully address the unique challenges from RL that usually do not appear in supervised learning, such as exploration and exploitation, credit assignments, long horizon prediction, and distribution shift. 
In particular, recent work \citep{wang2021exponential,amortila2020variant,foster2021offline} has shown that realizability of the learning target in RL (namely, the Q-function) is insufficient to avoid exponential dependence on problem parameters.%
\looseness=-1

In this paper, we study representation learning for an important subtask of off-policy RL: offline policy evaluation (OPE). OPE is a critical component for any off-policy policy optimization approach (e.g., off-policy actor critic such as SAC, \citealp{haarnoja2018soft}, and DDPG, \citealp{lillicrap2016ddpg}). Moreover, OPE allows us to focus on issues arising from distribution shift and long horizon prediction. Specifically, we propose a new approach that leverages rich function approximation (e.g., deep neural networks) to learn a representation that is both Bellman complete and exploratory. A linear Bellman complete representation means that linear functions in the representation have zero inherent Bellman error \cite{antos2008fqi}, i.e., applying the Bellman operator on a linear function results in a new linear function. An exploratory representation means that the resulting feature covariance matrix formed by the offline dataset is well-conditioned. These two representational properties ensure that, under linear function approximation (i.e., the linear evaluation protocol, \citealp{grill2020byol}), classic least squares policy evaluation (LSPE) \cite{nedic2003least,duan2020minimax,ruosong2020statisticallimits} can achieve accurate policy evaluation. We provide an end-to-end analysis showing that our representation learning approach together with LSPE ensures near-optimal policy evaluation with polynomial sample complexity. \looseness=-1

Empirically, we extensively evaluate our method on image-based continuous control tasks from the Deepmind Control Suite. First, we compare against two representation learning approaches developed for off-policy RL: CURL \cite{laskin2020curl} and SPR \cite{spr}. These bear many similarities to contrastive learning techiniques for unsupervised representation learning: SimCLR \cite{chen2020simclr} and Bootstrap your own latent (BYOL, \citealp{grill2020byol}), respectively.
Under the linear evaluation protocol (i.e., LSPE with a linear function on top of the learned representation), our approach consistently outperforms these baselines. We observe that representations learned by CURL and SPR  sometimes even exhibit \emph{instability when evaluated using LSPE} (prediction error blows up when more iterations of LSPE is applied), while our approach is more stable. This comparison demonstrates that representation learning in offline RL is more subtle and using representation techniques developed from supervised learning settings may not result in the best performance for offline RL. 
Our ablations show that both linear Bellman completeness and coverage are crucial, as our method also blows up if one ingredient is missing. Finally, \texttt{BCRL} achieves state-of-the-art OPE error when compared with other OPE methods, and improves the state-of-the-art when evaluating beyond the initial state distribution.\footnote{{\scriptsize Code available at \href{https://github.com/CausalML/bcrl}{\nolinkurl{https://github.com/CausalML/bcrl}}}.}
\looseness=-1

\subsection{Related Work}

\textbf{Representation Learning in Offline RL: }

\textbf{From the theoretical side}, \citet{hao2021sparse} considers offline RL in sparse linear MDPs \cite{jin2020provably}. Learning with sparsity can be understood as feature selection. Their work has a much stronger coverage condition than ours: namely, given a representation class $\Phi$, they assume that any feature $\phi\in\Phi$ has global coverage under the offline data distribution, i.e., $\EE_{s,a\sim \nu}\phi(s,a)\phi(s,a)^\top$ is well conditioned where $\nu$ is offline data distribution. In our work, we only assume that \emph{there exists one $\phi^\star$} that has global coverage, thus strictly generalizing their coverage condition. \citet{uehara2021pessimistic} propose a general model-based offline RL approach that can perform representation learning for linear MDPs in the offline setting without global coverage. However, their algorithm is a version space approach and is not computationally efficient. Also, our linear Bellman completeness condition \emph{strictly generalizes} linear MDPs. \cite{ni2021learning} consider learning state-action embeddings from a known RKHS. We use general function approximation that can be more powerful than RKHS. Finally, \citet{par2008analysislinearfeature} identifies bellman completeness as a desirable condition for feature selection in RL when analyzing an equivalence between linear value-function approximation and linear model approximation. In our work, we  investigate how to learn bellman completeness representation, and also the role of coverage in our feature selection.

\textbf{From the empirical side}, \citet{nachum2021pretraining} evaluated a broad range of unsupervised objectives for learning pretrained representations from offline datasets for downstream Imitation Learning (IL), online RL, and offline RL. They found that the use of pretrained representations dramatically improved the downstream performance for policy learning algorithms. In this work, we aim to identify such an unsupervised representation objective and to extend the empirical evaluation to offline image-based continuous control tasks. \citet{nachum2021replearning} presents a provable contrastive representation learning objective for IL (derived from maximum likelihood loss), learning state representations from expert data to do imitation with behavior cloning. Our approach instead focuses on learning state-action representations for OPE. Finally, both \citet{song2016linearfeatureencoding} and \citet{chung2018twotimescale} present algorithms to learn representations suitable for linear value function approximation. \citet{song2016linearfeatureencoding} is most relevant to our work where they aim to learn bellman complete features. Both works, however, work in the online setting and do not consider coverage induced from the representation which we identify is important for accurate OPE.

\textbf{Representation Learning in Online RL:} 

\textbf{Theoretical works} on representation learning in online RL focus on learning representations to facilitate exploration from scratch. OLIVE \citep{jiang2017contextual}, BLin-UCB \citep{du2021bilinear}, FLAMBE \citep{agarwal2020flambe}, Moffle \citep{modi2021model}, and Rep-UCB \citep{uehara2021representation} propose approaches for representation learning in linear MDPs where they assume that there exists a feature $\phi^\star\in\Phi$ that admits the linear MDP structure for the ground truth transition. \citet{zhang2021provably} posits an even stronger assumption where every feature $\phi\in\Phi$ admits the linear MDP structure for the true transition. Note that we only assume that \emph{there exists one} $\phi^\star$ that admits linear Bellman completeness, which strictly generalizes linear MDPs. Hence, our representation learning setting is more general than prior theoretical works. Note that we study the offline setting while prior works operate in the online setting which has additional challenges from online exploration.

\textbf{On the empirical side}, there is a large body of works that adapt existing self-supervised learning techniques developed from computer vision and NLP to RL. For learning representations from image-based RL tasks, CPC \cite{oord2018cpc}, ST-DIM \cite{anand2019stdim}, DRIML \cite{mazoure2020infomax}, CURL \cite{laskin2020curl}, and SPR \cite{spr} learn representations by optimizing various temporal contrastive losses. In particular, \citet{laskin2020curl} proposes to interleave minimizing an InfoNCE objective with similarities to MoCo \cite{he2019moco} and SimCLR \cite{chen2020simclr} while learning a policy with SAC \cite{haarnoja2018soft}. Moreover, \citet{spr} proposes learning a representation similar to BYOL \cite{grill2020byol} alongside a Q-function for Deep Q-Learning \cite{minh2013atari}. We compare our representational objective against CURL and SPR in \cref{sec:exp}, and demonstrate that under linear evaluation protocol, ours outperform both CURL and SPR. Note, we refer the readers to \citet{spr} for comparisons between SPR and CPC, ST-DIM, and DRIML.

\section{Preliminaries}
\label{sec:prelim}

In this paper we consider the infinite horizon discounted MDP $\langle \mathcal{S}, \mathcal{A}, \gamma, P, r, d_0 \rangle$. where $\mathcal{S}, \mathcal{A}$ are state and action spaces which could contain a large number of states and actions or could even be infinite, $\gamma \in (0,1)$ is a discount factor, $P$ is the transition kernel, $r: \mS \times \mA \to \RR$ is the reward function, and $d_0\in\Delta(\mathcal{S})$ is the initial state distribution. 
We assume rewards are bounded by 1, i.e. $\abs{r(s,a)} \leq 1$.
Given a policy $\pi: \mathcal{S}\mapsto \Delta(\mathcal{A})$, we denote $V^{\pi}(s) = \mathbb{E}\left[  \sum_{h=0}^{\infty} \gamma^h r_h | \pi, s_0 := s \right]$ as the expected discounted total reward of policy $\pi$ starting at state $s$. We denote $V^{\pi}_{d_0} = \mathbb{E}_{s\sim d_0} V^{\pi}(s)$ as the expected discounted total reward of the policy $\pi$ starting at the initial state distribution $d_0$. We also denote average state-action distribution $d_{d_0}^{\pi}(s,a) = (1-\gamma) \sum_{h=0}^{\infty}\gamma^h d^{\pi}_h(s,a)$ where $d^{\pi}_h(s,a)$ is the probability of $\pi$ visiting the $(s,a)$ pair at time step $h$, starting from $d_0$.

In OPE, we seek to evaluate the expected discounted total reward $V^{\pi_e}$ of a target policy $\pi_e: \mathcal{S}\mapsto \Delta(\mathcal{A})$ given data drawn from an offline state-action distribution $\nu\in\Delta(\mathcal{S}\times\mathcal{A})$. The latter can, for example, be the state-action distribution under a behavior policy $\pi_b$. Namely, the offline dataset ${\Dcal} = \{s_i, a_i, r_i, s'_i\}_{i=1}^N$ consists of  $N$ i.i.d tuples generated as $(s,a) \sim \nu, r = r(s,a), s' \sim P(\cdot | s,a)$.

We define Bellman operator  $\Tcal^{\pi}$associated with $\pi$ as follows: 
\begin{align*}
   \mT^\pi f(s,a) \defeq r(s,a) + \gamma \Eb[s' \sim P(s,a), a' \sim \pi(s')]{ f(s',a') }
\end{align*}
We may drop the superscript $\pi$ when it is clear from context, in particular when $\pi=\pi_e$ is the fixed target policy.

A representation, or feature, $\phi: \mS \times \mA \to \RR^d$ is an embedding of state-action pairs into $d$-dimensional space. 
We let $\Sigma(\phi)=\Eb[\nu]{\phi(s,a)\phi(s,a)\tr},\,\wh\Sigma(\phi)=\frac1N\sum_{i=1}^N\phi(s_i,a_i)\phi(s_i,a_i)\tr$.
We consider learning a representation from  a feature hypothesis class $\Phi \subset[\Scal\times\Acal \mapsto \mathbb{R}^d]$. We assume features have bounded norm: $\nm{\phi(s,a)}_2 \leq 1, \forall s, a, \forall \phi \in \Phi$. In our experiments, $\Phi$ is convolutional neural nets with $d$ outputs. 

\paragraph{Notation} We denote $B_{W} := \{ x\in \mathbb{R}^d : \|x\|_2 \leq W \}$ as the Euclidean Ball in $\mathbb{R}^d$ with radius $W$. Given a distribution $\nu \in \Delta(\Scal\times\Acal)$ and a function $f:\Scal\times\Acal\mapsto \mathbb{R}$, we denote $L_2(\nu)$ norm as $\| f\|_\nu^2 = \mathbb{E}_{s,a\sim \nu} f^2(s,a)$. Given a positive definite matrix $\Sigma$, let $\|x\|_\Sigma = \sqrt{x^T\Sigma x}$ and let $\lambda_{\min}( \Sigma )$ denote the minimum eigenvalue.
When $\nu \ll \mu$ we let $\frac{\diff \nu}{\diff \mu}$ denote the Radon-Nikodym derivative.
We use $\circ$ to denote composition, so $s',a' \sim P(s,a) \circ \pi$ is short-form for $s' \sim P(s, a), a' \sim \pi(s')$.
For any function $f(s,a)$ and a policy $\pi$, we denote $f(s,\pi) = \Eb[a \sim \pi(s)]{f(s,a)}$.

\section{Linear Bellman Completeness}\label{sec:lbc_rep}

Before we introduce our representation learning approach, in this section, we first consider OPE with linear function approximation with a given representation $\phi$.
Lessons from supervised learning or linear bandit may suggest that OPE should be possible with polynomially many offline samples, as long as (1) ground truth target $Q^{\pi_e}$ is linear in $\phi$ (i.e. $\exists w\in\mathbb{R}^d$, such that for all $s,a$, $Q^{\pi_e}(s,a) = w^{\top} \phi(s,a)$), and (2) the offline data provides sufficient coverage over $\pi_e$ (i.e., $\lambda_{\min}(\Sigma(\phi))>0$).
Unfortunately, under these two assumptions, there are lower bounds indicating that for \emph{any} OPE algorithm, there exists an MDP where one will need at least exponentially (exponential in horizon or $d$) many offline samples to provide an accurate estimate of $V^{\pi_e}$ \citep{wang2021exponential,foster2021offline,amortila2020variant}.
This lower bound indicates that one needs additional structural conditions on the representation.  \looseness=-1

The additional condition the prior work has consider is Bellman completeness (BC). Since we seek to learn a representation rather than assume theoretical conditions on a given one, we will focus on an \emph{approximate} version of BC.
 \begin{definition}[Approximate Linear BC] A representation $\phi$ is $\eps_\nu$-approximately linear Bellman complete if,
\begin{equation*} 
{\displaystyle \max_{w_1 \in B_W} \min_{w_2 \in B_W}  \left\|   w_2\tr \phi   -\Tcal^{\pi_e} ( w_1\tr \phi )\right\|_{\nu} \leq \eps_\nu}.
\end{equation*} 
Note the dependence on $\nu$, $\pi_e$, and $W$.
\label{def:lbe}
\end{definition}
Intuitively the above condition is saying that for any linear function $w_1\tr \phi(s,a)$, $\Tcal^{\pi} (w_1\tr \phi(s,a))$ itself can be approximated by another linear function under the distribution $\nu$.

\begin{remark}
Low-rank MDPs \citep{jiang2017contextual,agarwal2020flambe} are subsumed in the exact linear BC model, i.e., $\eps_\nu = 0$ under any distribution $\nu$.
\end{remark}

Note that the Bellman completeness condition is more subtle than the common realizability condition: for any fixed $\phi$, increasing its expressiveness (e.g., add more features) does not imply a monotonic decrease in $\eps_\nu$. Thus common tricks such as lifting features to higher order polynomial kernel space or more general reproducing kernel Hilbert space does not imply the linear Bellman complete condition, nor does it improve the coverage condition.
We next show approximate Linear BC together with coverage imply sample-efficient OPE via Least Square Policy Evaluation (LSPE) (\cref{alg:lspe}).
We present our result using the relative condition number, defined for any initial state distribution $p_0$ as 
\begin{align}
    \label{eq:relative_cond}
    \kappa(p_0) := \sup_{x \in \RR^d} \frac{x\tr \EE_{s,a\sim d_{p_0}^{\pi_e}} \phi(s,a)\phi(s,a){\tr} x }{x\tr  \Sigma(\phi) x}.
\end{align}

\begin{algorithm}[t]
	\caption{Least Squares Policy Evaluation (LSPE)}
	\label{alg:lspe}
	\begin{algorithmic}[1]
	    \STATE \textbf{Input: } Target policy $\pi_e$, features $\phi$, dataset $\mD$
	    \STATE Initialize $\wh\theta_0 = \textbf{0} \in B_W$.
	    \FOR{$k = 1, 2, ..., K$}
	    	\STATE Set $\wh f_{k-1}(s, a) = \wh\theta_{k-1}\tr \phi(s,a)$, \\\phantom{Set }$\wh V_{k-1}(s)=\EE_{a \sim \pi_e(s)}[\wh f_{k-1}(s, a)]$
	        \STATE Perform linear regression: 
	        \vspace{-5pt}\begin{align*}
	        	  \wh\theta_k \in  \argmin_{\theta \in B_W} 
	        	  &\frac{1}{N}\sum_{i=1}^N \big( \theta\tr \phi(s_i,a_i) - r_i  - \gamma \wh V_{k-1}(s_i') \big)^2
	        \end{align*} \label{line:least_square_lspe}
	        \vspace{-15pt}
		\ENDFOR
		\STATE Return $\wh f_K$.
	\end{algorithmic}
\end{algorithm}

\begin{restatable}[Sample Complexity of LSPE]{theorem}{fqesamplecomplexity} \label{thm:fqe}
Assume feature $\phi$ satisfies approximate Linear BC with parameter $\eps_\nu$.
For any $\delta \in (0, 0.1)$, w.p. at least $1-\delta$, we have for any initial state distribution $p_0$
\begin{align*}
&\abs{V_{p_0}^{\pi_e}- \Eb[s\sim p_0]{\wh f_K(s,\pi_e)}}
\leq
\frac{\gamma^{K/2}}{1-\gamma} + \frac{4 \sqrt{\nm{\frac{\diff d_{p_0}^{\pi_e}}{\diff \nu}}_\infty} }{(1-\gamma)^2} \eps_\nu 
\\&+ \frac{480\sqrt{\kappa(p_0)}(1+W)d\log(N) \sqrt{\log(10/\delta)}}{(1-\gamma)^2 \sqrt{N}},
\end{align*}
where $\wh f_K$ is the output of \cref{alg:lspe}.
\end{restatable}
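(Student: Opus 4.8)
The plan is to view \cref{alg:lspe} as approximate value iteration in which each step replaces the Bellman operator $\Tcal^{\pi_e}$ by a linear least-squares fit, and to control the final value error through three separate contributions: an iteration error that decays geometrically in $K$, a misspecification error governed by $\eps_\nu$, and a statistical error from the finite sample. The structural fact that makes this decomposition clean is that $\Tcal^{\pi_e}$ is affine: writing $(P^{\pi_e}f)(s,a)=\EE_{s'\sim P(s,a),\,a'\sim\pi_e(s')}[f(s',a')]$ we have $\Tcal^{\pi_e}f = r + \gamma P^{\pi_e}f$, so $\Tcal^{\pi_e}f - \Tcal^{\pi_e}g = \gamma P^{\pi_e}(f-g)$, and $Q^{\pi_e}$ is the unique fixed point with $\|Q^{\pi_e}\|_\infty \le 1/(1-\gamma)$.

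First I would establish a per-iteration regression bound. At step $k$ the targets $r_i+\gamma\wh V_{k-1}(s_i')$ have conditional mean $\Tcal^{\pi_e}\wh f_{k-1}(s_i,a_i)$, and \cref{def:lbe} applied with $w_1=\wh\theta_{k-1}\in B_W$ gives some $\bar\theta_k\in B_W$ with $\|\bar\theta_k\tr\phi - \Tcal^{\pi_e}\wh f_{k-1}\|_\nu\le\eps_\nu$. I then decompose the one-step residual $\zeta_k \defeq \wh f_k - \Tcal^{\pi_e}\wh f_{k-1} = \underbrace{(\wh\theta_k-\bar\theta_k)\tr\phi}_{\text{linear estimation}} + \underbrace{(\bar\theta_k\tr\phi - \Tcal^{\pi_e}\wh f_{k-1})}_{\text{nonlinear, }\le\,\eps_\nu}$, and a standard least-squares argument controls the linear estimation part in $\|\cdot\|_\nu$ by a statistical term of order $(1+W)\,d\log(N)\sqrt{\log(1/\delta)}/\sqrt N$ (the noise range is $|y_i|\le 1+\gamma W$), while the nonlinear part is at most $\eps_\nu$ by construction. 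The first real obstacle appears here: the step-$k$ targets depend on $\wh\theta_{k-1}$, itself computed from the same dataset $\mD$, so this is not a clean i.i.d. regression. I would resolve it by proving the bound uniformly over all predecessors $\theta\in B_W$ via a covering argument on $B_W$ (whose log-covering number is $O(d\log(\cdot))$, explaining the $d\log N$ factor) with a union bound, so that a single high-probability event covers every iterate $\wh\theta_{k-1}$, and hence all $K$ iterations simultaneously.

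Next I would propagate these one-step errors. Unrolling $\wh f_k - Q^{\pi_e} = \zeta_k + \gamma P^{\pi_e}(\wh f_{k-1}-Q^{\pi_e})$ from $\wh f_0 = 0$ gives $\wh f_K - Q^{\pi_e} = \sum_{k=1}^K (\gamma P^{\pi_e})^{K-k}\zeta_k - (\gamma P^{\pi_e})^K Q^{\pi_e}$. Taking $\EE_{s\sim p_0,\,a\sim\pi_e(s)}$ and using that pushing the start distribution $j$ steps forward under $P^{\pi_e}$ produces the time-$j$ occupancy $d_j^{\pi_e}$ (started from $p_0$), the target difference becomes $\sum_k \gamma^{K-k}\EE_{d_{K-k}^{\pi_e}}[\zeta_k]$ plus an initialization term bounded by $\gamma^K\|Q^{\pi_e}\|_\infty \le \gamma^K/(1-\gamma)\le\gamma^{K/2}/(1-\gamma)$. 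The crucial second step is to change measure from $\nu$ to $d_{K-k}^{\pi_e}$ differently for the two pieces of $\zeta_k$. For the nonlinear approximation piece I can only use the crude bound $|\EE_{d}[g]|\le\sqrt{\|\diff d/\diff\nu\|_\infty}\,\|g\|_\nu$, but since $d_{p_0}^{\pi_e}\ge(1-\gamma)\gamma^{K-k}d_{K-k}^{\pi_e}$ the weights $\gamma^{K-k}$ exactly absorb the resulting $\gamma^{-(K-k)}$ and, after summing the geometric series, collapse the per-step ratios into $\sqrt{\|\diff d_{p_0}^{\pi_e}/\diff\nu\|_\infty}$, yielding the $\eps_\nu$ term. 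For the linear estimation piece $(\wh\theta_k-\bar\theta_k)\tr\phi$ I instead invoke the relative condition number, $\EE_{d_{p_0}^{\pi_e}}[(x\tr\phi)^2]\le\kappa(p_0)\,x\tr\Sigma(\phi)x = \kappa(p_0)\|x\tr\phi\|_\nu^2$ by \cref{eq:relative_cond}, so the dominant statistical error is multiplied only by the benign $\sqrt{\kappa(p_0)}$ rather than a density ratio. Summing the three geometric series and collecting the $(1-\gamma)$ powers gives the stated bound. The heart of the argument is precisely this split: routing the well-conditioned linear error through $\kappa(p_0)$ while letting the cruder $L_\infty$ concentrability multiply only the small misspecification $\eps_\nu$.
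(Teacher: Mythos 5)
Your proposal is correct, and its overall architecture matches the paper's: (i) the same per-step decomposition of $\wh f_k - \Tcal^{\pi_e}\wh f_{k-1}$ into a linear estimation piece plus a misspecification piece of size $\eps_\nu$ (note your $\bar\theta_k$ coincides with the paper's population regression solution $\theta_\vartheta$ at $\vartheta=\wh\theta_{k-1}$, by the bias--variance identity, since the conditional variance of the targets does not depend on $\theta$); (ii) the same crucial routing of the linear piece through $\kappa(p_0)$ and of the nonlinear piece through $\nm{\diff d^{\pi_e}_{p_0}/\diff\nu}_\infty$; and (iii) the same fix for data reuse, namely concentration uniform over all predecessors $\vartheta \in B_W$, which is the paper's \cref{lemma:concentrate} --- be aware that this ``standard'' step is the technical heart of the paper's proof (localization via \cref{lm:switching-emp-and-pop-sigma-norm}, symmetrization, contraction, chaining, and the H\"older continuity of $\vartheta \mapsto \theta_\vartheta$ from \cref{lm:first-order-optimality-theta-vartheta}, which is what lets the union bound transfer along a cover of $B_W$). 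Where you genuinely diverge is the error propagation: the paper never unrolls $\wh f_K - Q^{\pi_e}$ through time marginals; it instead applies a generalized performance difference lemma (\cref{lm:general-pdl-sarsa}) to express the value error via the Bellman residual of the \emph{final} iterate under $d^{\pi_e}_{p_0}$, then runs a backward recursion on residuals of successive iterates (\cref{lemma:bellerr}, using the flow identity of the discounted occupancy), paying $4/(1-\gamma)^2$ times the \emph{maximum} per-step error. Your classical approximate-value-iteration unrolling keeps the geometric weights $\gamma^{K-k}$, and since the conversion from the time-$j$ marginal to $d^{\pi_e}_{p_0}$ costs only $\gamma^{-j/2}(1-\gamma)^{-1/2}$ after taking square roots, you retain a summable $\gamma^{j/2}$ and land at $(1-\gamma)^{-3/2}$ instead of $(1-\gamma)^{-2}$ --- a slightly sharper horizon dependence that still sits comfortably inside the stated constants. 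One phrase to correct: the weights do not ``exactly absorb'' $\gamma^{-(K-k)}$; they absorb its square root, leaving $\gamma^{(K-k)/2}$, which is precisely what makes the geometric sum finite (your next clause gets this right).
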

Please see \cref{sec:proof-fqe} for proof.
The above result holds simultaneously over all initial state distributions $p_0$ covered by the data distribution $\nu$.
If $\nu$ has full coverage, i.e. if $\Sigma \succeq \beta I$, as is commonly assumed in the literature, one can show that $\kappa(p_0) \leq \beta^{-1}$ for any initial state distribution $p_0$.
Also note that the concentrability coefficient $\nm{\frac{\diff d_{p_0}^{\pi_e}}{\diff \nu}}_\infty$ shows up as $\Tcal^{\pi_e} \wh f_{k-1}$ can be \emph{nonlinear}.  In the exact Linear BC case, where $\eps_\nu=0$ (i.e., there is a linear function that perfectly approximates $\Tcal^{\pi_e}\wh f_{k-1}$ under $\nu$), the term involving the concentrability coefficient will be 0 and we can even avoid its finiteness.

\section{Representation Learning}\label{sec:learning-features}

The previous section indicates sufficient conditions on the representation for efficient and accurate OPE with linear function approximation. However, a representation that is Bellman complete and also provides coverage is not available a priori, especially for high-dimensional settings (e.g., image-based control tasks as we consider in our experiments). Existing theoretically work  often assume such representation is given. 
We propose to learn a representation $\phi$ that is approximately Bellman complete and also provides good coverage, via rich function approximation (e.g., a deep neural network). More formally, we want to learn a representation $\phi$ such that (1) it ensures approximate Bellman complete, i.e., $\varepsilon_\nu$ is small, and (2) has a good coverage, i.e., $\lambda_{\min}(\Sigma(\phi))$ is as large as possible, which are the two key properties to guarantee accurate and efficient OPE indicated by Theorem~\ref{thm:fqe}.
To formulate the representation learning question, our key assumption is that our representation hypothesis class $\Phi$ is rich enough such that it contains at least one representation $\phi^\star$ that is linear Bellman complete (i.e., $\eps_\nu = 0$) and has a good coverage.

\begin{assumption}[Representational power of $\Phi$] \label{assum:phi_condition} 
There exists $\phi^\star\in \Phi$, such that (1) $\phi^\star$ achieves exact Linear BC (definition~\ref{def:lbe} with $\varepsilon_{\nu} = 0$), and (2) $\phi^\star$ induces coverage, i.e., $\lambda_{\min}(\Sigma(\phi^\star)) \geq \beta >0$. 
\vspace{-5pt}
\end{assumption}
Our goal is to learn such a representation from the hypothesis class $\Phi$. Note that unlike prior RL representation learning works \citep{hao2021sparse, zhang2021provably}, here we only assume that there exists a $\phi^\star$ has linear BC and induces good coverage, other candidate  $\phi\in \Phi$ could have terrible coverage and does not necessarily have linear BC.

Before proceeding to the learning algorithm, we first present an \emph{equivalent condition} for linear BC, which does not rely on the complicated min-max expression of \cref{def:lbe}.
\begin{restatable}{proposition}{equivalentlbc}\label{prop:equivalent_lbc}
Consider a feature $\phi$ with full rank covariance $\Sigma(\phi)$. Given any $W > 0$, the feature $\phi$ being linear BC (under $B_W$) implies that there exist $(\rho,M)\in B_W\times \mathbb{R}^{d\times d}$ with $\|M\|_2  \leq \sqrt{ 1 - \frac{ \|\rho\|^2  }{ W^2 }}$, and 
\begin{align}
\label{eq:linear_M_condition}
\mathbb{E}_{\nu} \left\| \begin{bmatrix} M \\ \rho\tr\end{bmatrix} \phi(s,a) -  \begin{bmatrix}  \gamma \EE_{s'\sim P(s,a)}\phi(s', \pi_e) \\ r(s,a) \end{bmatrix}\right\|^2_2  = 0.
\end{align}
On the other hand, if there exists $(\rho, M)\in B_W\times\RR^{d\times d}$ with $ \|M\|_2 < 1$ such that the above equality holds, then $\phi$ must satisfy exact linear BC with $W \geq\frac{\|\rho\|_2}{ 1- \|M\|_2 }$.
\end{restatable}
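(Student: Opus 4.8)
The plan is to reduce both directions to a single pointwise identity by unfolding the Bellman operator. Writing $\psi(s,a) := \gamma\,\mathbb{E}_{s'\sim P(s,a)}\phi(s',\pi_e)$, exact linear BC under $B_W$ is equivalent to the statement that for every $w_1 \in B_W$ there exists $w_2 \in B_W$ with $w_2^\top\phi = r + w_1^\top\psi$ in $L_2(\nu)$, since $\mathcal{T}^{\pi_e}(w_1^\top\phi) = r + w_1^\top\psi$. The block condition \cref{eq:linear_M_condition} likewise decouples, because an expectation of a nonnegative squared norm vanishes iff its integrand is zero $\nu$-a.e., into the two $L_2(\nu)$ identities $\rho^\top\phi = r$ and $M\phi = \psi$. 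Throughout I would exploit that full rank of $\Sigma(\phi)$ makes the map $w\mapsto w^\top\phi$ injective on $L_2(\nu)$, since $\|w^\top\phi\|_\nu^2 = w^\top\Sigma(\phi)w$ is strictly positive for $w\neq 0$.

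For the forward direction I would first take $w_1 = 0$, which yields a vector $\rho := w_2 \in B_W$ with $\rho^\top\phi = r$. Next, applying linear BC to $w_1 = We_j$ and subtracting the $w_1 = 0$ case shows $W\psi_j = (w_2^{(j)} - \rho)^\top\phi$, so each coordinate $\psi_j$ lies in the span of $\phi_1,\dots,\phi_d$ in $L_2(\nu)$; by full rank these expansions are unique, producing a matrix $M$ with $M\phi = \psi$. Substituting $r=\rho^\top\phi$ and $\psi=M\phi$ into the BC identity gives $w_2^\top\phi = (\rho + M^\top w_1)^\top\phi$, so injectivity forces $w_2 = \rho + M^\top w_1$; feasibility $w_2\in B_W$ then reads $\|\rho + M^\top w_1\|_2 \leq W$ for all $\|w_1\|_2 \leq W$.

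The main obstacle is converting this last uniform bound into the clean inequality $\|M\|_2 \leq \sqrt{1 - \|\rho\|_2^2/W^2}$, and I expect a symmetrization/parallelogram argument to do it. Choosing a unit vector $u$ with $\|M^\top u\|_2 = \|M\|_2$ and testing both $w_1 = Wu$ and $w_1 = -Wu$ (both in $B_W$), the parallelogram law $\|\rho + v\|_2^2 + \|\rho - v\|_2^2 = 2\|\rho\|_2^2 + 2\|v\|_2^2$ with $v = WM^\top u$ gives $\|\rho\|_2^2 + W^2\|M\|_2^2 \leq W^2$, which rearranges to the desired bound.

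The backward direction is then a short verification. From the two identities $\rho^\top\phi = r$ and $M\phi = \psi$, for an arbitrary $w_1 \in B_W$ I set $w_2 = \rho + M^\top w_1$, so that $w_2^\top\phi = \rho^\top\phi + w_1^\top M\phi = r + w_1^\top\psi = \mathcal{T}^{\pi_e}(w_1^\top\phi)$ in $L_2(\nu)$, matching the BC requirement exactly. It remains to check $w_2\in B_W$: the triangle inequality gives $\|w_2\|_2 \leq \|\rho\|_2 + \|M\|_2\|w_1\|_2 \leq \|\rho\|_2 + \|M\|_2 W$, and since $\|M\|_2 < 1$ this is at most $W$ precisely when $W \geq \|\rho\|_2/(1-\|M\|_2)$, the stated radius.
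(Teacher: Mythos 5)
Your proof is correct, and its overall skeleton matches the paper's: you extract $\rho$ by testing $w_1=0$, build the rows of $M$ by testing $w_1 = We_i$, use positive-definiteness of $\Sigma(\phi)$ to force $w_2 = \rho + M\tr w_1$ (hence feasibility $\|\rho + M\tr w_1\|_2 \le W$ for all $w_1 \in B_W$), and your backward direction is essentially identical to the paper's. The genuine difference is in the key step converting feasibility into $\|M\|_2 \le \sqrt{1 - \|\rho\|_2^2/W^2}$. The paper does this in two stages: it first proves $\|M\|_2 \le 1$ by testing $\pm w_1$ and subtracting, and then runs a separate argument through the singular value decomposition $M = \sum_i \sigma_i u_i v_i\tr$ --- picking $x = Wv_1$, normalizing signs so $\rho\tr u_1 \ge 0$, expanding $\|Mx+\rho\|_2^2$ via orthogonal (Pythagorean) decomposition, and solving the resulting quadratic inequality $W^2\sigma_1^2 + 2W(\rho\tr u_1)\sigma_1 + \|\rho\|_2^2 - W^2 \le 0$ for $\sigma_1$. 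Your parallelogram argument --- testing $w_1 = \pm Wu$ at a unit vector $u$ attaining $\|M\tr u\|_2 = \|M\|_2$ and summing $\|\rho+v\|_2^2 + \|\rho - v\|_2^2 = 2\|\rho\|_2^2 + 2\|v\|_2^2 \le 2W^2$ --- reaches the same bound in one line, with no SVD, no sign normalization, and no quadratic formula; it also subsumes the preliminary $\|M\|_2 \le 1$ step. What the paper's route buys is a slightly sharper intermediate inequality, $\sigma_1 \le \frac{-\rho\tr u_1 + \sqrt{(\rho\tr u_1)^2 + W^2 - \|\rho\|_2^2}}{W}$, which records how alignment of $\rho$ with the top singular direction tightens the bound before it is loosened to the stated form; for the proposition as stated, however, your argument is strictly simpler and fully sufficient. (The paper additionally remarks that $\|\rho\|_2 < W$ unless $M = 0$, but that side fact is not part of the claimed statement, so omitting it is fine.)
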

Please see \cref{sec:equivalent_lbc} for proof.
The above shows that linear BC is equivalent to a simple linear relationship between the feature and the expected next step's feature and reward. 
This motivates our feature learning algorithm: if we are capable of learning a representation $\phi$ such that the transition from the current feature $\phi(s,a)$ to the expected next feature $\mathbb{E}_{s'\sim P(s,a)}[\phi(s',\pi_e)]$ and reward $r(s,a)$ is linear, then we've found a feature $\phi$ that is linear BC.

\subsection{Algorithm}
\label{subsec:alg}

To learn the representation that achieves linear BC, we use \cref{prop:equivalent_lbc} to design a bilevel optimization program.  We start with deterministic transition as a warm up.

\paragraph{Deterministic transition} Due to determinism in the transition, we do not have an expectation with respect to $s'$ anymore. So, we design the bilevel optimization as follows:
\begin{align}
&\min_{\phi\in \Phi } \Big[ \min_{(\rho, M) \in \Theta} \mathbb{E}_{\mD} \left\| \begin{bmatrix} M \\ \rho\tr \end{bmatrix} \phi(s,a) -  \begin{bmatrix} \gamma \phi(s', \pi_e) \\ r(s,a) \end{bmatrix}\right\|^2_2\label{eq:bilevel_deter} 
\\&\Theta = \braces{ (\rewardVec, M) \in B_{\|\rho^\star\|} \times \RR^{d\times d}: , \|M\|_2\leq\|M^\star\|_2 }, \nonumber
\end{align}  
where $\rho^\star, M^\star$ are the optimal $\rho,M$ for the linear BC $\phi\opt$ (in \cref{assum:phi_condition}).
Namely, we aim to search for a representation $\phi\in \Phi$, such that the relationship between $\phi(s,a)$ and the combination of the next time step's feature $\phi(s',\pi_e)$ and the reward, is linear. 
The spectral norm constraint in $\Theta$ is justified by \cref{prop:equivalent_lbc}. Solving the above bilevel moment condition finds a representation that achieves approximate linear BC.

However, there is no guarantee that such representation can provide a good coverage over $\pi_e$'s traces. 
We introduce regularizations for $\phi$ using the ideas from optimal designs, particularly the $E$-optimal design. Define the minimum eigenvalue regularization
\begin{align*}
R_E(\phi) := \lambda_{\min}\left( \Eb[\mD]{\phi(s,a) \phi(s,a)\tr} \right), 
\end{align*}
as the smallest eigenvalue of the empirical feature covariance matrix under the representation $\phi$.
Maximizing this quantity ensures that our feature provides good coverage, i.e. $\lambda_{min}(\Sigma(\phi))$ is as large as possible.
 
Thus, to learn a representation that is approximately linear Bellman complete and also provides sufficient coverage, we formulate the following constrained bilevel optimization:
\begin{align*}
&\min_{\phi\in \Phi } \Big[ \min_{(\rho, M) \in \Theta} \mathbb{E}_{\mD} \left\| \begin{bmatrix} M \\ \rho\tr \end{bmatrix} \phi(s,a) -  \begin{bmatrix} \gamma \phi(s', \pi_e) \\ r(s,a) \end{bmatrix}\right\|^2_2\\
& s.t., R_{E}(\phi )\geq \beta/2.
\end{align*}\vspace{-20pt}

To extend this to stochastic transitions, there is an additional double sampling issue, which we discuss and address now.
\paragraph{Stochastic transition}
Ideally, we would solve the following bilevel optimization problem,
{\small
\begin{align}\label{eq:bilevel_opt_stoch_no_double_sampling}
\min_{\phi\in \Phi } \Big[ \min_{(\rho, M) \in \Theta}  \mathbb{E}_{\mD} \left\| \begin{bmatrix} M \\ \rho\tr \end{bmatrix} \phi(s,a) -  \begin{bmatrix}  \gamma \EE_{s'\sim P(s,a)}[ \phi(s', \pi_e)] \\ r(s,a) \end{bmatrix}\right\|^2_2\Big].
\end{align}}
However, we cannot directly optimize the above objective since we do not have access to $P(s,a)$ to compute the expected next step feature $\EE_{s'\sim P(s,a)}\phi(s', \pi_e)$.
Also note that the expectation $\EE_{s'}$ is inside the square which means that we cannot even get an unbiased estimate of the gradient of $(\phi, M)$ with one sample $s'\sim P(s,a)$. This phenomenon is related to the \emph{double sampling} issue on offline policy evaluation literature which forbids one to directly optimize Bellman residuals. Algorithms such as TD are designed to overcome the double sampling issue.
Here, we use a different technique to tackle this issue \citep{chen2019information}. We introduce a function class $\mathcal{G}\subset \Scal\times\Acal\mapsto \mathbb{R}^d$ which is rich enough to contain the expected next feature.
\begin{assumption}\label{asm:g-realizability}
For any $\phi \in \Phi$, we have that the mapping $(s, a) \mapsto \gamma \Eb[s'\sim P(s,a)]{\phi(s',\pi_e)}$ is in $\mG$.
\end{assumption}
We form the optimization problem as follows:
\begin{align}
& \min_{\phi\in \Phi }  \Big[ \min_{(\rho, M) \in \Theta} \mathbb{E}_{\mD} \left\| \begin{bmatrix} M \\ \rho\tr \end{bmatrix} \phi(s,a) -  \begin{bmatrix} \gamma \phi(s', \pi_e) \\ r(s,a) \end{bmatrix}\right\|^2_2 \nonumber
\\& \quad  -\min_{g\in\mathcal{G}} \EE_{\mD}\left\| g(s,a) - \gamma \phi(s', \pi_e) \right\|_2^2   \Big ]. \label{eq:bilevel_opt_stoch}
\end{align}
Note that under \cref{asm:g-realizability}, the min over $\mG$ will approximate $\gamma^2 \EE_{D}\| \EE_{s'\sim P(s,a)} \phi(s',\pi_e) - \phi(s', \pi_e)  \|_2^2$, i.e., the average variance induced by the stochastic transition.
Thus, for a fixed $\phi$ and $M$, we can see that the following 
\begin{align*}
&\mathbb{E}_{\mD} \left\|  M\phi(s,a) -  \gamma \phi(s', \pi_e) \right\|^2_2   \\
 & - \gamma^2 \EE_{D}\| \EE_{s'\sim P(s,a)} \phi(s',\pi_e) - \phi(s', \pi_e)  \|_2^2
\end{align*} is indeed an unbiased estimate of:
\begin{align*}\vspace{-10pt}
 \mathbb{E}_{s,a\sim \nu} \left\| M\phi(s,a) - \gamma \EE_{s'\sim P(s,a)}\phi(s', \pi_e) \right\|^2_2
\end{align*} which matches to the ideal objective in Eq.~\ref{eq:bilevel_opt_stoch_no_double_sampling}.
Thus solving for $\phi$ based Eq.~\ref{eq:bilevel_opt_stoch} allows us to optimize Eq.~\ref{eq:bilevel_opt_stoch_no_double_sampling}, which allows us to learn an approximate linear Bellman complete representation.
Similarly, we incorporate the $E$-optimal design here by adding a constraint that forcing the smallest eigenvalue of the empirical feature covariance matrix, i.e., $R_E(\phi)$, to be lower bounded.

Once we learn a representation that is approximately linear BC, and also induces sufficient coverage, we simply call the LSPE  to estimate $V^{\pi_e}$. The whole procedure is summarized in Algorithm~\ref{alg:feature_learning}. Note in Alg~\ref{alg:feature_learning} we put constraints to the objective function using Lagrangian multiplier. \looseness=-1

There are other design choices that also encourage coverage. One particular choice we study empirically is motivated by the idea of D-optimal design. Here we aim to find a representation that maximizes the following log-determinant %
\begin{align*}
R_{D}( \phi ) := \ln\text{det}\left( \Eb[\mD]{\phi(s,a) \phi(s,a)\tr} \right).
\end{align*}
When $\Dcal$ is large, then the regularization $R_D(\phi)$ approximates $\sum_i \ln \left( \sigma_i \left( \Eb[\nu]{ \phi(s,a) \phi(s,a)\tr} \right)\right)$. Maximizing $R_D(\phi)$ then intuitively maximizes coverage over all directions.  D-optimal design is widely used for bandits \citep{lattimore2020bandit} and RL \cite{wang2021exponential,rltheorybookAJKS} to design exploration distributions with global coverage. Note that, in contrast to these contexts where the feature is fixed and the distribution is optimized, we optimize the feature, given the data distribution $\nu$.
\looseness=-1

\begin{algorithm}[t]
	\caption{OPE with \textbf{B}ellman \textbf{C}omplete and exploratory \textbf{R}epresentation \textbf{L}earning (\alg)}
	\label{alg:feature_learning}
	\begin{algorithmic}[1]
	    \STATE \textbf{Input: } Representation class $\Phi$, dataset $\mD$ of size $2N$, design regularization $R$, function class $\mathcal{G}$, policy $\pi_e$.
	    \STATE Randomly split $\mD$ into two sets $\Dcal_1, \Dcal_2$ of size $N$.
	    \STATE If the system is stochastic, learn representation $\wh \phi$ as,
	    \begin{align*}
	    	&\argmin_{\phi\in \Phi } \Big[ \min_{(\rho,M)\in\Theta}  \mathbb{E}_{\mD_1} \left\| \begin{bmatrix} M \\ \rho\tr \end{bmatrix} \phi(s,a) -  \begin{bmatrix} \gamma \phi(s', \pi_e) \\ r(s,a) \end{bmatrix}\right\|^2_2 \\
		& \quad  - \min_{g\in\mathcal{G}} \mathbb{E}_{\mD_1} \left\| g(s,a) - \gamma \phi(s',\pi_e) \right\|_2^2 \Big ] - \lambda R(\phi)
	    \end{align*}
	    \STATE Otherwise, for deterministic system, learn $\wh \phi$ as,
	    \begin{align*}
	    &\argmin_{\phi\in \Phi } \Big[ \min_{(\rho,M)\in\Theta}  \mathbb{E}_{\mD_1} \left\| \begin{bmatrix} M \\ \rho\tr \end{bmatrix} \phi(s,a) -  \begin{bmatrix} \gamma \phi(s', \pi_e) \\ r(s,a) \end{bmatrix}\right\|^2_2 \\
		& \quad  - \lambda R(\phi)
	    \end{align*}
	    \STATE Return $\wh V := \text{LSPE}(\pi_e, \wh\phi, \mD_2)$.
	\end{algorithmic}
\end{algorithm}

\subsection{Sample Complexity Analysis}
We now prove a finite sample utility guarantee for the empirical constrained bilevel optimization problem,
\begin{align}
&\wh\phi \in \argmin_{\phi\in \Phi }  \Big[ \min_{(\rewardVec, M) \in \Theta} \mathbb{E}_{\mD} \left\|  \begin{bmatrix} M \\ \rewardVec^{\tr} \end{bmatrix} \phi(s,a) -  \begin{bmatrix} \gamma\phi(s', \pi_e) \\ r(s,a) \end{bmatrix}\right\|^2_2 \nonumber \\
&   \quad  -\min_{g\in\mathcal{G}} \EE_{\mD}\left\| g(s,a) - \gamma\phi(s', \pi_e) \right\|_2^2   \Big ]. \label{eq:phihat-def}
\\ & s.t., R_E(\phi) \geq \beta/2.  \nonumber
\end{align}\vspace{-10pt}

For simplicity, we state our results for discrete function class $\Phi$ and $\mathcal{G}$. Note that our sample complexity only scales with respect $\ln(|\Phi|)$ and $\ln(|\mathcal{G}|)$, which are the standard complexity measures for discrete function classes. We extend our analysis to infinite function classes under metric entropy assumptions \citep{wainwright2019high,vandervaart1996weak} in the Appendix; see \cref{asm:entropy}.

The following theorem shows that \cref{alg:feature_learning} learns a representation $\wh\phi$ that is $\mO(N^{-1/2})$ approximately Linear BC and has coverage.

\begin{restatable}{theorem}{learningphi}\label{thm:learning-phi}
Assume \cref{assum:phi_condition} (and \cref{asm:g-realizability} if the system is stochastic).
Let $C_2 \coloneqq \frac{96\log^{1/2}(|\Phi|) + 4\sqrt{d} + 4\log^{1/2}(1/\delta)}{\beta/4}$. 
If $N \geq C_2^2$, then for any $\delta \in (0, 1)$, w.p. at least $1-\delta$, we have
\begin{enumerate}[leftmargin=*]
	\item $\wh\phi$ satisfies $\wh\eps$-approximate Linear BC with
    \begin{align*}
        \wh\eps 
        &\leq \frac{13d(1+W)^2\log^{1/2}(4W|\Phi|N/\delta)}{\sqrt{N}} 
        \\&+ \frac{7\gamma(1+W)\log^{1/2}(2|\mG|/\delta)}{\sqrt{N}},
    \end{align*}
	\item $\lambda_{min}(\Sigma(\wh\phi)) \geq \beta/4$.
\end{enumerate}
If transitions are deterministic, treat $\log(|\mG|) = 0$. %
\end{restatable}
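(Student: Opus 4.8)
The proof splits into the two claimed conclusions, which I would establish on a single high-probability event obtained by a union bound. Throughout, write the ideal (double-sampling-free) population objective as
\begin{align*}
L(\phi) = \min_{(\rho,M)\in\Theta}\Big[ \mathbb{E}_\nu\big\| M\phi(s,a) - \gamma\mathbb{E}_{s'\sim P(s,a)}\phi(s',\pi_e)\big\|_2^2 + \mathbb{E}_\nu\big(\rho^\top\phi(s,a) - r(s,a)\big)^2\Big],
\end{align*}
and write $\widehat L(\phi)$ for the bracketed empirical objective in \eqref{eq:phihat-def}. The first step is a reduction: using the construction behind \cref{prop:equivalent_lbc} (take $w_2 = \rho + M^\top w_1$ for the minimizing $(\rho,M)$), for any $w_1\in B_W$ one gets $\|w_2^\top\phi - \mathcal{T}^{\pi_e}(w_1^\top\phi)\|_\nu \le \|\rho^\top\phi - r\|_\nu + W\,(\mathbb{E}_\nu\|M\phi-\gamma\mathbb{E}_{s'}\phi(s',\pi_e)\|_2^2)^{1/2}$, so that $\widehat\epsilon \le (1+W)\sqrt{L(\widehat\phi)}$. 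Hence Part~1 reduces to an excess-risk bound of the form $L(\widehat\phi)\lesssim d^2(1+W)^2\log(\cdots)/N$, whose square root times $(1+W)$ is exactly the claimed $\widehat\epsilon$ with its linear dependence on $d$.

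The second step is the structural identity that makes $\widehat L$ estimate $L$. For fixed $\phi,M$, expanding the square and inserting $\gamma\mathbb{E}_{s'}\phi(s',\pi_e)$ gives $\mathbb{E}_\nu\mathbb{E}_{s'}\|M\phi-\gamma\phi(s',\pi_e)\|_2^2 = \mathbb{E}_\nu\|M\phi - \gamma\mathbb{E}_{s'}\phi(s',\pi_e)\|_2^2 + V(\phi)$, where $V(\phi)=\gamma^2\mathbb{E}_\nu\mathbb{E}_{s'}\|\phi(s',\pi_e)-\mathbb{E}_{s'}\phi(s',\pi_e)\|_2^2$ is the conditional variance. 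Under \cref{asm:g-realizability} the minimizer of $g\mapsto\mathbb{E}_\nu\mathbb{E}_{s'}\|g-\gamma\phi(s',\pi_e)\|_2^2$ is $g^\star(s,a)=\gamma\mathbb{E}_{s'}\phi(s',\pi_e)$, attaining exactly $V(\phi)$; subtracting the $g$-term therefore cancels $V(\phi)$, so the population version of \eqref{eq:phihat-def} equals $L(\phi)$. By \cref{assum:phi_condition} and \cref{prop:equivalent_lbc}, the problem is realizable: $L(\phi^\star)=0$ with witnesses $(\rho^\star,M^\star)\in\Theta$. I would exploit this realizability to obtain a fast $1/N$ rate on the objective rather than $1/\sqrt N$.

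The main work, and the main obstacle, is the uniform concentration. On the spectral event below, $\phi^\star$ is feasible and optimality of $\widehat\phi$ gives $\widehat L(\widehat\phi)\le\widehat L(\phi^\star)$. Telescoping $L(\widehat\phi)$ against $\widehat L(\widehat\phi)$, and $\widehat L(\phi^\star)$ against $L(\phi^\star)=0$, leaves four empirical-process terms: two for the quadratic ``$A$''-loss $\|M\phi-\gamma\phi(s',\pi_e)\|_2^2+(\rho^\top\phi-r)^2$, evaluated at the data-dependent $(\widehat\phi,\widehat\rho,\widehat M)$ and at the fixed $(\phi^\star,\rho^\star,M^\star)$, and two for the $g$-regression $\min_{g}\mathbb{E}_{\mathcal D}\|g-\gamma\phi(s',\pi_e)\|_2^2$. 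For the $A$-terms I would union bound over $\Phi$ together with a $1/\mathrm{poly}(N)$-net of $\Theta$ (the spectral-norm ball for the $d\times d$ matrix $M$, plus $B_{\|\rho^\star\|}$ for $\rho$), using $\|M\|_2\le 1$, $\|\phi\|_2\le 1$, $|r|\le1$, $|\rho^\top\phi|\le W$ to control ranges and Lipschitz constants; realizability then lets me use a one-sided Bernstein/localization bound whose centered-loss variance is controlled by the excess risk itself, giving the fast rate. The matrix net contributes the $d^2$ inside the excess-risk bound (hence the linear $d$ and $\log(4W|\Phi|N/\delta)$ after the square root), while the $\mathcal G$-regression error contributes the second additive term $7\gamma(1+W)\log^{1/2}(2|\mathcal G|/\delta)/\sqrt N$. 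The delicate points are preserving the fast rate through the $\min$ over $\Theta$ and over $\mathcal G$ (handled by $|\min f-\min g|\le\sup|f-g|$ plus realizability of both inner problems) and ensuring $d$, not $d^2$, survives in the final $\widehat\epsilon$.

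For Part~2 and the feasibility used above, I would prove the uniform spectral bound $\sup_{\phi\in\Phi}\|\widehat\Sigma(\phi)-\Sigma(\phi)\|_2\le \beta/4$ on $\{N\ge C_2^2\}$: for each $\phi$ and unit $x$, $x^\top(\widehat\Sigma(\phi)-\Sigma(\phi))x=\frac1N\sum_i((x^\top\phi_i)^2-\mathbb{E}(x^\top\phi)^2)$ is an average of $[0,1]$-bounded terms, so Hoeffding plus a $\tfrac14$-net of the unit sphere ($e^{O(d)}$ points, giving the $\sqrt d$ in $C_2$), a union bound over $\Phi$ (giving $\sqrt{\log|\Phi|}$), and the failure probability $\delta$ yield the claim. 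This gives $\lambda_{\min}(\Sigma(\phi^\star))\ge\beta \Rightarrow R_E(\phi^\star)\ge 3\beta/4\ge\beta/2$ (feasibility of $\phi^\star$), and from the algorithm's constraint $R_E(\widehat\phi)\ge\beta/2$ it gives $\lambda_{\min}(\Sigma(\widehat\phi))\ge \beta/2-\beta/4=\beta/4$. A final union bound over the $A$-loss event, the $\mathcal G$-event, and this spectral event delivers both conclusions with probability $1-\delta$.
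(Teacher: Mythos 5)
Your skeleton matches the paper's proof at a high level: a uniform spectral concentration step gives both feasibility of $\phi^\star$ and Part~2 (your Hoeffding-plus-sphere-net argument over finite $\Phi$ is a legitimate, even more elementary, substitute for the paper's Rademacher/chaining bound), the construction $w_2=\rho+M^\top w_1$ reduces approximate linear BC to a bound on $L(\wh\phi)$, the conditional-variance identity justifies the $\mG$-subtraction, and realizability of $(\rho^\star,M^\star)$ drives a fast-rate ERM analysis. The genuine gap is in how you organize the concentration for the stochastic case: you split the analysis into \emph{four separate} empirical-process terms, ``two for the $A$-loss \dots and two for the $g$-regression.'' This separation destroys exactly the cancellation that makes the $1/N$ rate on the loss possible. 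The population $A$-loss at any $(\phi,\rho,M)$ equals the double-sampling-free loss \emph{plus} the conditional variance $V(\phi)=\gamma^2\EE_\nu\EE_{s'}\|\phi(s',\pi_e)-\EE_{s'}\phi(s',\pi_e)\|_2^2$, which is in general a constant of order $\gamma^2$, and the empirical $g$-minimum is an estimate of this same $V(\phi)$. If you concentrate these two quantities separately, the $V(\wh\phi)$ contributions do not cancel: a multiplicative Bernstein bound on the $A$-loss (with constant factor $1/2$, say) leaves behind a term of order $V(\wh\phi)$, and shrinking the multiplicative constant trades this for a term of order $1/\sqrt N$ in the loss. Your proposed patch $|\min_g \wh F(g)-\min_g F(g)|\le\sup_g|\wh F(g)-F(g)|$ cannot rescue this: even for a single fixed $g$ with $F(g)=V>0$, the deviation $|\wh F(g)-F(g)|$ is of order $1/\sqrt N$ (the per-sample loss has nonvanishing variance), so no uniform law over $\mG$ estimates $V$ itself at a fast rate. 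Either way you get $L(\wh\phi)=O(1/\sqrt N)$ at best, hence $\wh\eps\approx(1+W)\sqrt{L(\wh\phi)}=O(N^{-1/4})$, not the claimed $N^{-1/2}$ bound.

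The paper's proof never isolates $V(\phi)$. Its key concentration lemma applies Bernstein to the \emph{difference} $X_i=\|M\phi(s_i,a_i)-\gamma\phi(s_i',\pi_e)\|_2^2-\|g_\phi(s_i,a_i)-\gamma\phi(s_i',\pi_e)\|_2^2$, where $g_\phi(s,a)=\gamma\EE_{s'\sim P(s,a)}\phi(s',\pi_e)$ is the closed-form conditional mean (indexed by $\phi$, never ranged over $\mG$). Writing $X_i=\langle a_i+b_i,\,a_i-b_i\rangle$ with $a_i-b_i=(M\phi-g_\phi)(s_i,a_i)$, the $s'$-noise cancels, so $\EE[X_i]$ is exactly the double-sampling-free loss and $\EE[X_i^2]\lesssim\EE[X_i]$; this gives the multiplicative-plus-$O(\log(\cdot)/N)$ bound uniformly over $\Phi$ and the matrix net, with no $V$ term anywhere. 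The class $\mG$ then enters only through one-sided inequalities for the empirical minima: on the $\wh\phi$ side $\min_g$ is upper-bounded by plugging in $g_{\wh\phi}$ (the favorable direction), and on the $\phi^\star$ side a fast-rate excess-loss bound over $\mG$ is needed only at the \emph{fixed} $\phi^\star$ --- which is also why the final bound carries $\log|\mG|$ with no $|\Phi|$ factor, whereas your sup-over-$\mG$ at the data-dependent $\wh\phi$ would force $\log(|\mG||\Phi|)$. To repair your proof you must fuse your ``$A$-loss'' and ``$g$'' terms into this single difference before invoking Bernstein; as written, your argument is sound only in the deterministic case, where the $g$-term is absent and $V\equiv 0$. (A minor further omission: your reduction $\wh\eps\le(1+W)\sqrt{L(\wh\phi)}$ requires $w_2=\rho+M^\top w_1\in B_W$, which is where the constraints $\|\rho\|_2\le\|\rho^\star\|_2$, $\|M\|_2\le\|M^\star\|_2$ defining $\Theta$ and the condition $W\ge\|\rho^\star\|_2/(1-\|M^\star\|_2)$ are actually used.)
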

\begin{proof}[Proof Sketch]
First, we use Weyl's Perturbation Theorem and chaining to show that the eigenvalues of $\Sigma(\phi)$ are close to $\wh\Sigma(\phi)$, uniformly over $\phi$. This implies that (a) $\lambda_{min}(\wh\Sigma(\phi\opt)) \geq \beta/2$, and hence is feasible in the empirical bilevel optimization \cref{eq:phihat-def}, and (b) $\lambda_{min}(\Sigma(\wh\phi)) \geq \beta/4$.
Since $\phi\opt$ is feasible, we apply uniform concentration arguments to argue that $\wh\phi$ has low population loss (\cref{eq:bilevel_opt_stoch}), which implies approximate Linear BC.
\end{proof}
The error term in $\wh\eps$ is comprised of the statistical errors of fitting $\Phi$ and of fitting $\mG$ for the double sampling correction.
In the contextual bandit setting, i.e. $\gamma=0$, there is no transition, so the second term  becomes 0.
Chaining together with Theorem~\ref{thm:fqe} gives the following end-to-end $\wt \mO(N^{-1/2})$ evaluation error guarantee for LSPE with the learned features:
\begin{restatable}{theorem}{endtoend}\label{thm:end-to-end}
Under \cref{assum:phi_condition} (and \cref{asm:g-realizability} if $P(s,a)$ is stochastic).
Let $C_2, \wh\eps$ be as defined in \cref{thm:learning-phi}.
If $N \geq C_2^2$, then we have for any $\delta \in (0, 1/2)$, w.p. at least $1-2\delta$, for all distributions $p_0$, 
\begin{align*}
&\abs{V_{p_0}^{\pi_e}- \Eb[s\sim p_0]{\wh f_K(s,\pi_e)}}
\leq \frac{\gamma^{K/2}}{1-\gamma} +\frac{4\sqrt{\nm{\frac{\diff d_{p_0}^{\pi_e}}{\diff \nu}}_\infty}}{(1-\gamma)^2} \wh\eps
\\&+ \frac{960\beta^{-1/2}(1+W)d\log(N) \sqrt{\log(10/\delta)}}{(1-\gamma)^2 \sqrt{N}}.
\end{align*}
\end{restatable}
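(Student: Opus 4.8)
The plan is to chain \cref{thm:learning-phi} and \cref{thm:fqe} via the sample-splitting built into \cref{alg:feature_learning}. The crucial structural observation is that \cref{alg:feature_learning} partitions the $2N$-sample dataset into two independent halves $\mD_1$ and $\mD_2$ of size $N$: the representation $\wh\phi$ is fit only on $\mD_1$, whereas LSPE is run only on $\mD_2$. Consequently, conditioned on $\mD_1$, the learned feature $\wh\phi$ is a fixed (deterministic) map while $\mD_2$ remains a fresh i.i.d.\ sample from the same generating process. This independence is exactly what is needed to invoke \cref{thm:fqe}, whose statement presumes the feature is not a function of the LSPE data.

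First I would apply \cref{thm:learning-phi} to the fit on $\mD_1$. Under \cref{assum:phi_condition} (and \cref{asm:g-realizability} in the stochastic case), with $N \geq C_2^2$ this yields an event $\mathcal{E}_1$ of probability at least $1-\delta$ on which simultaneously (i) $\wh\phi$ is $\wh\eps$-approximately Linear BC, and (ii) $\lambda_{\min}(\Sigma(\wh\phi)) \geq \beta/4$. On $\mathcal{E}_1$ the feature $\wh\phi$ satisfies the hypothesis of \cref{thm:fqe} with $\eps_\nu = \wh\eps$. I would then apply \cref{thm:fqe} conditionally on $\mD_1$, treating $\wh\phi$ as the fixed feature and $\mD_2$ as the data, obtaining an event $\mathcal{E}_2$ of conditional probability at least $1-\delta$ on which the LSPE error bound holds for all $p_0$ simultaneously (the high-probability guarantee in \cref{thm:fqe} is uniform over $p_0$).

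The remaining step is to convert the relative-condition-number factor $\sqrt{\kappa(p_0)}$ in \cref{thm:fqe} into the explicit $\beta^{-1/2}$ factor in the target statement. Using the definition \cref{eq:relative_cond}, the bounded-feature assumption $\nm{\phi(s,a)}_2 \leq 1$ gives $x\tr \EE_{s,a\sim d_{p_0}^{\pi_e}}[\phi(s,a)\phi(s,a)\tr] x = \EE_{d_{p_0}^{\pi_e}}(x\tr\phi)^2 \leq \nm{x}_2^2$, while coverage (ii) lower-bounds the denominator by $(\beta/4)\nm{x}_2^2$. Hence $\kappa(p_0) \leq 4/\beta$ for every $p_0$, so $480\sqrt{\kappa(p_0)} \leq 960\beta^{-1/2}$, producing the last term exactly. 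Substituting $\eps_\nu = \wh\eps$ into the middle term and leaving the horizon term $\gamma^{K/2}/(1-\gamma)$ unchanged recovers the claimed bound. A union bound over $\mathcal{E}_1$ and (the integrated) $\mathcal{E}_2$ gives overall failure probability at most $2\delta$.

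I expect the only genuinely delicate point to be the independence/conditioning argument: \cref{thm:fqe} is stated for a deterministic feature, whereas $\wh\phi$ is data-dependent, so one must condition on $\mD_1$ and exploit the freshness of $\mD_2$ before applying the fixed-feature bound. Everything else — the $\kappa(p_0) \leq 4/\beta$ estimate and the union bound — is routine, the only minor bookkeeping being to align the admissible ranges of $\delta$ across the two invoked theorems.
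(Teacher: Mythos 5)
Your proposal is correct and follows essentially the same route as the paper's proof: apply \cref{thm:learning-phi} to the first half of the split to get $\wh\eps$-approximate Linear BC and coverage $\lambda_{\min}(\Sigma(\wh\phi)) \geq \beta/4$, then run LSPE on the independent second half and invoke \cref{thm:fqe}, with the bound $\kappa(p_0) \leq 4/\beta$ turning $480\sqrt{\kappa(p_0)}$ into $960\beta^{-1/2}$. The conditioning-on-$\mD_1$ argument you flag as delicate is exactly the reason the paper uses the sample split, and your write-up makes that step (and the $\kappa$ computation) more explicit than the paper's own terse proof.
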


\paragraph{Comparison to FQE}  What if one ignores the representation learning and just runs the Fitted Q-Evaluation (FQE) which directly performs least square fitting with the nonlinear function class $\mathcal{F} := \{w^{\top} \phi(s,a) : \phi\in \Phi, \|w\|_2 \leq W\}$? 
As $N\to\infty$, FQE will suffer the following worst-case Bellman error (also called inherent Bellman error):
\begin{align*}
\varepsilon_{\mathrm{ibe}} := \max_{f \in\mathcal{F}} \min_{g \in \mathcal{F}} \left\| g - \Tcal^{\pi_e} f \right\|^2_{\nu}.
\end{align*}
Note that our assumption that there exists a linear BC representation $\phi^\star$ does not imply that the worst-case Bellman error $\varepsilon_{ibe}$ is small.  In contrast, when $N\to\infty$, our approach will accurately estimate $V_{p_0}^{\pi_e}$.

\begin{figure}[b]
    \centering
    \includegraphics[width=\linewidth]{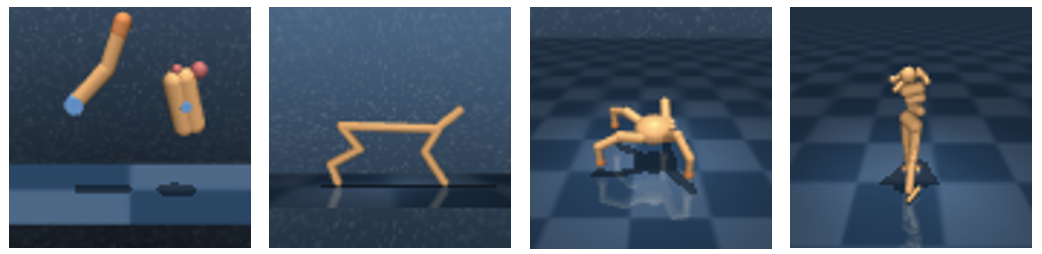}
    \vspace{-6.0mm}
    \caption{Representative frames from DeepMind Control Suite tasks. From left to right, \texttt{Finger Turn Hard}, \texttt{Cheetah Run}, \texttt{Quadruped Walk}, and \texttt{Humanoid Stand}.}
    \label{fig:envs}
    \vspace{-10pt}
\end{figure}

\begin{figure*}[ht]
    \centering
    \includegraphics[width=\textwidth]{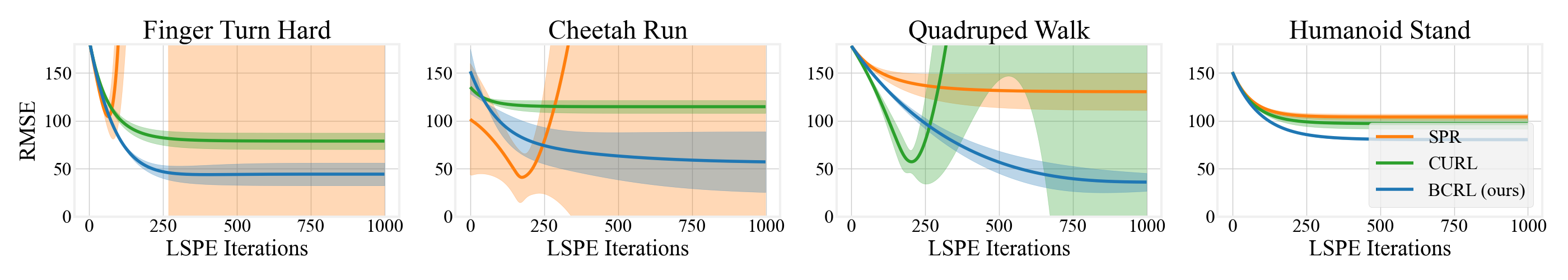}
    \vspace{-5mm}
    \caption{OPE curves across five seeds using representations trained with~\alg, SPR, and CURL on the offline datasets (\cref{tab:perf}).}
    \vspace{-10pt}
    \label{fig:eval}
\end{figure*}

\section{A Practical Implementation}
\begin{algorithm}[t]
	\caption{Practical Instantiation of~\alg}
	\label{alg:prac_feature_learning}
	\begin{algorithmic}[1]
	    \STATE \textbf{Input: } Offline dataset $\mD = \{s, a, s'\}$, target policy $\pi_e$
	    \STATE Initialize parameters for $\phi$, $M$, $\rewardVec$, and $\overline{\phi}$
	    \FOR {t = 0, 1, \ldots, T}
	        \STATE $M_{t+1} \leftarrow M_t - \eta\nabla_{M}J(\phi_t, M_t, \rewardVec_t, \overline{\phi}_t)$
	        \STATE $\rewardVec_{t+1} \leftarrow \rewardVec_t - \eta\nabla_{\rewardVec}J(\phi_t, M_t, \rewardVec_t, \overline{\phi}_t)$
	        \STATE $\phi_{t+1} \leftarrow \phi_t - \eta\nabla_{\phi}J(\phi_t, M_{t+1}, \rewardVec_{t+1}, \overline{\phi}_t)$
	        \STATE $\overline{\phi}_{t+1} \leftarrow \tau\phi_{t+1} + (1-\tau)\overline{\phi}_t$
	    \ENDFOR
	    \STATE Linear evaluation: $\wh V = \text{LSPE}(\pi_e, \phi_T, \mD)$.
	\end{algorithmic}
\end{algorithm}

In this section we instantiate a practical implementation to learn our representation using deep neural networks for our representation function class $\Phi$. Based on \cref{eq:bilevel_deter}, we first formalize our bilevel optimization objective:
\begin{align*}
J(\phi, M, \rewardVec, \overline{\phi}) &=  \mathbb{E}_{\mD} \left\|  \begin{bmatrix} M \\ \rewardVec^{\tr} \end{bmatrix}\phi(s,a) -  \begin{bmatrix} \gamma \overline{\phi}(s', \pi_e) \\ r(s,a) \end{bmatrix}\right\|^2_2 \\
& - \lambda \log\det\mathbb{E}_{\mD}\left[\phi(s,a)\phi(s,a)^\top\right].
\end{align*}
In our implementation, we replace the hard constraint presented in \cref{subsec:alg} with a Lagrange multiplier, i.e. we use the optimal design constraint as a regularization term when learning $\phi$. Specifically, we maximize the $\log\det$ of the covariance matrix induced by the feature, which maximizes all eigenvalues since $\log\det$ is the sum of the log eigenvalues. Our experiment results in \cref{sec:ablations} demonstrate that it indeed improves the condition number of $\Sigma(\phi)$. In some of our experiments, we use a target network in our implementation. Namely, the updates make use of a target network $\overline{\phi}$ where the weights can be an exponential moving average of the representation network's weights. This idea of using a slow moving target network has been shown to stabilize training in both the RL \cite{minh2013atari} and the representation learning literature \cite{grill2020byol}. 

As summarized in \cref{alg:prac_feature_learning}, given our offline behavior dataset $\mD$ and target policy $\pi_e$, we iteratively update $M$ and $\phi$ and then use the resulting learned representation to perform OPE. Please see Appendix \ref{app:implementation} for implementation and hyperparameter details. As we will show in our experiments, our update procedures for $\phi$ significantly minimizes the Bellman Completeness loss and also improve the condition number of $\Sigma(\phi)$, which are the two key quantities to ensure good performance of LSPE with linear regression as shown in Theorem~\ref{thm:fqe}. \looseness=-1

\section{Experiments}
\label{sec:exp}

Our goal is to answer the following questions. (1) How do our representations perform on downstream LSPE compared to other popular unsupervised representation learning techniques? 
(2) How important are both the linear bellman completeness and optimal design components for learning representations? 
(3) How competitive is \alg~with other OPE methods, especially for evaluating beyond the initial state distribution?

Following the standards in evaluating representations in supervised learning, we used a \textbf{linear evaluation protocol}, i.e., linear regression in LSPE on top of a given representation.
This allows us to focus on evaluating the quality of the representation.
We compared our method to prior techniques on a range of challenging, image-based continuous control tasks from the DeepMind Control Suite benchmark \cite{tassa2018deepmind}: \texttt{Finger Turn Hard}, \texttt{Cheetah Run}, \texttt{Quadruped Walk}, and \texttt{Humanoid Stand}. Frames from the tasks are shown in \cref{fig:envs}. To investigate our learned representation, we benchmark our representation against two state-of-the-art self-supervised representation learning objectives adopted for RL: (1) CURL uses the InfoNCE objective to contrastively learn state-representations; and (2) SPR adopts the BYOL contrastive learning framework to learn representations with latent dynamics.
Note, we modified CURL for OPE by optimizing the contrastive loss between state-action pairs rather than just states. For SPR, we did not include the Q prediction head and used their state representation plus latent dynamics as the state-action representation for downstream linear evaluation. We used the same architecture for the respective representations across all evaluated algorithms. To compare against other OPE methods, we additionally compared against Fitted Q-Evaluation \citep{munos2008finite,kostrikov2020fqe} (FQE), weighted doubly robust policy evaluation \citep{jiang2016doubly,thomas2016doublyrobust} (DR), Dreamer-v2 \citep{hafner2020mastering} model based evaluation (MB), and DICE \citep{yang2020off}. We modified implementations from the benchmark library released by \citet{fu2021deepope} for FQE and DR and used the authors' released codebases for Dreamer-v2 and BestDICE.\footnote{{\scriptsize \url{https://github.com/google-research/dice_rl}}.}

Our target policies were trained using the authors' implementation of DRQ-v2 \cite{yarats2021mastering}, a state-of-the-art, off-policy actor critic algorithm for vision-based continuous control. With high-quality target policies, we collected 200 rollouts and did a linear evaluation protocol to predict the discounted return. For our offline behavior datasets, we collected 100K samples from a trained policy with mean performance roughly a quarter of that of the target policy (\cref{tab:perf}). All results are aggregated over five random seeds. See Appendix \ref{app:implementation} for details on hyperparameters, environments, and dataset composition.

\subsection{OPE via LSPE with Learned Representations}

\cref{fig:eval} compares the OPE performance of \alg~against SPR and CURL. Representations learned by \alg~outperform those learned by SPR and CURL. On some tasks, SPR and CURL both exhibited an exponential error amplification with respect to the number of iterations of LSPE, while \alg~did not suffer from any blowup.

\subsection{OPE Performance}
\cref{fig:ope_result} compares the OPE performance of \alg~against multiple benchmarks from the OPE literature. \alg~is competitive with FQE and evaluates better than other benchmarks across the tasks that we tested on.

We also evaluated how well estimated values from \alg~rank policies. Following \cite{fu2021deepope}, we use
 the spearman ranking correlation metric to compute the correlation between ordinal rankings according to the OPE estimates and the ground truth values. For ranking, we evaluated three additional target policies with mean performances roughly 75\%, 50\%, and 10\% of the target policy. \cref{fig:ope_result} presents the mean correlation of each evaluation algorithm across all tasks. \alg~is competitive with FQE and consistently better than other benchmarks at ranking policies. 

\subsection{Further Investigation of Different Settings}

\begin{figure}[b!]
    \centering
    \includegraphics[width=\linewidth]{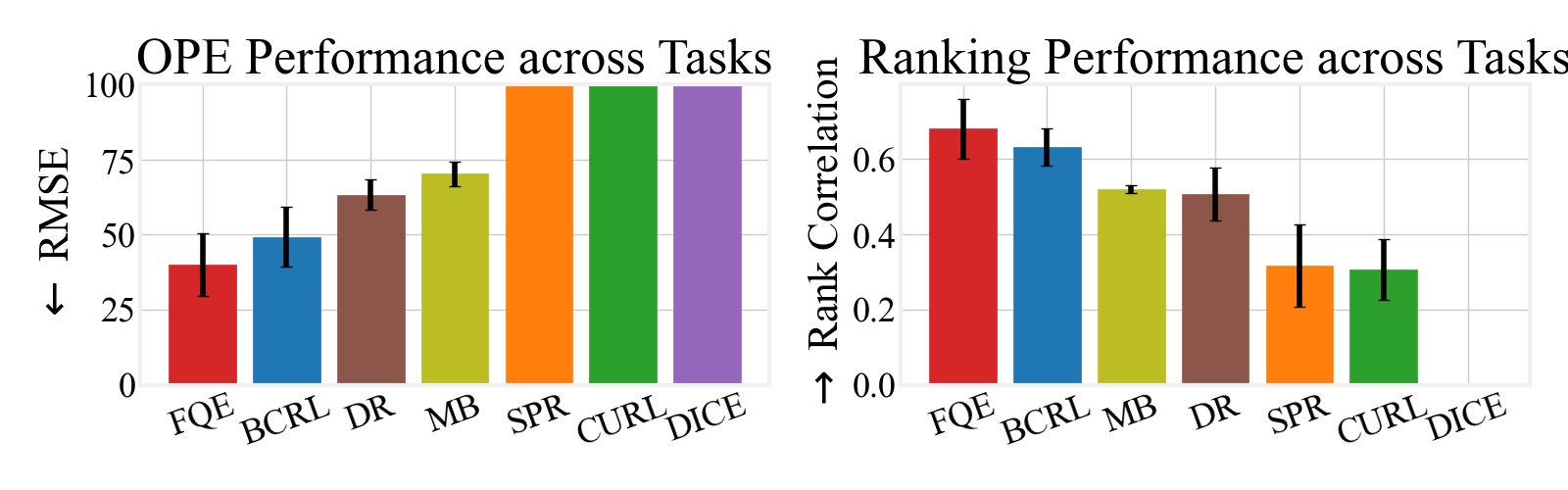}
    \caption{(Left) Root mean squared evaluation error across all tasks. (Right) Mean spearman ranking correlation across all tasks.}
    \label{fig:ope_result}
    \vspace{2\baselineskip}
    \centering
    \includegraphics[width=\linewidth]{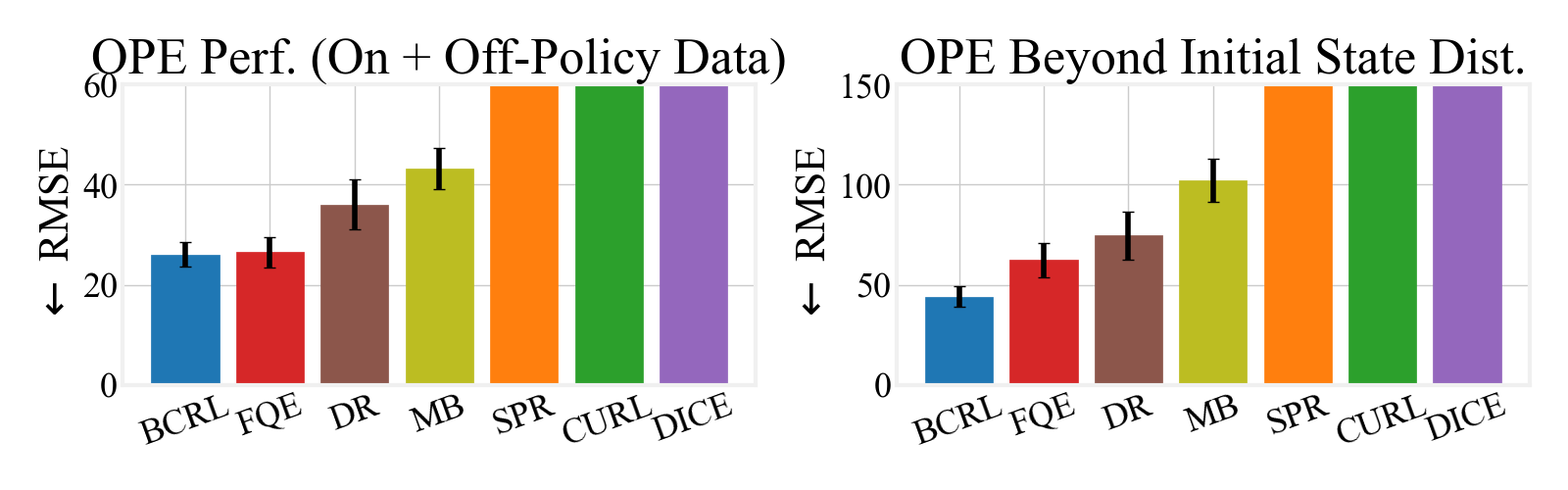}
    \caption{(Left) Evaluation on a mixture dataset with on-policy and off-policy data. Note the addition of target policy data bounds the relative condition number (Eq. \ref{eq:relative_cond}).
    (Right) Evaluation beyond the initial state distribution (given just the offline data).}
    \label{fig:ope_setting}
\end{figure}

In this section, we consider two additional settings: (1) the offline dataset contains some on-policy data, which ensures that the offline data provides coverage over the evaluation policy's state action distribution; (2) we evaluate all estimators beyond the original initial state distribution $d_0$. The first setting ensures that baselines and our algorithm all satisfy the coverage condition, thus demonstrating the unique benefit of learning a Bellman complete representation. The second setting evaluates the robustness of our algorithm, i.e., the ability to estimate beyond the original $d_0$.

\begin{figure*}[h]
    \centering
    \includegraphics[width=\textwidth]{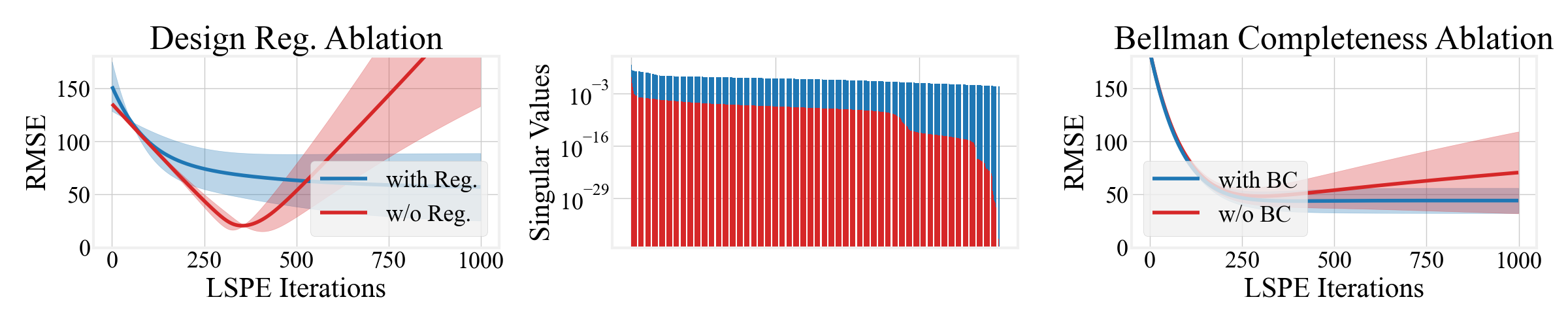}
    \vspace{-5mm}
    \caption{(Left) \alg's OPE curves for \texttt{Cheetah Run} with (\textcolor{blue}{blue}) and without (\textcolor{red}{red}) the $D$-optimal design based regularization. 
    (Center) Bar plot of the singular values of the feature covariance matrices for the left plot. 
    (Right) OPE curves for \texttt{Finger Turn Hard} with (\textcolor{blue}{blue}) and without (\textcolor{red}{red}) optimizing for linear BC. \looseness=-1}
    \label{fig:ablation}
\end{figure*}

\textbf{Evaluation on On-Policy + Off-Policy Data: }
To further investigate the importance of learning linear BC features, we experiment with learning representations from an offline dataset that also contains state-action pairs from the target policy. More specifically, we train our representations on a dataset containing 100K behavior policy and 100K target policy samples. Note, only the experiment in this paragraph uses this mixture dataset. With the addition of on-policy data from the target policy, we can focus on just the role of linear BC for OPE performance because the density ratio $\frac{\diff d^{\pi_e}_{p_o}}{\diff\nu}$ and the relative condition number (Eq.~\ref{eq:relative_cond}) is at most $2$, i.e. we omit the design regularization and focused on minimizing the Bellman completeness loss.
\cref{fig:ope_setting} (Left) shows that \alg~outperforms baselines in this setting, even matching FQE performance across tasks. Our experiments corroborate our analysis that explicitly enforcing our learned representations to be linear BC improves OPE. Note that the fact that LSPE with SPR and CURL features still blows up under this mixture data means that the other representations' failures are not just due to coverage. 

\textbf{Evaluation Beyond Initial State Distribution: }
To further investigate the benefits of optimizing the $D$-optimal design to improve coverage, we investigate doing OPE \emph{beyond} the initial state distribution $d_0$, which is supported by \cref{thm:end-to-end}. Note that if our representation is exactly Bellman complete and also the corresponding feature covariance matrix is well-conditioned, we should be able to evaluate well on any states.  
Specifically, we experiment on evaluating at all timesteps in a target policy rollout, not just at the initial state distribution. \cref{fig:ope_setting} (Right) shows that \alg~is able to more robustly evaluate out-of-distribution than all other benchmarks.

\subsection{Ablation Studies}\label{sec:ablations}

\textbf{Impact of Optimal Design Regularization: }
To investigate the impact of maximizing the $D$-optimal design, we ablate the design regularization term from our objective and analyze the downstream evaluation performance and the respective feature covariance matrices on the offline dataset. \cref{fig:ablation} (Center) presents a bar plot of the singular values of the feature covariance matrix ($\Sigma(\phi) := \EE_{s,a\sim \nu} \phi(s,a) \phi(s,a)^{\top}$). \cref{fig:ablation} (Left) shows the downstream OPE performance for features trained with and without the design regularization on the \texttt{Cheetah Run} task. Note that without the regularization, we find that that the feature covariance matrix has much worse condition number, i.e. feature is less exploratory. As our analysis suggests, we also observe a deterioration in evaluation performance without the design regularization to explicitly learn exploratory features.

\textbf{Impact of Linear Bellman Completeness: }
 \cref{fig:ablation} (Right) presents an ablation study where we only optimize for the design term in our objective. We find that downstream OPE performance degrades without directly optimizing for linear BC, suggesting that a feature with good coverage alone is not enough to avoid error amplification. 
 \looseness=-1

\section{Conclusion}

We present~\alg~which leverages rich function approximation to learn Bellman complete and exploratory representations for stable and accurate offline policy evaluation. We provide a mathematical framework of representation learning in offline RL, which generalizes all existing representation learning frameworks from the RL theory literature.
We provide an end-to-end theoretical analysis of our approach for OPE and demonstrate that~\alg~can accurately estimate policy values with polynomial sample complexity. Notably, the complexity has no explicit dependence on the size of the state and action space, instead, it only depends on the statistical complexity of the representation hypothesis class. 
Experimentally, we extensively evaluate our approach on the DeepMind Control Suite, a set of image-based, continuous control, robotic tasks.
First, we show that under the linear evaluation protocol -- using linear regression on top of the representations inside the classic LSPE framework -- our approach outperforms prior RL representation techniques CURL and SPR which leverage contrastive representation learning techniques SimCLR and BYOL respectively.
We also show that~\alg~achieves competitive OPE performance with the state-of-the-art FQE, and noticeably improves upon it when evaluating beyond the initial state distribution. Finally, our ablations show that approximate Linear Bellman Completeness and coverage are crucial ingredients to the success of our algorithm.
Future work includes extending \alg~to offline policy optimization. %

\subsubsection*{Acknowledgements}
This material is based upon work supported by the National
Science Foundation under Grant No. 1846210 and by a Cornell University Fellowship. We thank Rahul Kidambi, Ban Kawas, and the anonymous reviewers for useful discussions and feedback.

\newpage
\bibliographystyle{icml2022}
\bibliography{main}

\newpage

\appendix
\onecolumn

\setlength{\parindent}{0pt}
\setlength{\parskip}{\baselineskip}

\begin{center}\LARGE
\textbf{Appendices}
\end{center}

\section{Metric Entropy and Entropy Integral}
We recall the standard notions of entropy integrals here, based on the following distance on $\Phi$,
\begin{align*}
    d_\Phi(\phi, \wt \phi) &\defeq \Eb[\mD]{\nm{ \phi(s,a) - \wt\phi(s,a) }_2} %
\end{align*}
Let $\mN(t, \Phi)$ denote the $t$-covering number under $d_\Phi$.
\begin{definition}\label{def:phi-g-ent-integral}
Define the entropy integral, which we assume to be finite as,
\begin{align*}
    &\kappa(\Phi) \defeq \int_0^{4} \log^{1/2} \mN(t, \Phi) \diff t
\end{align*}
\end{definition}
When $\Phi$ is finite, $\mN(t) \leq |\Phi|$, so $\kappa(\Phi) \leq \mO(\log^{1/2}(|\Phi|))$.

\section{Technical Lemmas}

\begin{lemma}\label{lm:modified-bernstein}
	Let $X_i$ be i.i.d. random variables s.t. $\abs{X_i} \leq c$ and $\Eb{X_i^2} \leq \nu$, then for any $\delta \in (0, 1)$, we have w.p. at least $1-\delta$,
	\begin{align*}
		\abs{\frac{1}{N} \sum_{i=1}^N X_i - \Eb{X}} \leq \inf_{a > 0} \frac{\nu}{2a} + \frac{(c+a)\log(2/\delta)}{N},
	\end{align*}
	and if $\nu \leq L \Eb{X}$ for some positive $L$, then in particular,
	\begin{align*}
		\abs{\frac{1}{N} \sum_{i=1}^N X_i - \Eb{X}} \leq \frac{1}{2} \Eb{X} + \frac{(c+L)\log(2/\delta)}{N}.
	\end{align*}
\end{lemma}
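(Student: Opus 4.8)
The plan is to derive this from the classical one-sided Bernstein inequality and then perform an elementary AM--GM rebalancing to obtain the free-parameter form with $\inf_{a>0}$. First I would note that since $|X_i|\le c$ and $\mathbb{E}[X_i^2]\le\nu$, the variance satisfies $\mathrm{Var}(X_i)\le\mathbb{E}[X_i^2]\le\nu$, so Bernstein's inequality applies with variance proxy $\nu$ and range $c$. Applied to one tail with failure probability $\delta/2$, this gives that with probability at least $1-\delta/2$,
\[
\frac{1}{N}\sum_{i=1}^N X_i - \mathbb{E}[X] \le \sqrt{\frac{2\nu\log(2/\delta)}{N}} + \frac{c\log(2/\delta)}{3N},
\]
and symmetrically for the opposite tail.

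The key rebalancing step is AM--GM applied to the square-root term. For every fixed $a>0$, setting $x=\nu/a$ and $y=2a\log(2/\delta)/N$ gives $xy=2\nu\log(2/\delta)/N$, so $\sqrt{2\nu\log(2/\delta)/N}=\sqrt{xy}\le\tfrac12(x+y)=\frac{\nu}{2a}+\frac{a\log(2/\delta)}{N}$. Absorbing the linear Bernstein term using $c/3\le c$ then upper-bounds the one-sided deviation by $\frac{\nu}{2a}+\frac{(a+c)\log(2/\delta)}{N}$. I would then take a union bound over the two tails to get the two-sided statement with probability at least $1-\delta$. The crucial observation is that the quantity $\bigl|\frac{1}{N}\sum X_i-\mathbb{E}[X]\bigr|$ is a single random variable not depending on $a$; on the favourable event it is simultaneously bounded by the $a$-dependent expression for every $a>0$, so I can legitimately pass to the infimum over $a$, yielding the first inequality.

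For the second inequality I would simply instantiate the first bound at the specific choice $a=L$, obtaining $\frac{\nu}{2L}+\frac{(c+L)\log(2/\delta)}{N}$, and then invoke the hypothesis $\nu\le L\,\mathbb{E}[X]$ to bound $\frac{\nu}{2L}\le\frac{\mathbb{E}[X]}{2}$, which is exactly the claimed form. The only point genuinely requiring care is the justification that licenses the infimum: taking $\inf_{a>0}$ of a high-probability bound is valid precisely because one fixed favourable event controls all choices of $a$ at once, and I would state this explicitly rather than silently. A secondary and harmless detail is pinning down the exact constants in whichever standard Bernstein tail I cite (the factors $2$ and $1/3$), but the argument is robust to these since the linear term is immediately coarsened via $c/3\le c$; thus no step presents a real obstacle beyond bookkeeping.
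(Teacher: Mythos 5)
Your proposal is correct and follows essentially the same route as the paper's proof: invoke Bernstein's inequality, split the square-root term via AM--GM with a free parameter $a>0$ (the paper uses the equivalent form $2xy \leq x^2/a + ay^2$), and instantiate $a=L$ together with $\nu \leq L\,\mathbb{E}[X]$ for the second claim. Your explicit justification for passing to the infimum (one favourable event controlling all $a$ simultaneously) is a point the paper leaves implicit, but it does not change the argument.
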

\begin{proof}
	First, by Bernstein's inequality \citep[Theorem 2.10]{boucheron2013concentration}, we have w.p. $1-\delta$,
	\begin{align*}
		\abs{\frac{1}{N} \sum_{i=1}^N X_i - \Eb{X}} 
		&\leq \sqrt{\frac{2\nu \log(2/\delta)}{N}} + \frac{c\log(2/\delta)}{N}
		\intertext{Using the fact that $2xy \leq x^2/a + ay^2$ for any $a > 0$, split the square root term, }
		&\leq \inf_{a > 0} \frac{\nu}{2a} + \frac{(c+a)\log(2/\delta)}{N}
	\end{align*}
	which yields the first part. 
	If $\nu \leq L \Eb{X}$, picking $a = L$ concludes the proof. 
\end{proof}

We now state several results of Orlicz norms (mostly from \citealp{pollard1990empirical}) for completeness.
For an increasing, convex, positive function $\Phi: \RR^+ \mapsto \RR^+$ , such that $\Phi(x) \in [0, 1)$, define the Orlicz norm as
\begin{align*}
    \nm{ Z }_\Phi := \inf \braces{ C > 0 \mid \Eb{\Phi(|Z|/C)} \leq 1 }.
\end{align*}
It is indeed a norm on the $\Phi$-Orlicz space of random variables $L^\Phi(\nu)$, since it can be interpreted as the Minkowski functional of the convex set $K = \braces{X: \Eb{\Phi(|X|)} \leq 1}$.

Let $x_1, x_2, \dots, x_N$ be i.i.d. datapoints drawn from some underlying distribution.
Let $\omega$ denote the randomness of the $N$ sampled datapoints, and let $\mF_\omega = \braces{ (f(x_i(\omega)))_{i=1}^N: f \in \mF } \subset \RR^N$ denote the (random) set of vectors from the data corresponding to $\omega$.
Let $\bm{\sigma}$ denote a vector of $N$ i.i.d. Rademacher random variables ($\pm 1$ equi-probably), independent of all else.
\begin{lemma}[Symmetrization]\label{lm:symmetrization}
For any increasing, convex, positive function $\Phi$, we have
\begin{align*}
    \Eb[\omega]{ \Phi\prns{ \sup_{\bm{f} \in \mF_\omega} \abs{ \sum_{i=1}^N \bm{f}_i - \EE_\omega \bm{f}_i }   } } \leq \Eb[\omega, \bm{\sigma}]{ \Phi\prns{ 2\sup_{\bm{f} \in \mF_\omega} \abs{\bm{\sigma} \cdot \bm{f}} } }.
\end{align*}
\end{lemma}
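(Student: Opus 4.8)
The plan is to use the classical ghost-sample symmetrization argument. First I would introduce an independent copy (ghost sample) $\omega'$ of the data, drawn from the same distribution and independent of $\omega$ and $\bm\sigma$, with associated vectors $\bm f' \in \mF_{\omega'}$ indexed by the same $f$ as $\bm f \in \mF_\omega$. Since each coordinate satisfies $\EE_\omega \bm f_i = \EE_{\omega'} \bm f_i'$, I can rewrite the centering as an inner ghost expectation: $\sum_i \bm f_i - \EE_\omega \bm f_i = \EE_{\omega'}[\sum_i (\bm f_i - \bm f_i')]$ for each fixed $f$. Pulling the supremum outside the ghost expectation and using that expectation does not increase the supremum gives $\sup_f \abs{\sum_i \bm f_i - \EE_\omega \bm f_i} \le \EE_{\omega'} \sup_f \abs{\sum_i (\bm f_i - \bm f_i')}$.

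Next I would apply Jensen's inequality. Because $\Phi$ is increasing and convex, $\Phi(\sup_f \abs{\sum_i \bm f_i - \EE_\omega \bm f_i}) \le \Phi(\EE_{\omega'} \sup_f \abs{\sum_i (\bm f_i - \bm f_i')}) \le \EE_{\omega'} \Phi(\sup_f \abs{\sum_i (\bm f_i - \bm f_i')})$, and taking $\EE_\omega$ then yields $\Eb[\omega]{\Phi(\sup_f \abs{\sum_i \bm f_i - \EE_\omega \bm f_i})} \le \EE_{\omega, \omega'} \Phi(\sup_f \abs{\sum_i (\bm f_i - \bm f_i')})$. At this point the centering has been replaced by a symmetric quantity, which is exactly what enables the introduction of Rademacher signs.

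The crucial step is inserting the signs $\bm\sigma$. For each $i$, the pair $(\bm f_i, \bm f_i')$, viewed jointly over all $f$, is exchangeable, so swapping the roles of $x_i$ and $x_i'$ negates $\bm f_i - \bm f_i'$ simultaneously for every $f$ without changing the joint law. Consequently, conditioning on any fixed sign pattern and relabeling datapoints accordingly, $(\bm f_i - \bm f_i')_{i,f}$ and $(\sigma_i(\bm f_i - \bm f_i'))_{i,f}$ are equal in distribution, so $\EE_{\omega, \omega'} \Phi(\sup_f \abs{\sum_i (\bm f_i - \bm f_i')}) = \EE_{\omega, \omega', \bm\sigma} \Phi(\sup_f \abs{\sum_i \sigma_i (\bm f_i - \bm f_i')})$. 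I expect this to be the main obstacle: it requires a careful distributional-invariance (exchangeability) argument rather than a one-line inequality, and one must be attentive that the sign flip acts on all $f$ at once so that the supremum is preserved.

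Finally, I would decouple the ghost sample. By the triangle inequality $\sup_f \abs{\sum_i \sigma_i(\bm f_i - \bm f_i')} \le A + B$, where $A = \sup_f \abs{\bm\sigma \cdot \bm f}$ and $B$ is its ghost analogue, and by convexity $\Phi(A+B) \le \tfrac12 \Phi(2A) + \tfrac12 \Phi(2B)$. Since $A$ (over $\omega, \bm\sigma$) and $B$ (over $\omega', \bm\sigma$) are identically distributed, both expectation terms coincide, leaving exactly $\Eb[\omega, \bm\sigma]{\Phi(2\sup_{\bm f \in \mF_\omega} \abs{\bm\sigma \cdot \bm f})}$, which is the claimed bound. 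The remaining manipulations after the symmetrization step are routine convexity bookkeeping.
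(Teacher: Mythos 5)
Your proof is correct. Note that the paper does not actually prove this lemma itself---its ``proof'' is a one-line citation to Theorem 2.2 of \citet{pollard1990empirical}---so what you have supplied is a self-contained reconstruction of the standard argument underlying that citation: introduce a ghost sample $\omega'$, rewrite the centering $\sum_i \bm{f}_i - \EE_\omega \bm{f}_i$ as an inner expectation over $\omega'$, pull out the supremum and apply Jensen (using that $\Phi$ is increasing and convex), exploit the coordinatewise exchangeability of $(x_i(\omega), x_i(\omega'))$ to insert Rademacher signs simultaneously for all $f$, and finish with the triangle inequality and $\Phi(A+B) \leq \tfrac12\Phi(2A) + \tfrac12\Phi(2B)$, which is where the factor $2$ in the statement comes from. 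Each step is sound, and your emphasis that the sign flip must act jointly on all $f$ (so that the supremum is preserved in distribution) is exactly the right point of care. The only issue you leave implicit is measurability of the suprema over a possibly uncountable class $\mF$, which is needed for the expectations $\Eb[\omega]{\cdot}$ and $\Eb[\omega,\omega']{\cdot}$ to make sense; Pollard's framework handles this via permissible classes (or one works with outer expectations or countable subclasses), and a fully rigorous write-up should say a word about it. Your version is marginally more general in spirit in that it never uses positivity of $\Phi$, only monotonicity and convexity, while the paper's statement assumes positivity because it is needed elsewhere (for the Orlicz-norm machinery), not for this inequality.
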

\begin{proof}
See Theorem 2.2 of \citep{pollard1990empirical}.
\end{proof}

\begin{lemma}[Contraction]\label{lm:contraction}
Let $\mF \subset \RR^N$, and suppose $\lambda: \RR^N \mapsto \RR^N$ s.t. each component $\lambda_i: \RR \mapsto \RR$ is $L$-Lipschitz.
Then, for any increasing, convex, positive function $\Phi$, we have
\begin{align*}
    \Eb[\bm{\sigma}]{\Phi\prns{ \sup_{\bm{f} \in \mF} \abs{ \bm{\sigma} \cdot \lambda(\bm{f}) } }} \leq \frac{3}{2} \Eb[\bm{\sigma}]{ \Phi\prns{ 2L \sup_{\bm{f} \in \mF} \abs{ \bm{\sigma} \cdot \bm{f} } } }
\end{align*}
\end{lemma}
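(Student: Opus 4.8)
The plan is to prove this as a \emph{contraction} (comparison) inequality for Rademacher averages composed with the convex increasing transform $\Phi$, in the spirit of the classical Ledoux--Talagrand principle and of the symmetrization lemma invoked just above. The engine is a reduction that processes the $N$ coordinates of $\bm{\sigma}$ one at a time: I would establish a single-coordinate comparison that replaces one $\lambda_j$ by linear scaling without decreasing the $\Phi$-expectation, and then iterate it across all coordinates. An anchoring/symmetrization device handles the outer absolute value and the fact that the statement does not assume $\lambda_j(0)=0$; this device is what manufactures the constants $3/2$ and $2L$.

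First I would set up the inductive step by conditioning on all signs except one. Fixing a coordinate $j$ and freezing $(\sigma_i)_{i\neq j}$, the quantity $a(\bm{f})\defeq\sum_{i\neq j}\sigma_i\lambda_i(\bm{f}_i)$ becomes a fixed function of $\bm{f}$ and only $\sigma_j\lambda_j(\bm{f}_j)$ remains random, so averaging over $\sigma_j$ gives
\[
\Eb[\sigma_j]{\Phi\prns{\sup_{\bm{f}}\abs{a(\bm{f})+\sigma_j\lambda_j(\bm{f}_j)}}}=\frac{1}{2}\Phi\prns{\sup_{\bm{f}}\abs{a(\bm{f})+\lambda_j(\bm{f}_j)}}+\frac{1}{2}\Phi\prns{\sup_{\bm{f}}\abs{a(\bm{f})-\lambda_j(\bm{f}_j)}}.
\]
The heart of the proof is to dominate this convex combination by the analogous expression in which $\lambda_j(\bm{f}_j)$ is replaced by the linear function $L\,\bm{f}_j$. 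Letting $\bm{f}^+$ and $\bm{f}^-$ be (near-)maximizers of the two inner suprema, I would invoke the Lipschitz bound $\abs{\lambda_j(\bm{f}^+_j)-\lambda_j(\bm{f}^-_j)}\le L\,\abs{\bm{f}^+_j-\bm{f}^-_j}$ and split into cases on the signs and ordering of the two maximizing coordinates; convexity and monotonicity of $\Phi$ then let me bound the frozen-coordinate combination by its linearized counterpart. This step is purely deterministic once the two maximizers are fixed.

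I would then iterate: applying the single-coordinate comparison for $j=1,\dots,N$ successively linearizes every $\lambda_j$, yielding a bound in terms of $\sup_{\bm{f}}\abs{\bm{\sigma}\cdot\bm{f}}$. The factor $2L$ collects the Lipschitz rescaling $L$ together with one factor of $2$ incurred in symmetrizing the outer $\abs{\cdot}$ (using $\abs{x}=\max(x,-x)$ and positivity of $\Phi$), while the $3/2$ is the price of the anchoring that removes the $\lambda_j(0)=0$ normalization: anchoring at a reference $\bm{f}^0\in\mF$ writes $\lambda_i(\cdot)-\lambda_i(\bm{f}^0_i)$, which is $L$-Lipschitz and vanishes at the anchor, at the cost of a lower-order residual term controlled through subadditivity of $\abs{\cdot}$ and convexity of $\Phi$. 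I would also make the near-maximizer argument rigorous by passing to $\varepsilon$-maximizers and using continuity/monotone limits, which is routine since $\Phi$ is continuous and increasing.

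The main obstacle is precisely the interplay of the outer absolute value with the single-coordinate contraction. The clean Ledoux--Talagrand step is stated for $\sup_{\bm{f}}\sum_i\sigma_i\psi_i(\bm{f}_i)$ \emph{without} the absolute value and \emph{with} $\psi_i(0)=0$; reinstating $\abs{\cdot}$ and permitting $\lambda_j(0)\neq0$ breaks the $\pm$ symmetry that the sign casework exploits. The delicate part is therefore designing the anchoring and symmetrization so that this symmetry is restored while tracking the accumulated multiplicative constants carefully enough that they close at exactly $3/2$ and $2L$, rather than at a looser bound that a naive $\max$-splitting of the absolute value would give.
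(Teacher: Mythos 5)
Your proposal takes a genuinely different route from the paper --- the paper's entire proof is a one-line reduction: rescale to $\lambda_i/L$, which are contractions, and invoke Theorem 5.7 of Pollard (1990), inheriting the constants $3/2$ and $2$ from that result --- but your from-scratch argument has a fatal gap in each of its two load-bearing steps. First, the ``anchoring'' device that is supposed to dispense with the normalization $\lambda_i(0)=0$ cannot exist, because the inequality is simply false without that hypothesis: take $N=1$, $\mF=\{0\}$, and $\lambda_1 \equiv c$ a constant function (which is $L$-Lipschitz for every $L \geq 0$). Then the left side equals $\Phi(\abs{c})$ while the right side equals $\tfrac{3}{2}\Phi(0)$, and this fails for large $c$ (e.g.\ for the paper's $\Psi(x)=\tfrac15 e^{x^2}$ and any $c \geq 1$). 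Your anchored residual $\sum_i \sigma_i \lambda_i(\bm{f}^0_i)$ is exactly this uncontrollable term: it bears no relation to $\sup_{\bm{f}\in\mF}\abs{\bm{\sigma}\cdot\bm{f}}$, so no convexity or subadditivity argument can absorb it. The lemma is only true --- and is only ever applied by the paper (in \cref{lemma:concentrate}, where $\lambda_i(0)=0$ is checked explicitly) --- under the hypothesis $\lambda_i(0)=0$, which is part of what ``contraction'' means in Pollard's theorem; the statement in the paper omits it, but the correct move for a blind prover is to add that hypothesis, not to engineer around it.

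Second, even granting $\lambda_i(0)=0$, the per-coordinate comparison you describe --- with the outer absolute value kept inside the supremum --- is false, so no sign/ordering casework on near-maximizers can establish it. Concretely, take $N=2$, $\mF=\{(1,1),(-1,-1)\}$, $\lambda_1(x)=\abs{x}$, $\lambda_2(x)=x$, $L=1$, and freeze $\sigma_2=1$: then $\EE_{\sigma_1}\Phi\prns{\sup_{\bm{f}\in\mF}\abs{\sigma_2 \bm{f}_2+\sigma_1\abs{\bm{f}_1}}}=\Phi(2)$, whereas the linearized counterpart is $\EE_{\sigma_1}\Phi\prns{\sup_{\bm{f}\in\mF}\abs{\sigma_2 \bm{f}_2+\sigma_1 \bm{f}_1}}=\tfrac12\Phi(2)+\tfrac12\Phi(0)<\Phi(2)$. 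So each coordinate step must lose a constant factor, and iterating over all $N$ coordinates would blow the constant up exponentially in $N$ rather than ``closing at exactly $3/2$ and $2L$.'' This is precisely why the classical proofs (Ledoux--Talagrand's Theorem 4.12, and the Pollard theorem the paper cites) pay the factor $2$ \emph{once, up front}, by stripping the absolute value --- via $\sup_{\mF}\abs{h} \leq \sup_{\mF} h^+ + \sup_{\mF}(-h)^+$ together with $\Phi(a+b)\leq\tfrac12\Phi(2a)+\tfrac12\Phi(2b)$ --- and only then run the coordinate-by-coordinate contraction on \emph{one-sided} suprema, where the $\pm$ symmetry argument goes through with constant $1$ per coordinate. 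A correct blind proof would follow that order of operations (assume $\lambda_i(0)=0$, reduce to one-sided suprema, then contract), or simply cite Pollard as the paper does.
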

\begin{proof}
Apply Theorem 5.7 of \citep{pollard1990empirical} to the functions $\lambda_i/L$, which are contractions.
\end{proof}

We now focus on the specific Orlicz space of sub-Gaussian random variables, with the function $\Psi(x) = \frac{1}{5}\exp(x^2)$.
The $\Psi$-Orlicz norm of the maximum of random variables can be bounded by the maximum of the $\Psi$-Orlicz norms.
\begin{lemma}\label{lm:orlicz-maximum}
For any random variables $Z_1, ..., Z_m$, we have
\begin{align*}
    \|\max_{i \leq m} |Z_i| \|_\Psi \leq \sqrt{ 2+\log(m) } \max_{i \leq m} \|Z_i\|_\Psi
\end{align*}
\end{lemma}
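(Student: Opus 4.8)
The plan is to bound the Orlicz norm directly from its definition: it suffices to exhibit a constant $D$ with $\mathbb{E}[\Psi(\max_{i\le m}|Z_i|/D)] \le 1$, since this certifies $\|\max_{i\le m}|Z_i|\|_\Psi \le D$. I would take $D = \sqrt{2+\log m}\,C$, where $C := \max_{i\le m}\|Z_i\|_\Psi$, and verify the displayed inequality for this choice.

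First I would record what the hypothesis provides. Since $C \ge \|Z_i\|_\Psi$ and $\Psi(|Z_i|/c)$ is nonincreasing in $c$, we have $\mathbb{E}[\Psi(|Z_i|/C)] \le 1$ for every $i$ (using that the defining set in the infimum is closed, so the bound holds at $\|Z_i\|_\Psi$ and hence at any larger $C$), i.e. $\mathbb{E}[\exp(Z_i^2/C^2)] \le 5$. Because $\Psi$ is increasing, $\Psi(\max_i|Z_i|/C) = \max_i \Psi(|Z_i|/C)$, and bounding the maximum by the sum gives $\mathbb{E}[\exp((\max_i|Z_i|)^2/C^2)] \le 5m$.

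The key step is the rescaling from $C$ to $D$. Writing $s := C^2/D^2 = 1/(2+\log m) \in (0,1]$, I have $\exp((\max_i|Z_i|)^2/D^2) = (\exp((\max_i|Z_i|)^2/C^2))^s$, and since $x\mapsto x^s$ is concave, Jensen's inequality yields $\mathbb{E}[\exp((\max_i|Z_i|)^2/D^2)] \le (5m)^s$. It then remains to check $(5m)^{1/(2+\log m)} \le 5$; taking logarithms this reduces to $\log m\,(1-\log 5) \le \log 5$, which holds for all $m\ge 1$ precisely because $\log 5 > 1$ forces the left-hand side to be nonpositive. This gives $\mathbb{E}[\Psi(\max_i|Z_i|/D)] \le 1$, and hence $\|\max_i|Z_i|\|_\Psi \le \sqrt{2+\log m}\,C$, as claimed.

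The main obstacle — and the reason the normalizing constant $\tfrac15$ in $\Psi$ matters — is that Jensen's inequality must be applied to the maximum as a whole rather than term by term. Applying it to each $Z_i$ first and then summing would yield the bound $m\cdot 5^s$, which exceeds $5$ once $m \ge 5$ and is useless; summing inside the expectation \emph{before} invoking concavity instead produces $(5m)^s = 5^s m^s$, whose exponent $s = 1/(2+\log m)$ tames the $m^s$ factor. Recovering the $\log$-dependence rather than a lossy $\sqrt m$ is exactly what makes the inequality sharp (e.g. for i.i.d. Gaussians, where the max has $\Psi$-norm of order $\sqrt{\log m}$), and the specific normalization of $\Psi$ is what delivers the clean constant $\sqrt{2+\log m}$ with no extra universal factor.
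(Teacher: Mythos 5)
Your proof is correct, and it is in substance the same argument the paper relies on: the paper does not prove this lemma inline but cites Lemma 3.2 of Pollard (1990), whose proof is exactly your chain --- bound $\max_i \exp(Z_i^2/C^2)$ by the sum to get $\mathbb{E}\bigl[\exp\bigl((\max_i|Z_i|)^2/C^2\bigr)\bigr] \le 5m$, pass to the power $s = 1/(2+\log m)$ via Jensen's inequality for the concave map $x \mapsto x^s$, and verify $(5m)^s \le 5$ using $\log 5 > 1$. Your additional care about attainment of the infimum (so that $\mathbb{E}[\Psi(|Z_i|/\|Z_i\|_\Psi)] \le 1$, by monotone convergence) and the final numerical reduction are both sound, so your write-up simply supplies, self-contained, the proof the paper defers to the reference.
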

\begin{proof}
See Lemma 3.2 of \citep{pollard1990empirical}.
\end{proof}

\begin{lemma}\label{lm:orlicz-psi-subgaussian}
For each $\bm{f} \in \RR^N$, we have $\|\bm{\sigma}\cdot\bm{f}\|_\Psi \leq 2\|\bm{f}\|_2$.
\end{lemma}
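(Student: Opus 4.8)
The plan is to recognize this as the standard passage from a sub-Gaussian moment generating function to a bound on the $\Psi$-Orlicz norm with $\Psi(x)=\tfrac15\exp(x^2)$. Concretely, I would (i) show the Rademacher sum $\bm{\sigma}\cdot\bm{f}=\sum_{i=1}^N \sigma_i f_i$ has a sub-Gaussian MGF with variance proxy $\nm{\bm{f}}_2^2$, (ii) convert this linear-exponential bound into a bound on $\Eb{\exp(\lambda(\bm{\sigma}\cdot\bm{f})^2)}$, and (iii) verify the Orlicz constraint $\Eb{\Psi(|\bm{\sigma}\cdot\bm{f}|/C)}\le 1$ at $C=2\nm{\bm{f}}_2$, which by the defining infimum of $\nm{\cdot}_\Psi$ yields the claim.

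For step (i), since the $\sigma_i$ are independent and mean zero, for any $t\in\RR$ I would factorize $\Eb{\exp(t\,\bm{\sigma}\cdot\bm{f})}=\prod_{i=1}^N \cosh(t f_i)$ and apply the elementary inequality $\cosh(x)\le \exp(x^2/2)$ termwise, giving $\Eb{\exp(t\,\bm{\sigma}\cdot\bm{f})}\le \exp(t^2\nm{\bm{f}}_2^2/2)$. Thus $Z:=\bm{\sigma}\cdot\bm{f}$ is sub-Gaussian with variance proxy $\sigma^2=\nm{\bm{f}}_2^2$.

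For step (ii), the key device — and the only nonroutine part — is to linearize the square via a Gaussian integral. Using $\exp(a^2 s)=\Eb[g]{\exp(a\sqrt{2s}\,g)}$ for $g\sim N(0,1)$, I would write, for $\lambda<1/(2\sigma^2)$,
\[
\Eb[Z]{\exp(\lambda Z^2)}=\Eb[g]{\Eb[Z]{\exp(\sqrt{2\lambda}\,g Z)}}\le \Eb[g]{\exp(\lambda\sigma^2 g^2)}=\frac{1}{\sqrt{1-2\lambda\sigma^2}},
\]
where the interchange of expectations is justified by nonnegativity (Tonelli), the inner bound is step (i) with $t=\sqrt{2\lambda}\,g$, and the final equality is the Gaussian integral $\Eb[g]{\exp(u g^2)}=(1-2u)^{-1/2}$ valid for $u<1/2$.

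Finally, for step (iii), choosing $C=2\nm{\bm{f}}_2$ so that $\lambda=1/C^2=1/(4\sigma^2)$ gives $2\lambda\sigma^2=1/2$, whence $\Eb{\exp(Z^2/C^2)}\le \sqrt{2}$ and therefore $\Eb{\Psi(|Z|/C)}=\tfrac15\Eb{\exp(Z^2/C^2)}\le \sqrt{2}/5\le 1$. By the definition of the Orlicz norm as an infimum over feasible $C$, this certifies $\nm{\bm{\sigma}\cdot\bm{f}}_\Psi\le 2\nm{\bm{f}}_2$. I note that the normalizing constant $\tfrac15$ in $\Psi$ is precisely what makes the clean choice $C=2\nm{\bm{f}}_2$ feasible (with slack); the main obstacle is the decoupling in step (ii), which the Gaussian-integral identity resolves routinely.
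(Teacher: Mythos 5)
Your proof is correct, and it is worth noting that the paper itself gives no argument for this lemma at all --- it simply defers to Lemma 3.1 of Pollard (1990) --- so your write-up is a genuine self-contained substitute. Every step checks out: the factorization $\mathbb{E}\exp(t\,\bm{\sigma}\cdot\bm{f})=\prod_i\cosh(tf_i)\le\exp(t^2\|\bm{f}\|_2^2/2)$ is the standard Rademacher MGF bound; the decoupling identity $\exp(a^2s)=\mathbb{E}_g\exp(a\sqrt{2s}\,g)$ with Tonelli (valid by nonnegativity) correctly reduces the squared-exponential moment to the Gaussian integral $\mathbb{E}_g\exp(ug^2)=(1-2u)^{-1/2}$, $u<1/2$; and with $C=2\|\bm{f}\|_2$ you get $u=1/4$, hence $\mathbb{E}\exp(Z^2/C^2)\le\sqrt{2}$ and $\mathbb{E}\,\Psi(|Z|/C)\le\sqrt{2}/5<1$, which certifies $C$ is feasible in the defining infimum. (The degenerate case $\bm{f}=0$ is trivial since then $Z=0$ a.s.\ and both sides vanish.) For comparison, the other standard route --- essentially the one behind Pollard's lemma --- avoids the Gaussian trick: apply Hoeffding's inequality $\PP(|Z|>t)\le 2\exp(-t^2/(2\|\bm{f}\|_2^2))$ and integrate the tail, giving for $C=2\|\bm{f}\|_2$ that $\mathbb{E}\exp(Z^2/C^2)\le 1+\int_1^\infty 2u^{-2}\,\mathrm{d}u=3$, and $3/5\le 1$. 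Both arguments exploit the slack provided by the $\tfrac15$ normalization in $\Psi$; what your Gaussian-linearization version buys is that step (ii) proves the more general and reusable fact that \emph{any} sub-Gaussian variable with variance proxy $\sigma^2$ has $\Psi$-Orlicz norm at most $2\sigma$, with the Rademacher structure entering only through the MGF bound of step (i).
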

\begin{proof}
See Lemma 3.1 of \citep{pollard1990empirical}.
\end{proof}

The following is a truncated chaining result for Orlicz norms.
This result is not new, but many sources state and prove it in terms of covering and Rademacher complexity, rather than for Orlicz norms.
In particular, it generalizes Theorem 3.5 of \citep{pollard1990empirical} -- consider a sequence of $\alpha$'s converging to zero.
\begin{lemma}[Chaining]\label{lm:truncated-chaining}
Let $\mF \subset \RR^N$ such that $0 \in \mF$. Then,
\begin{align*}
    \nm{ \sup_{\bm{f} \in \mF} \abs{ \bm{\sigma}\cdot\bm{f} } }_{\Psi} \leq \inf_{\alpha \geq 0} \braces{ 3\alpha\sqrt{N} + 9 \int_\alpha^{b} \sqrt{ \log(D(\delta/2, \mF)) } \diff\delta },
\end{align*}
where $b = \sup_{\bm{f} \in \mF} \| \bm{f} \|_2$ and $D(\delta, \mF)$ is the Euclidean $\delta$-packing number for $\mF$.
\end{lemma}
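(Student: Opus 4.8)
The plan is to run a truncated Dudley-type chaining argument over dyadic scales, bounding the two pieces of the decomposition with the Orlicz inequalities already established (Lemma~\ref{lm:orlicz-maximum} and Lemma~\ref{lm:orlicz-psi-subgaussian}). Fix $\alpha \geq 0$. The case $\alpha \geq b$ is trivial, since then $\sup_{\bm f}\|\bm f\|_2 \le b \le \alpha$ forces $|\bm\sigma\cdot\bm f| \le \sqrt N\,b \le 3\alpha\sqrt N$ deterministically and the integral is empty; the case $\alpha = 0$ is recovered as the limit of what follows (infinitely many levels), giving Pollard's Theorem 3.5. Set $\delta_j = b\,2^{-j}$ and let $J = \max\{j \ge 0 : \delta_j \ge \alpha\}$, so that $\alpha \le \delta_J < 2\alpha$. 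For each $j$ I choose a maximal $\delta_j$-separated subset $\mathcal{F}_j \subseteq \mathcal{F}$; by maximality it is also a $\delta_j$-cover with $|\mathcal{F}_j| \le D(\delta_j,\mathcal{F})$, and I take $\mathcal{F}_0 = \{0\}$ (legitimate since $0\in\mathcal{F}$ and every $\|\bm f\|_2 \le b = \delta_0$). Let $\pi_j : \mathcal{F} \to \mathcal{F}_j$ be a nearest-point map, so $\|\bm f - \pi_j(\bm f)\|_2 \le \delta_j$ and $\pi_0 \equiv 0$.

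Next I write, for every $\bm f$, $\bm\sigma\cdot\bm f = \bm\sigma\cdot(\bm f - \pi_J(\bm f)) + \sum_{j=1}^J \bm\sigma\cdot(\pi_j(\bm f) - \pi_{j-1}(\bm f))$, and I bound the two contributions to $\|\sup_{\bm f}|\bm\sigma\cdot\bm f|\|_\Psi$ separately using subadditivity of the Orlicz norm. The residual is controlled deterministically: since $\|\bm\sigma\|_2 = \sqrt N$ for every realization of the signs, $\sup_{\bm f}|\bm\sigma\cdot(\bm f - \pi_J(\bm f))| \le \sqrt N\,\delta_J < 2\alpha\sqrt N$ is a constant, whose $\Psi$-Orlicz norm is at most itself (a constant $c$ has $\|c\|_\Psi = c/\sqrt{\log 5} \le c$), so this piece contributes at most $3\alpha\sqrt N$.

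For the chain I handle each level in turn. At level $j$ the increment $\pi_j(\bm f) - \pi_{j-1}(\bm f)$ ranges over at most $|\mathcal{F}_j|\,|\mathcal{F}_{j-1}| \le D(\delta_j,\mathcal{F})^2$ distinct vectors (using that the packing number is nonincreasing in its radius), each of Euclidean norm at most $\delta_j + \delta_{j-1} = 3\delta_j$. By Lemma~\ref{lm:orlicz-psi-subgaussian} each fixed increment $\bm v$ satisfies $\|\bm\sigma\cdot\bm v\|_\Psi \le 2\|\bm v\|_2 \le 6\delta_j$, and by the maximal inequality Lemma~\ref{lm:orlicz-maximum} the supremum over these $\le D(\delta_j)^2$ increments has $\Psi$-norm at most $\sqrt{2 + 2\log D(\delta_j,\mathcal{F})}\cdot 6\delta_j$, which on the only levels that contribute ($D(\delta_j) \ge 2$) is $\lesssim \delta_j\sqrt{\log D(\delta_j,\mathcal{F})}$. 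Summing over $j$ and using subadditivity, the chain contributes $\lesssim \sum_{j=1}^J \delta_j\sqrt{\log D(\delta_j,\mathcal{F})}$.

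Finally I convert this sum into the stated integral. Assigning to level $j$ the interval $[\delta_j,\delta_{j-1}]$ of length $\delta_{j-1}-\delta_j = \delta_j$, and noting that for $\delta$ in this interval $\delta/2 \le \delta_j$ and hence $D(\delta/2,\mathcal{F}) \ge D(\delta_j,\mathcal{F})$, gives $\delta_j\sqrt{\log D(\delta_j,\mathcal{F})} \le \int_{\delta_j}^{\delta_{j-1}} \sqrt{\log D(\delta/2,\mathcal{F})}\,d\delta$; since these intervals tile $[\delta_J,b]$ and $\delta_J \ge \alpha$, the sum is at most $\int_\alpha^b \sqrt{\log D(\delta/2,\mathcal{F})}\,d\delta$. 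Combining the residual and chain bounds and taking the infimum over $\alpha \ge 0$ yields the claim. I expect the main obstacle to be the bookkeeping that pins down the absolute constants $3$ and $9$ — in particular, choosing the truncation level $J$ as the \emph{last} scale above $\alpha$ (rather than the first below it), so that the $\delta/2$-shift in the packing step lands the integral on $[\alpha,b]$ rather than $[\alpha/2,b]$, and verifying that the Orlicz-maximum factor $\sqrt{2+2\log D}$ collapses into $\sqrt{\log D(\delta/2,\mathcal{F})}$ with a clean multiplicative constant.
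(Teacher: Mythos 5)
Your overall strategy is the same as the paper's: dyadic scales $\delta_j = b2^{-j}$ truncated at the last scale above $\alpha$, the Orlicz maximal inequality (\cref{lm:orlicz-maximum}) applied to sub-Gaussian increments (\cref{lm:orlicz-psi-subgaussian}) at each level, a deterministic Cauchy--Schwarz bound for the residual below the truncation scale (whose $\Psi$-norm picks up the $1/\sqrt{\log 5}$ factor, comfortably under $3\alpha\sqrt{N}$), and the same monotonicity-based sum-to-integral conversion with the $\delta/2$ shift. All of that matches, and your handling of the residual, the trivial case $\alpha \geq b$, and the interval assignment $[\delta_j,\delta_{j-1}]$ are correct.

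The one genuine gap is precisely where you predicted it: your non-nested chain, as written, does not deliver the constant $9$. Because you project each $\bm{f}$ independently at every scale, the level-$j$ increments $\pi_j(\bm{f})-\pi_{j-1}(\bm{f})$ must be counted as \emph{pairs}: up to $D(\delta_j,\mF)^2$ distinct vectors, each of norm up to $3\delta_j$. Lemmas \ref{lm:orlicz-psi-subgaussian} and \ref{lm:orlicz-maximum} then give a per-level $\Psi$-norm of $6\delta_j\sqrt{2+2\log D(\delta_j,\mF)}$, and at the worst case $D=2$ this is about $13.3\,\delta_j\sqrt{\log D(\delta_j,\mF)}$; since your interval-assignment step loses nothing, the integral coefficient comes out near $13.3$, not $9$. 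The repair is the device the paper uses implicitly via its nested sets $\mF_0\subset\mF_1\subset\cdots\subset\mF$ and maps $n_k$: \emph{re-chain through the nets}. Set $q_J(\bm{f})=\pi_J(\bm{f})$ and $q_{j-1}(\bm{f})=\pi_{j-1}(q_j(\bm{f}))$, so the level-$j$ increment $q_j-q_{j-1}$ is determined by $q_j\in\mF_j$ alone: count $\leq D(\delta_j,\mF)$ and norm $\leq \delta_{j-1}=2\delta_j$, giving a per-level bound $4\delta_j\sqrt{2+\log D(\delta_j,\mF)}\leq 7.9\,\delta_j\sqrt{\log D(\delta_j,\mF)}$ for $D\geq 2$. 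Combined with your interval assignment (which, assigning length $\delta_j$ per level, is actually a factor $2$ sharper than the paper's, which assigns length $\delta_j/2$ and lands at $8.8$), the chain sums to below $9\int_\alpha^b\sqrt{\log D(\delta/2,\mF)}\,\diff\delta$, and the telescope still closes since $q_0=0$ and $\bm{f}-q_J(\bm{f})=\bm{f}-\pi_J(\bm{f})$ has norm at most $\delta_J<2\alpha$. One further point you should make explicit: at coarse levels with $D(\delta_j,\mF)=1$, take the maximal packing to contain $0$ (possible since $0\in\mF$), so that $\mF_j=\{0\}$ and those increments vanish identically; otherwise such a level contributes $6\delta_j$ while $\sqrt{\log D(\delta_j,\mF)}=0$, and your ``$\lesssim \delta_j\sqrt{\log D}$'' step fails there. (The paper's proof is equally terse on this point, so you are in good company.)
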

\begin{proof}
Suppose $b$ and all the packing numbers are finite, otherwise the right hand side is infinite and there is nothing to show.
For an arbitrary $K > 1$, construct a sequence of $K$ finer and finer approximations to $\mF$,
\begin{align*}
    \{0\} = \mF_0 \subset \mF_1 \subset \dots \subset \mF_K \subset \mF_{K+1} = \mF
\end{align*}
where for any $k \in [K]$, $\mF_k$ satisfies the property that for any $\bm{f} \in \mF$, there exist $n_k(\bm{f}) \in \mF_k$ s.t. $\|n_k(\bm{f}) - \bm{f}\|_2 \leq b2^{-k}$. Indeed this can be done iteratively: for any $\mF_k$, we can construct $\mF_{k+1}$ by adding elements to construct a maximal $b2^{-k}$ packing (maximality ensures the distance requirement, since the existence of any vector which has larger distance can be added to the packing). By definition of $D(\cdot, \mF)$, we have $|\mF_k| \leq D(b2^{-k}, \mF)$.

For any $k \in [K]$, we have by triangle inequality,
\begin{align*}
    \sup_{\bm{f} \in \mF_{k+1}} |\bm{\sigma} \cdot \bm{f}|
    &\leq \sup_{\bm{f} \in \mF_{k+1}} |\bm{\sigma}\cdot n_k(\bm{f}))| + \sup_{\bm{f} \in \mF_{k+1}} |\bm{\sigma}\cdot (n_k(\bm{f}) - \bm{f})|
    \\&= \sup_{\bm{f} \in \mF_k} |\bm{\sigma}\cdot\bm{f}| + \sup_{\bm{f} \in \mF_{k+1}} |\bm{\sigma}\cdot (n_k(\bm{f}) - \bm{f})|
\end{align*}
If $k = K$, we can loosely bound the right-most term by Cauchy-Schwarz, since for any $\bm{f} \in \mF$, we have $|\bm{\sigma}\cdot (n_k(\bm{f}) - \bm{f})| \leq \sqrt{ N } \|n_k(\bm{f})-\bm{f}\|_2 \leq \sqrt{N} b 2^{-K}$. So, we have
\begin{align*}
    \nm{ \sup_{\bm{f} \in \mF} |\bm{\sigma} \cdot \bm{f}| }_\Psi \leq \nm{ \max_{\bm{f} \in \mF_K} |\bm{\sigma}\cdot\bm{f}| }_\Psi + \frac{\sqrt{N} b 2^{-K}}{\sqrt{\log 5}},
\end{align*}
since for any non-negative constant $c$, $\|c\|_\Psi = \inf\braces{C > 0: \frac{1}{5} \exp((c/C)^2) \leq 1} = \frac{c}{\sqrt{\log 5}}$.
If $k < K$, the suprema are taken over finite sets, so the maximum is attained.
Hence, we can apply a special property of the $\Psi$-Orlicz norm (\cref{lm:orlicz-maximum}), to get,
\begin{align*}
    \nm{ \max_{\bm{f} \in \mF_{k+1}} |\bm{\sigma} \cdot \bm{f}| }_\Psi
    &\leq \nm{ \max_{\bm{f} \in \mF_k} |\bm{\sigma}\cdot\bm{f} | }_\Psi + \nm{ \max_{\bm{f} \in \mF_{k+1}} \abs{ \bm{\sigma}\cdot (n_k(\bm{f}) - \bm{f})) } }_\Psi
    \\&\leq \nm{ \max_{\bm{f} \in \mF_k} |\bm{\sigma}\cdot\bm{f} | }_\Psi + \sqrt{2 + \log(|\mF_{k+1}|) } \max_{\bm{f} \in \mF_{k+1}} \nm{\bm{\sigma}\cdot (n_k(\bm{f}) - \bm{f}))}_\Psi
    \intertext{By \cref{lm:orlicz-psi-subgaussian}, }
    &\leq \nm{ \max_{\bm{f} \in \mF_k} |\bm{\sigma}\cdot\bm{f} | }_\Psi + 2\sqrt{2 + \log(|\mF_{k+1}|) } \max_{\bm{f} \in \mF_{k+1}} \nm{ n_k(\bm{f}) - \bm{f})}_2
    \\&\leq \nm{ \max_{\bm{f} \in \mF_k} |\bm{\sigma}\cdot\bm{f} | }_\Psi + 2\sqrt{2 + \log(|\mF_{k+1}|) } \cdot b 2^{-k}.
\end{align*}
Also, note that since $\mF_0 = \{0\}$ by construction, we have $\nm{ \max_{\bm{f} \in \mF_0} |\bm{\sigma}\cdot\bm{f} | }_\Psi = 0$.
Unrolling this, we have
\begin{align*}
    \|\sup_{\bm{f} \subset \mF} |\bm{\sigma}\cdot\bm{f}| \|_\Psi 
    &\leq \frac{\sqrt{N} b 2^{-K}}{\sqrt{\log 5}} + \sum_{k=0}^{K-1} 2^{-k} 2b\sqrt{2 + \log(D(b2^{-(k+1)}, \mF)) }
    \intertext{Since for any $D \geq 2$, we have $\sqrt{2 + \log(1+D)}/\sqrt{D} \leq 2.2$, }
    &\leq \frac{\sqrt{N} b 2^{-K}}{\sqrt{\log 5}} + 4.4b\sum_{k=0}^{K-1} 2^{-k} \sqrt{ \log(D(b2^{-(k+1)}, \mF)) }
    \\&= \frac{\sqrt{N} b 2^{-K}}{\sqrt{\log 5}} + 17.6b\sum_{k=0}^{K-1} (2^{-(k+1)}-2^{-(k+2)}) \sqrt{ \log(D(b2^{-(k+1)}, \mF)) }
    \intertext{Since $D(\cdot, \mF)$ is a monotone decreasing, }
    &\leq \frac{\sqrt{N} b 2^{-K}}{\sqrt{\log 5}} + 17.6\int_{b2^{-(K+1)}}^{b/2} \sqrt{ \log(D(\delta, \mF)) } \diff\delta
    \\&= \frac{\sqrt{N} b 2^{-K}}{\sqrt{\log 5}} + 8.8\int_{b2^{-K}}^{b} \sqrt{ \log(D(\delta/2, \mF)) } \diff\delta.
\end{align*}
Now consider any $\alpha > 0$. Pick $K$ such that $\frac{b}{2^{K+1}} \leq \alpha \leq \frac{b}{2^{K}}$, then we have
\begin{align*}
    \|\sup_{\bm{f} \subset \mF} |\bm{\sigma}\cdot\bm{f}| \|_\Psi \leq \frac{2\alpha \sqrt{N}}{\sqrt{\log 5}} + 8.8 \int_\alpha^{b} \sqrt{ \log(D(\delta/2, \mF)) } \diff\delta.
\end{align*}
Since $\alpha$ was arbitrary, the above bound holds when we take an infimum over $\alpha$.
\end{proof}

\section{Proofs for LSPE}

We first show a generalization of the performance difference lemma (PDL).
\begin{lemma}[Generalized PDL] \label{lm:general-pdl-sarsa}
For any policies $\pi, \pi'$, any function $f:\mathcal{S}\times\mathcal{A}\mapsto \mathbb{R}$, and any initial state distribution $\mu$, we have
\begin{align}
    V^\pi_\mu - \Eb[s \sim \mu]{f(s,\pi')} = \frac{1}{1-\gamma} \Eb[s,a \sim d^\pi_\mu]{ \mT^{\pi'} f (s,a) - f(s,\pi')}
\end{align}
\end{lemma}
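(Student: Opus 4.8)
The plan is to expand the right-hand side directly using the definition of the discounted occupancy $d^\pi_\mu$ and to expose a telescoping structure. First I would write $\frac{1}{1-\gamma}\Eb[s,a\sim d^\pi_\mu]{\cdot}$ as $\sum_{h=0}^\infty \gamma^h \Eb[s,a\sim d^\pi_h]{\cdot}$, using $d^\pi_\mu(s,a) = (1-\gamma)\sum_{h\geq 0} \gamma^h d^\pi_h(s,a)$, so the normalizing factor $(1-\gamma)$ cancels the leading $\frac{1}{1-\gamma}$. This turns the target into a sum over timesteps of $\mT^{\pi'} f(s,a) - f(s,\pi')$ evaluated along the $\pi$-induced trajectory started from $\mu$.

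Next I would substitute $\mT^{\pi'} f(s,a) = r(s,a) + \gamma \Eb[s'\sim P(s,a)]{f(s',\pi')}$ and split the sum into three pieces: the reward term, the bootstrapped next-step term, and the subtracted baseline $f(s,\pi')$ term. The reward piece $\sum_{h\geq 0} \gamma^h \Eb[s,a\sim d^\pi_h]{r(s,a)}$ is exactly $V^\pi_\mu$ by definition of the discounted return. It then remains to show that the two remaining pieces collapse to $-\Eb[s\sim\mu]{f(s,\pi')}$.

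The key step is the telescoping. Because the state marginal at time $h+1$ under $\pi$ is the pushforward of $d^\pi_h$ through the transition kernel $P$ (the action at each step being drawn from $\pi$, which governs the dynamics), I would rewrite $\gamma \sum_{h\geq 0} \gamma^h \Eb[s,a\sim d^\pi_h]{\Eb[s'\sim P(s,a)]{f(s',\pi')}}$ as $\sum_{h\geq 1} \gamma^h \Eb[s\sim d^\pi_h]{f(s,\pi')}$ after reindexing $h \mapsto h+1$. Subtracting the remaining baseline $\sum_{h\geq 0} \gamma^h \Eb[s\sim d^\pi_h]{f(s,\pi')}$ (noting $f(s,\pi')$ depends only on the state, so the action marginal of $d^\pi_h$ is irrelevant) leaves only the $h=0$ term with a negative sign, namely $-\Eb[s\sim d^\pi_0]{f(s,\pi')} = -\Eb[s\sim\mu]{f(s,\pi')}$, since $d^\pi_0 = \mu$. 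Combining with the reward piece yields the claimed identity.

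The main obstacle — indeed essentially the only subtlety — is correctly tracking the mismatch between the two policies: the occupancy $d^\pi_h$ is generated by $\pi$, whereas $f$ is always evaluated under $\pi'$ through $f(\cdot,\pi')$ and $\mT^{\pi'}$. One must verify that the telescoping between the bootstrapped term at level $h$ and the baseline term at level $h+1$ holds despite this mismatch; it does precisely because $\pi'$ enters only through the evaluation of $f$ on the already-sampled next state $s'$, not through the dynamics that produce $s'$. A brief convergence check — the sums converge absolutely since $\abs{r}\leq 1$ and $f$ is bounded whenever $\phi$ and $w$ are — justifies interchanging the infinite summation with the expectations and the reindexing.
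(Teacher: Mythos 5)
Your proposal is correct and is essentially the same argument as the paper's proof: both are the same telescoping computation, with yours run in the reverse direction (expanding $\frac{1}{1-\gamma}\mathbb{E}_{d^\pi_\mu}[\cdot]$ into per-timestep marginals $\sum_h \gamma^h \mathbb{E}_{d^\pi_h}[\cdot]$ and reindexing the bootstrap term) while the paper starts from the trajectory expansion of $V^\pi_\mu$ and inserts $\pm f(s_t,\pi')$ before regrouping. The key facts you identify --- that $\pi'$ enters only through evaluation of $f$ and not the dynamics, and that the next-state marginal is the pushforward of $d^\pi_h$ under $P$ --- are exactly what the paper's regrouping step relies on as well.
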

\begin{proof}
Let $T^\pi$ be the distribution of trajectories $\tau = (s_0, a_0, s_1, a_1, s_2, a_2, ...)$ from rolling out $\pi$. So, we have,
\begin{align*}
    V^\pi_\mu - \Eb[s \sim \mu]{f(s,\pi')}
    &= \Eb[\tau \sim T^\pi]{\sum_{t=0}^\infty \gamma^t r(s_t, a_t) } - \Eb[s \sim \mu]{f(s,\pi')}
    \\&= \Eb[\tau \sim T^\pi]{\sum_{t=0}^\infty \gamma^t \prns{ r(s_t, a_t) + f(s_t,\pi') - f(s_t,\pi') }  - f(s_0,\pi') } 
    \\&= \Eb[\tau \sim T^\pi]{\sum_{t=0}^\infty \gamma^t \prns{ r(s_t, a_t) + \gamma f(s_{t+1},\pi') - f(s_t,\pi') } } \\
    &= \frac{1}{1-\gamma} \Eb[s, a \sim d^\pi_\mu]{ r(s,a) + \gamma \Eb[s' \sim P(s,a)]{f(s', \pi')} - f(s,\pi') } \\
    &= \frac{1}{1-\gamma} \Eb[s,a \sim d^\pi_\mu]{ \mT^{\pi'} f (s,a) - f(s,\pi')}.
\end{align*}
\end{proof}
This generalizes the PDL, which we can get by setting $f(s,a) = Q^{\pi'}(s,a)$:
\begin{align*}
    V^\pi_\mu - V^{\pi'}_\mu 
    &= \frac{1}{1-\gamma} \Eb[s,a \sim d^\pi_\mu]{ \mT^{\pi'} Q^{\pi'}(s,a) - Q^{\pi'}(s,\pi') } 
    \\&= \frac{1}{1-\gamma} \Eb[s,a \sim d^\pi_\mu]{ Q^{\pi'}(s,a) - V^{\pi'}(s) } 
    \\&= \frac{1}{1-\gamma} \Eb[s,a \sim d^\pi_\mu]{ A^{\pi'}(s,a) }.
\end{align*}
To prove our LSPE guarantee, we'll instantiate $f(s,a)$ to be the estimated $\wh f(s,a)$ from LSPE, and set $\pi = \pi'$. This gives us an expression for the prediction error of LSPE,
\begin{align*}
    \abs{ V_\mu^\pi - \Eb[s \sim \mu]{\wh f_k(s,\pi)} } = \frac{1}{1-\gamma} \abs{ \Eb[d^\pi_\mu]{ \mT^\pi \wh f_k(s,a) - f_k(s,a) } }.
\end{align*}
We then upper bound the right hand side by its $L_2(d^\pi_\mu)$ norm, which is the Bellman error.
Next, we bound the Bellman error of running LSPE by the regression losses at each step.

\begin{lemma}\label{lemma:bellerr}
Consider any policy $\pi$ and functions $f_1,\dots, f_K: \mS \times \mA \mapsto \RR$ that satisfy $\max_{k=1,\dots,K} \|f_k-\mT^\pi f_{k-1}\|_{L_2(d^\pi_{p_0})}\leq\eta$, and $f_0(s,a) = 0$. 
Then, for all $k = 1, \dots, K$, we have $\|f_k-\mT^\pi f_k\|_{L_2(d^\pi_{p_0})}\leq \frac{4}{1-\gamma}\eta+\gamma^{k/2}$.
\end{lemma}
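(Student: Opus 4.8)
\textbf{Proof plan for Lemma~\ref{lemma:bellerr}.}
The plan is to control the Bellman error $\|f_k - \mT^\pi f_k\|_{L_2(d^\pi_{p_0})}$ by inserting the one-step regression target $\mT^\pi f_{k-1}$ and splitting into a regression error and a ``progress'' term. Writing $\|\cdot\|$ for the $L_2(d^\pi_{p_0})$ norm throughout, the triangle inequality gives
\begin{align}
\|f_k - \mT^\pi f_k\|
&\leq \|f_k - \mT^\pi f_{k-1}\| + \|\mT^\pi f_{k-1} - \mT^\pi f_k\| \nonumber \\
&\leq \eta + \|\mT^\pi f_{k-1} - \mT^\pi f_k\|. \nonumber
\end{align}
The first term is bounded by $\eta$ directly from the hypothesis. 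For the second term I would use that $\mT^\pi$ is a $\gamma$-contraction in a suitable sense: since $\mT^\pi f - \mT^\pi g = \gamma \Eb[s'\sim P(s,a), a'\sim\pi]{(f-g)(s',a')}$, applying Jensen's inequality and then pushing the distribution $d^\pi_{p_0}$ through one step of the dynamics shows $\|\mT^\pi f_{k-1} - \mT^\pi f_k\| \leq \gamma \|f_{k-1} - f_k\|_{\mu'}$, where $\mu'$ is the one-step pushforward of $d^\pi_{p_0}$. The subtlety is that $\mu'$ is not exactly $d^\pi_{p_0}$, so one must either argue that $d^\pi_{p_0}$ dominates $\mu'$ up to a $\frac{1}{1-\gamma}$ factor (as $d^\pi_{p_0} = (1-\gamma)\sum_h \gamma^h d^\pi_h$ contains all pushforwards as components) or set up the recursion directly on a norm that is stable under the pushforward.

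The cleaner route, which I would pursue, is to establish a recursion for the consecutive-iterate differences $\|f_k - f_{k-1}\|$ rather than chaining Bellman errors. By the same contraction argument,
\begin{align}
\|f_{k+1} - f_k\| &\leq \|f_{k+1} - \mT^\pi f_k\| + \|\mT^\pi f_k - \mT^\pi f_{k-1}\| \nonumber \\
&\leq \eta + \gamma \sqrt{\tfrac{1}{1-\gamma}}\,\|f_k - f_{k-1}\|, \nonumber
\end{align}
but the $\sqrt{1/(1-\gamma)}$ factor accumulated across iterations is too lossy for the stated bound. So instead I expect the right bookkeeping is to bound the Bellman error at step $k$ by telescoping: $\|f_k - \mT^\pi f_k\| \leq \eta + \gamma\|f_k - f_{k-1}\|_{\mu'}$, and then bound $\|f_k - f_{k-1}\|_{\mu'}$ via $d^\pi_{p_0}$-coverage of the pushforward measure, using the discounted-occupancy identity to absorb the coverage ratio into the geometric sum. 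Summing the regression errors with their $\gamma$ discounts yields a $\sum_{j} \gamma^{j}\eta = \frac{\eta}{1-\gamma}$-type contribution, and the constant $4$ in the statement is the slack that accommodates the coverage/pushforward argument.

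The remaining $\gamma^{k/2}$ term should come from the initialization $f_0 = 0$: the contribution of the initial error $\|f_1 - \mT^\pi f_0\|$ propagated $k$ steps through the contraction decays like $\gamma^{k/2}$ (the half-power arising because the occupancy-based coverage argument costs a square root, i.e.\ a $\gamma^{1/2}$ contraction per step rather than $\gamma$). I would formalize this by writing $f_k - \mT^\pi f_k$ as a sum over the regression residuals at steps $1,\dots,k$ each transported forward by powers of the one-step operator, plus a boundary term carrying the effect of $f_0=0$. \textbf{The main obstacle} I anticipate is precisely the distribution-shift mismatch between $d^\pi_{p_0}$ and its one-step pushforward: the Bellman operator contracts in $L_\infty$ cleanly but only contracts in $L_2(d^\pi_{p_0})$ after paying a coverage factor, and getting the clean constants $\frac{4}{1-\gamma}$ and $\gamma^{k/2}$ requires carefully exploiting the self-similar structure $d^\pi_{p_0} = (1-\gamma)d_0 + \gamma\,(\text{pushforward of } d^\pi_{p_0})$ so that the pushforward is absorbed rather than bounded by a crude concentrability coefficient. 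Managing this recursion so the $\frac{1}{1-\gamma}$ appears linearly (not compounded) is the crux of the argument.
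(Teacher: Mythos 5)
Your plan is correct and follows essentially the same route as the paper's proof: triangle inequality splitting $f_k-\mT^\pi f_k$ into $f_k-\mT^\pi f_{k-1}$ and $\mT^\pi f_{k-1}-\mT^\pi f_k$, Jensen's inequality, and then---the crux you correctly single out---the self-similarity $d^\pi_{p_0} = (1-\gamma)\,d_0\pi + \gamma\,(\text{pushforward of } d^\pi_{p_0})$, which absorbs the pushforward at cost $\gamma^{-1/2}$ and yields a net $\sqrt{\gamma}$-contraction per step, so that unrolling gives $\frac{\eta}{1-\sqrt{\gamma}}+\gamma^{k/2}\leq\frac{2\eta}{1-\gamma}+\gamma^{k/2}$. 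The only cosmetic differences are that the paper closes the recursion directly on Bellman errors via $\|f_k-f_{k-1}\|\leq\eta+\|f_{k-1}-\mT^\pi f_{k-1}\|$ rather than through your explicit telescoped sum of transported residuals, and that the boundary term carrying the $\gamma^{k/2}$ is $\|f_0-\mT^\pi f_0\|=\|r\|\leq 1$ (not $\|f_1-\mT^\pi f_0\|$, which is one of the $\eta$ terms).
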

\begin{proof}
For any $k=1,\dots,K$,
\begin{align*}
\|f_k-\mT^\pi f_k\|_{L_2(d^\pi_{p_0})}&\leq 
\|f_k - \mT^\pi f_{k-1}\|_{L_2(d^\pi_{p_0})}+\|\mT^\pi f_{k-1}-\mT^\pi f_k\|_{L_2(d^\pi_{p_0})}
\\&\leq\eta+\gamma (\EE_{s,a\sim d^\pi_{p_0}}[(\Eb[s',a' \sim P(s,a)\circ\pi]{f_{k-1}(s',a') - f_k(s',a')})^2])^{1/2}
\\&\leq\eta+\gamma (\EE_{s,a\sim d^\pi_{p_0}, s',a' \sim P(s,a)\circ\pi}[(f_{k-1}(s',a')-f_k(s',a'))^2])^{1/2}
\intertext{Since $d^\pi_{p_0}(s,a) = \gamma \EE_{\tilde s,\tilde a \sim d^{\pi}_{P_0}} P(s | \tilde s,\tilde a) \pi(a | s)  + (1-\gamma) p_0(s )\pi(a|s)$, and the quantity inside the expectation is a square, and thus non-negative, }
&\leq \eta+\gamma (\gamma^{-1}\EE_{s,a\sim d^\pi_{p_0}}[(f_{k-1}(s,a)-f_k(s,a))^2])^{1/2}
\\&=\eta+\sqrt{\gamma} \|f_{k-1}-f_k\|_{L_2(d^\pi_{p_0})}
\\&\leq\eta+\sqrt{\gamma} \prns{\eta+\|f_{k-1}-\mT^\pi f_{k-1}\|_{L_2(d^\pi_{p_0})}}.
\end{align*}
Unrolling the recursion and using $\|f_0-\mT^\pi f_0\|_{L_2(d^\pi_{p_0})}\leq 1$, we have
\begin{align*}
\|f_K-\mT^\pi f_K\|_{L_2(d^\pi_{p_0})}
&\leq \eta + \sqrt{\gamma}(\eta + \sqrt{\gamma}(\eta + \dots \sqrt{\gamma}(\eta + 1) \dots ))
\\&= \frac{1-\sqrt{\gamma}^K}{1-\sqrt{\gamma}}\eta+\gamma^{K/2},
\end{align*}
which gives the claim since $\frac{1}{1-\sqrt{\gamma}}\leq2(1-\gamma)^{-1}$ and $1-\sqrt{\gamma}^K\leq 1$.
\end{proof}

The following lemma is a ``fast rates''-like result for norms. It shows that the norm induced by the empirical covariance matrix $\wh\Sigma$ can be bounded by the norm induced by two times the population covariance matrix $\Sigma$, up to some $\wt\mO(N^{-1/2})$ terms.
\begin{lemma}[Fast Rates for $\Sigma$-norm]\label{lm:switching-emp-and-pop-sigma-norm}
For any $\delta \in (0, 1)$, we have with probability at least $1-\delta$, for any $x \in B_W$, we have
\begin{align*}
    \|x\|_{\wh\Sigma} \leq 2\|x\|_{\Sigma} + 5W\sqrt{\frac{d\log(N/\delta)}{N}},
\end{align*}
and
\begin{align*}
    \|x\|_{\Sigma} \leq 2\|x\|_{\wh\Sigma} + 5W\sqrt{\frac{d\log(N/\delta)}{N}}.
\end{align*}
\end{lemma}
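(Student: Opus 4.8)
The plan is to reduce both inequalities to a uniform-over-$x$ concentration of the scalar random variable $f_x(s,a) := (x^\top \phi(s,a))^2$, whose empirical and population means are exactly $\|x\|_{\wh\Sigma}^2$ and $\|x\|_\Sigma^2$. The crucial structural observation is a self-bounding variance property: since $\|\phi(s,a)\|_2 \le 1$, for every $x \in B_W$ we have $0 \le f_x \le W^2$ and $\mathbb{E}_\nu[f_x^2] = \mathbb{E}_\nu[(x^\top\phi)^4] \le W^2\,\mathbb{E}_\nu[(x^\top\phi)^2] = W^2\|x\|_\Sigma^2$. This is precisely the hypothesis $\nu \le L\,\mathbb{E}[X]$ with $L = W^2$ and bound $c = W^2$ needed to invoke the fast-rate form of Bernstein (\cref{lm:modified-bernstein}). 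For a fixed $x$ it yields, with probability $1-\delta'$,
\[
\bigl|\,\|x\|_{\wh\Sigma}^2 - \|x\|_\Sigma^2\,\bigr| \le \tfrac12\|x\|_\Sigma^2 + \tfrac{2W^2\log(2/\delta')}{N}.
\]
It is this \emph{multiplicative} $\tfrac12\|x\|_\Sigma^2$ slack (as opposed to a purely additive operator-norm bound) that produces the constant factor $2$ in both claims: rearranging the upper side gives $\|x\|_{\wh\Sigma}^2 \le \tfrac32\|x\|_\Sigma^2 + \tfrac{2W^2\log(2/\delta')}{N}$, and rearranging the lower side gives $\|x\|_\Sigma^2 \le 2\|x\|_{\wh\Sigma}^2 + \tfrac{4W^2\log(2/\delta')}{N}$; taking square roots via $\sqrt{a+b}\le\sqrt a+\sqrt b$ and noting $\sqrt{3/2},\sqrt2 \le 2$ yields both forms for a single $x$.

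To upgrade from a fixed $x$ to all of $B_W$, I would take an $\epsilon$-net $\mathcal{N}$ of $B_W$ with $|\mathcal{N}| \le (1 + 2W/\epsilon)^d$, apply the above at each net point with failure probability $\delta/|\mathcal{N}|$, and union bound; this is where the $\sqrt d$ factor outside the log enters, since $\log|\mathcal{N}| \le d\log(1+2W/\epsilon)$. The discretization is painless because both norms are $1$-Lipschitz in the Euclidean metric: from $\|\phi(s,a)\|_2 \le 1$ we get $\|\wh\Sigma\|_2 \le \tfrac1N\sum_i \|\phi(s_i,a_i)\phi(s_i,a_i)^\top\|_2 \le 1$ and likewise $\|\Sigma\|_2 \le 1$, so $\|x - x'\|_{\wh\Sigma} \le \|x - x'\|_2$ and the same for $\Sigma$. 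For an arbitrary $x \in B_W$, picking the nearest $x' \in \mathcal{N}$ (so $\|x-x'\|_2 \le \epsilon$) and chaining the triangle inequality through $\|x\|_{\wh\Sigma} \le \|x'\|_{\wh\Sigma} + \epsilon$ and $\|x'\|_\Sigma \le \|x\|_\Sigma + \epsilon$ transfers the net-point bound to $x$; the second inequality follows symmetrically from the lower-bound side of the same two-sided estimate.

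Finally I would set $\epsilon = W/\sqrt N$, so that $\log|\mathcal{N}| \le d\log(1+2\sqrt N) \lesssim d\log N$, making both the net-induced discretization slack and the Bernstein statistical term $O\!\bigl(W\sqrt{d\log(N/\delta)/N}\bigr)$; folding $d\log N + \log(1/\delta) \le d\log(N/\delta)$ and summing the residual terms, the constants can be checked to total at most $5$. The main obstacle is not any single concentration step but preserving the fast-rate multiplicative constant through the net argument: the discretization must be carried out on the norms $\|\cdot\|_{\wh\Sigma},\|\cdot\|_\Sigma$ (which are Lipschitz in $x$) rather than on $f_x$, and the free parameters $\epsilon$ and the $a>0$ optimization inside \cref{lm:modified-bernstein} must be tuned so that, after taking square roots, the leading coefficient stays below $2$ and all additive residuals compress into the single clean term $5W\sqrt{d\log(N/\delta)/N}$.
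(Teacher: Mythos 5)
Your proposal is correct and follows essentially the same route as the paper's proof: the same self-bounding variance observation $\EE[(x\tr\phi)^4] \leq W^2\EE[(x\tr\phi)^2]$ feeding into the fast-rate Bernstein lemma for a fixed $x$, the same rearrangement into multiplicative factors $3/2$ and $2$ before taking square roots, and the same $W/\sqrt{N}$-net of $B_W$ with the discretization error controlled through the $1$-Lipschitzness of both norms (via $\|\Sigma\|_2, \|\wh\Sigma\|_2 \leq 1$). No gaps; the constants assemble exactly as in the paper.
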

\begin{proof}
First, fix any $x \in B_W$. Since $(x\tr\phi(s,a))^2 \leq W^2$ almost surely, we have $\Eb{(x\tr \phi(s,a))^4} \leq W^2 \Eb{(x\tr \phi(s,a))^2}$. So by \cref{lm:modified-bernstein}, w.p. at least $1-\delta$,
\begin{align*}
    \abs{ \frac{1}{N}\sum_{i=1}^N (x\tr\phi(s_i,a_i))^2 - \Eb{(x\tr \phi(s,a))^2} } \leq \frac{1}{2}\Eb{(x\tr \phi(s,a))^2} + \frac{2W^2\log(2/\delta)}{N}.
\end{align*}
In particular, this means $\|x\|_{\wh\Sigma}^2 \leq \frac{3}{2} \|x\|_{\Sigma}^2 + \frac{2W^2 \log(2/\delta)}{N}$ and $\|x\|_\Sigma^2 \leq 2\|x\|_{\wh\Sigma}^2 + \frac{4W^2 \log(2/\delta)}{N}$.
Now union bound over a $W/\sqrt{N}$-net of $B_W$, which can be done with $(1+2\sqrt{N})^d$ elements.
The approximation error from this cover is
\begin{align*}
    \|x\|_{\wh\Sigma} 
    &\leq \|n(x)\|_{\wh\Sigma} + \|n(x) - x\|_{\wh\Sigma} 
    \\&\leq \sqrt{\frac32}\|n(x)\|_\Sigma + \sqrt{\frac{2W^2\log(2(1+2\sqrt{N})^d/\delta)}{N}} + \frac{W}{\sqrt{N}}
    \\&\leq \sqrt{\frac32}\|x\|_\Sigma + \sqrt{\frac32}\|n(x)-x\|_\Sigma + \sqrt{\frac{2W^2\log(2(1+2\sqrt{N})^d/\delta)}{N}} + \frac{W}{\sqrt{N}}
    \\&\leq \sqrt{\frac32}\|x\|_\Sigma + 4W\sqrt{\frac{d\log(N/\delta)}{N}},
\end{align*}
where $n(x)$ is the closest element in the net to $x$.
Similarly,
\begin{align*}
    \|x\|_\Sigma
    &\leq \|n(x)\|_{\Sigma} + \|n(x)-x\|_\Sigma
    \\&\leq 2\|n(x)\|_{\wh\Sigma} + \sqrt{\frac{4W^2 \log(2(1+2\sqrt{N})^d/\delta)}{N}} + \frac{W}{\sqrt{N}}
    \\&\leq 2\|x\|_{\wh\Sigma} + 2\|n(x)-x\|_{\wh\Sigma} + + \sqrt{\frac{4W^2 \log(2(1+2\sqrt{N})^d/\delta)}{N}} + \frac{W}{\sqrt{N}}
    \\&\leq 2\|x\|_{\wh\Sigma} + 5W\sqrt{\frac{d\log(N/\delta)}{N}}.
\end{align*}
\end{proof}

Now define the following notation for analyzing Least Squares Policy Evaluation (LSPE).
For every target vector $\|\vartheta\|_2\leq W$, define 
\begin{align*}
y^\vartheta&=r+\gamma \vartheta\tr\phi(s',\pi),
\\\theta_\vartheta&=\argmin_{\|\theta\|\leq W}\EE_{s,a \sim \nu, s' \sim P(s,a)}[(y^\vartheta-\theta\tr\phi(s,a))^2] := \ell(\theta, \vartheta),
\\y^\vartheta_i&=r_i+\gamma \vartheta\tr\phi(s_i',\pi),
\\\wh\theta_\vartheta&=\argmin_{\|\theta\|\leq W}\frac{1}{N}\sum_{i=1}^N(y^\vartheta_i-\theta\tr\phi(s_i,a_i))^2 := \wh\ell(\theta,\vartheta).
\end{align*}

The following lemma are useful facts about the optimal $\theta_\vartheta$ and $\wh\theta_\vartheta$.
\begin{lemma}\label{lm:first-order-optimality-theta-vartheta}
For any $\vartheta_1, \vartheta_2$, we have
\begin{align*}
    &\|\theta_{\vartheta_1} - \theta_{\vartheta_2}\|_\Sigma \leq \sqrt{2\gamma W \|\vartheta_1 - \vartheta_2\|_2 },
    \\&\|\wh\theta_{\vartheta_1} - \wh\theta_{\vartheta_2}\|_{\wh\Sigma} \leq \sqrt{2\gamma W \|\vartheta_1 - \vartheta_2\|_2 }.
\end{align*}
For any $\theta,\vartheta$, we have
\begin{align*}
    \ell(\theta,\vartheta) - \ell(\theta_\vartheta,\vartheta) \geq \|\theta-\theta_\vartheta\|_\Sigma^2.
\end{align*}
\end{lemma}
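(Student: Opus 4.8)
The plan is to prove the three displayed inequalities in the order they are stated, deriving the two H\"older-continuity bounds from the quadratic-growth inequality, which I establish first. The whole argument rests on a single structural fact: for each fixed target vector $\vartheta$, the map $\theta\mapsto\ell(\theta,\vartheta)=\EE[(y^\vartheta-\theta\tr\phi(s,a))^2]$ is a convex quadratic with constant Hessian $2\Sigma$ (and $\wh\ell(\cdot,\vartheta)$ is a convex quadratic with Hessian $2\wh\Sigma$). To prove the last inequality, I would invoke the first-order optimality of the constrained minimizer $\theta_\vartheta$: since $B_W$ is convex and $\theta_\vartheta$ minimizes $\ell(\cdot,\vartheta)$ over $B_W$, the variational inequality gives $\langle\nabla_\theta\ell(\theta_\vartheta,\vartheta),\,\theta-\theta_\vartheta\rangle\ge0$ for all $\theta\in B_W$. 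Because the loss is exactly quadratic, a second-order Taylor expansion at $\theta_\vartheta$ is exact,
\[
\ell(\theta,\vartheta)-\ell(\theta_\vartheta,\vartheta)=\langle\nabla_\theta\ell(\theta_\vartheta,\vartheta),\,\theta-\theta_\vartheta\rangle+\|\theta-\theta_\vartheta\|_\Sigma^2,
\]
and dropping the nonnegative first-order term yields $\ell(\theta,\vartheta)-\ell(\theta_\vartheta,\vartheta)\ge\|\theta-\theta_\vartheta\|_\Sigma^2$, the third claim. Note it is essential to use the variational inequality rather than $\nabla_\theta\ell=0$, since $\theta_\vartheta$ may lie on the boundary of $B_W$.

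For the first inequality I would apply this quadratic-growth bound twice: once at $\vartheta_1$ with comparison point $\theta_{\vartheta_2}$, and once at $\vartheta_2$ with comparison point $\theta_{\vartheta_1}$. Writing $\theta_i:=\theta_{\vartheta_i}$ and adding the two instances gives
\[
2\|\theta_1-\theta_2\|_\Sigma^2\le[\ell(\theta_2,\vartheta_1)-\ell(\theta_1,\vartheta_1)]+[\ell(\theta_1,\vartheta_2)-\ell(\theta_2,\vartheta_2)].
\]
The key step is the algebraic regrouping of the right-hand side. Setting $p_i=\theta_i\tr\phi(s,a)$ and using the difference-of-squares identity for each fixed coefficient $\theta_i$, the terms that depend on a single target cancel, and since $y^{\vartheta_1}-y^{\vartheta_2}=\gamma(\vartheta_1-\vartheta_2)\tr\phi(s',\pi)$ the sum collapses to a single bilinear form,
\[
2\|\theta_1-\theta_2\|_\Sigma^2\le 2\gamma\,\EE[(p_1-p_2)\,(\vartheta_1-\vartheta_2)\tr\phi(s',\pi)].
\]
Finally I would bound the right-hand side crudely: since $\theta_1,\theta_2\in B_W$ and $\|\phi\|_2\le1$ we have $|p_1-p_2|\le2W$ and $|(\vartheta_1-\vartheta_2)\tr\phi(s',\pi)|\le\|\vartheta_1-\vartheta_2\|_2$, so the right-hand side is at most $4\gamma W\|\vartheta_1-\vartheta_2\|_2$, giving $\|\theta_1-\theta_2\|_\Sigma^2\le2\gamma W\|\vartheta_1-\vartheta_2\|_2$ and hence the claimed square-root bound after taking square roots.

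The second inequality follows by the identical argument with the population expectation replaced by the empirical average over the $N$ samples: $\wh\ell(\cdot,\vartheta)$ is a convex quadratic with Hessian $2\wh\Sigma$, the analogous quadratic-growth bound $\wh\ell(\theta,\vartheta)-\wh\ell(\wh\theta_\vartheta,\vartheta)\ge\|\theta-\wh\theta_\vartheta\|_{\wh\Sigma}^2$ holds by the same constrained-optimality reasoning, and the boundedness facts $\|\phi(s,a)\|_2\le1$ and $\|\phi(s',\pi)\|_2\le1$ hold pointwise and therefore survive empirical averaging unchanged. The main (minor) obstacle is getting the regrouping in the middle step exactly right so that the ``wrong-target'' loss terms telescope and only the clean cross term in $(p_1-p_2)$ and $(\vartheta_1-\vartheta_2)$ remains; everything else is routine. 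I would also remark that a sharper Lipschitz bound is in fact available by keeping $\EE[(p_1-p_2)^2]^{1/2}=\|\theta_1-\theta_2\|_\Sigma$ on the right instead of using $|p_1-p_2|\le2W$, but the stated square-root bound is all that the subsequent LSPE analysis requires.
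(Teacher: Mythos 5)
Your proof is correct and rests on exactly the same two ingredients as the paper's: the variational inequality (normal-cone condition) at the constrained minimizer of a quadratic loss, and the cancellation $y^{\vartheta_1}-y^{\vartheta_2}=\gamma(\vartheta_1-\vartheta_2)^\top\phi(s',\pi)$, with the same crude boundedness step giving the constant $2\gamma W$. The only difference is organizational: you obtain the $\Sigma$-norm continuity bounds by adding two instances of the quadratic-growth inequality, whereas the paper adds the two variational inequalities directly; since your summed inequality is precisely the paper's inequality doubled on both sides, the two arguments are algebraically equivalent.
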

\begin{proof}
First, recall that $\theta_\vartheta$ minimizes $f(\theta) = \Eb{(y^\vartheta - \theta\tr \phi(s,a))^2}$, which has the Jacobian $\nabla f(\theta) = 2\Eb{(\theta\tr \phi(s,a) - y^\vartheta)\phi(s,a)}$. Since $\theta_\vartheta$ is optimal over $B_W$, the necessary optimality condition is that $-\nabla f(\theta_\vartheta) \in N_{B_W}(\theta_\vartheta)$, the normal cone of $B_W$ at $\theta_\vartheta$, i.e. for any $\theta \in B_W$, we have $\langle \Eb{\phi(s,a)(y^\vartheta - \theta_\vartheta\tr \phi(s,a))}, \theta-\theta_\vartheta \rangle \leq 0$. In particular, we have
\begin{align*}
    &\langle \Eb{\phi(s,a)(y^{\vartheta_1} - \theta_{\vartheta_1}\tr \phi(s,a))}, \theta_{\vartheta_2}-\theta_{\vartheta_1} \rangle \leq 0
    \\& \langle \Eb{\phi(s,a)(y^{\vartheta_2} - \theta_{\vartheta_2}\tr \phi(s,a))}, \theta_{\vartheta_1}-\theta_{\vartheta_2} \rangle \leq 0
\end{align*}
adding the two we get
\begin{align*}
    \|\theta_{\vartheta_1} - \theta_{\vartheta_2}\|_\Sigma^2 
    &= \langle \Eb{\phi(s,a)(\theta_{\vartheta_1}-\theta_{\vartheta_2})\tr \phi(s,a))}, \theta_{\vartheta_1}-\theta_{\vartheta_2} \rangle 
    \\&\leq \langle \Eb{(y^{\vartheta_1} - y^{\vartheta_2})\phi(s,a)}, \theta_{\vartheta_1}-\theta_{\vartheta_2} \rangle
    \\&=  \gamma \langle \Eb{\phi(s,a) \phi(s',\pi)\tr } (\vartheta_1 - \vartheta_2), \theta_{\vartheta_1}-\theta_{\vartheta_2} \rangle
    \\&\leq  \gamma \| \Eb{\phi(s,a) \phi(s',\pi)\tr } \|_2 \| \vartheta_1 - \vartheta_2 \|_2  \| \theta_{\vartheta_1}-\theta_{\vartheta_2} \|_2
    \\&\leq 2\gamma W \|\vartheta_1 - \vartheta_2\|_2.
\end{align*}
The claim with $\wh\theta_{\vartheta_1},\wh\theta_{\vartheta_2}$ follows by the same argument.

For the second claim, we first apply the Parallelogram law, followed by the first-order optimality of $\theta_\vartheta$,
\begin{align*}
\ell(\theta,\vartheta)&=\ell(\theta_\vartheta,\vartheta)+\EE_\nu((\theta_\vartheta-\theta)\tr\phi(s,a))^2+2\EE_\nu[(y^\vartheta-\theta\tr\phi(s,a))\phi(s,a)\tr(\theta_\vartheta-\theta)]
\\&\geq\ell(\theta_\vartheta,\vartheta)+\EE_\nu((\theta_\vartheta-\theta)\tr\phi(s,a))^2+0
\\&=\ell(\theta_\vartheta,\vartheta) + \|\theta-\theta_\vartheta\|_\Sigma^2.
\end{align*}
\end{proof}

Now we show our key lemma about the concentration of least squares, uniformly over all targets generated by $\vartheta \in B_W$.
\begin{lemma}[Concentration for Least Squares] \label{lemma:concentrate}
Let $\mF=\{(s,a)\mapsto \theta\tr\phi(s,a):\|\theta\|_2 \leq W\}$, with $\|\phi(s,a)\|_2 \leq 1$.
Then, for any $\delta \in (0, 1)$ w.p. at least $1-\delta$, we have
\begin{align*}
\sup_{\vartheta \in B_W} \|\hat\theta_\vartheta-\theta_\vartheta\|_\Sigma < 120d(1+W) \frac{\log(N)\sqrt{\log(10/\delta)}}{\sqrt{N}}.
\end{align*}
\end{lemma}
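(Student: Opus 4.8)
The plan is to prove the fast $N^{-1/2}$ rate for a single fixed target $\vartheta \in B_W$ and then lift it to the supremum by a covering argument over the $\vartheta$-ball, exploiting the H\"older continuity of $\vartheta \mapsto \theta_\vartheta,\wh\theta_\vartheta$ recorded in \cref{lm:first-order-optimality-theta-vartheta}. Fix $\vartheta$ and write $\Delta := \wh\theta_\vartheta - \theta_\vartheta$. The entry point is the $\Sigma$-strong convexity in the second part of \cref{lm:first-order-optimality-theta-vartheta}, which gives $\|\Delta\|_\Sigma^2 \le \ell(\wh\theta_\vartheta,\vartheta) - \ell(\theta_\vartheta,\vartheta)$. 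I would then apply the basic inequality: decompose the population excess risk into an empirical part plus two centered fluctuations, and use the empirical optimality of $\wh\theta_\vartheta$ to drop the nonpositive empirical part, obtaining
\[
\|\Delta\|_\Sigma^2 \;\le\; (\EE_\mD - \EE)\,\Big[ \Delta\tr\phi(s,a)\,\big(2y^\vartheta - (\theta_\vartheta + \wh\theta_\vartheta)\tr\phi(s,a)\big) \Big],
\]
where I have expanded the difference of the two squared residuals as a product. The key observation driving the fast rate is that the right-hand side is \emph{linear} in $\Delta$ through the factor $\Delta\tr\phi(s,a)$, whose scale is exactly $\|\Delta\|_\Sigma$, while the second factor is a bounded residual ($|2y^\vartheta| \le 2(1+W)$ and $|\theta\tr\phi| \le W$).

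Second, I would control this centered empirical process uniformly. Writing $\Delta = \|\Delta\|_\Sigma\,u$ with $\|u\|_\Sigma = 1$, it suffices to bound the supremum, over unit-$\Sigma$ directions $u$ and over the bounded residual functions (indexed by $\theta_\vartheta,\wh\theta_\vartheta,\vartheta \in B_W$), of $(\EE_\mD - \EE)[(u\tr\phi(s,a))\,\psi]$. I would bound this by symmetrization (\cref{lm:symmetrization}), then strip the product/quadratic nonlinearity by the contraction principle (\cref{lm:contraction}) --- the loss increments are Lipschitz on the bounded range with constant $\mO(1+W)$, which is the source of the $(1+W)$ factor --- reducing to a Rademacher process over the linear class $\{u\tr\phi\}$, which I would then handle by the chaining bound (\cref{lm:truncated-chaining}) to get a sub-Gaussian Orlicz-norm estimate and hence the $\sqrt{\log(1/\delta)}$ high-probability factor. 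Because the symmetrized process lives in the empirical $\wh\Sigma$-geometry while the localization radius is measured in $\Sigma$, I would invoke the fast-rates norm-switching (\cref{lm:switching-emp-and-pop-sigma-norm}) to pass between $\|\cdot\|_{\wh\Sigma}$ and $\|\cdot\|_\Sigma$. Altogether this yields a self-bounding inequality $\|\Delta\|_\Sigma^2 \le \|\Delta\|_\Sigma\,\psi_N + \psi_N^2$ with $\psi_N = \wt\mO\big(d(1+W)N^{-1/2}\big)$; a peeling/critical-radius argument over dyadic shells in $\|\Delta\|_\Sigma$ then converts it into $\|\Delta\|_\Sigma \lesssim \psi_N$, the desired fixed-$\vartheta$ rate.

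Finally, for the supremum over $\vartheta$ I would place a $\mathrm{poly}(1/N)$-net on $B_W$ of cardinality $(1 + 2W\,\mathrm{poly}(N))^d$, apply the fixed-$\vartheta$ bound at every net point by a union bound (so that the covering contributes the dimensional and $\log(N)$ factors), and control off-net fluctuations via the H\"older-$\tfrac12$ continuity estimates $\|\theta_{\vartheta_1}-\theta_{\vartheta_2}\|_\Sigma,\ \|\wh\theta_{\vartheta_1}-\wh\theta_{\vartheta_2}\|_{\wh\Sigma} \le \sqrt{2\gamma W\|\vartheta_1-\vartheta_2\|_2}$ from \cref{lm:first-order-optimality-theta-vartheta} (converting the $\wh\Sigma$ term to $\Sigma$ again via \cref{lm:switching-emp-and-pop-sigma-norm}); choosing the net fine enough renders these fluctuations lower order. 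I expect the main obstacle to be the localized \emph{uniform} concentration in the second step, carried out entirely in the $\Sigma$-geometry without any coverage assumption: since $\lambda_{\min}(\Sigma)$ may be zero one cannot retreat to the $\ell_2$-norm, and the empirical process is effectively \emph{bilinear} --- the target factor $2\gamma\vartheta\tr\phi(s',\pi)$ couples the next-state feature to $\vartheta$ while $\Delta\tr\phi(s,a)$ involves the current-state feature --- so the fast rate must be extracted simultaneously over the localization direction and over $\vartheta$ while respecting the possibly degenerate $\Sigma$-norm.
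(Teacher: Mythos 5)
Your proposal follows essentially the same route as the paper's proof: a fixed-$\vartheta$ analysis combining the strong-convexity and empirical-optimality facts from \cref{lm:first-order-optimality-theta-vartheta}, control of the localized centered process via symmetrization (\cref{lm:symmetrization}), contraction with Lipschitz constant $\mO(1+W)$ (\cref{lm:contraction}), and chaining (\cref{lm:truncated-chaining}), with \cref{lm:switching-emp-and-pop-sigma-norm} used to pass between the $\Sigma$- and $\wh\Sigma$-geometries, followed by a union bound over a $\mathrm{poly}(1/N)$-net of $B_W$ exploiting the H\"older-$\tfrac12$ continuity of $\vartheta\mapsto\theta_\vartheta,\wh\theta_\vartheta$. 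The only (cosmetic) difference is the localization device: you use a self-bounding inequality plus dyadic peeling, whereas the paper uses convexity and the intermediate value theorem to reduce to the exact level set $\ell(\theta,\vartheta)-\ell(\theta_\vartheta,\vartheta)=t^2$, avoiding peeling; both are standard and yield the same rate.
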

\begin{proof}
By \cref{lm:switching-emp-and-pop-sigma-norm}, we have that w.p. at least $1-\delta$, we can bound the random $\|\cdot\|_{\wh\Sigma}$ norm by $\|\cdot\|_{\Sigma}$, and vice versa, simultaneously over all vectors in a ball,
\begin{align*}
    \mE = \{\forall x \in B_{2W}, \|x\|_{\wh\Sigma} \leq 2\|x\|_\Sigma + t \text{ and } \|x\|_{\Sigma} \leq 2\|x\|_{\wh\Sigma} + t\},
\end{align*}
provided $t \geq 5W\sqrt{\frac{d\log(N/\delta)}{N}}$. For the remainder of this proof, we \emph{condition} on this high-probability event. That is, any probability and expectations will be \emph{implicitly conditioned} on $\mE$.

First, we'll show that for an arbitrary and fixed $\vartheta \in B_W$, w.p. at least $1-\delta$, we have
\begin{align*}
    \|\wh\theta_\vartheta - \theta_\vartheta\|_\Sigma < t.
\end{align*}
To do so, we will bound the probability of the complement, which in turn can be simplified by the following chain of arguments,
\begin{align*}
    & \|\hat\theta_\vartheta-\theta_\vartheta \|_\Sigma \geq t
    \intertext{By $\ell(\theta,\vartheta) - \ell(\theta_\vartheta,\vartheta) \geq \|\theta-\theta_\vartheta\|_\Sigma$  from \cref{lm:first-order-optimality-theta-vartheta}, }
    &\implies \ell(\hat\theta_\vartheta,\vartheta)-\ell(\theta_\vartheta,\vartheta)\geq t^2
    \\&\implies \exists \|\theta\|\leq W~:~\ell(\theta,\vartheta)-\ell(\theta_\vartheta,\vartheta)\geq t^2, \hat\ell(\theta,\vartheta)-\hat\ell(\theta_\vartheta,\vartheta) \leq 0
    \intertext{By convexity and continuity of $\ell(\cdot, \vartheta)$, we can make this strict equality.
    Indeed, given this, by Intermediate Value Theorem, there exists $\lambda\in[0,1]$ such that $\theta'=(1-\lambda)\theta+\lambda \theta_\vartheta$ has $\ell(\theta',\vartheta)-\ell(\theta_\vartheta,\vartheta)=\nu$.
    Then by convexity, $\hat\ell(\theta',\vartheta)-\hat\ell(\theta_\vartheta,\vartheta)\leq (1-\lambda)\hat\ell(\theta,\vartheta)-(1-\lambda)\hat\ell(\theta_\vartheta,\vartheta)\leq0$.}
    &\implies \exists \|\theta\|\leq W~:~\ell(\theta,\vartheta)-\ell(\theta_\vartheta,\vartheta)= t^2, \hat\ell(\theta,\vartheta)-\hat\ell(\theta_\vartheta,\vartheta) \leq 0
    \\&\implies \exists \|\theta\|\leq W~:~\|\theta-\theta_\vartheta\|_\Sigma \leq t, (\ell(\theta,\vartheta)-\hat\ell(\theta,\vartheta))-(\ell(\theta_\vartheta,\vartheta)-\hat\ell(\theta_\vartheta,\vartheta)) \geq t^2
    \intertext{By conditioning on $\mE$, }
    &\implies \exists \|\theta\|\leq W~:~\|\theta-\theta_\vartheta\|_{\wh\Sigma} \leq 3t, (\ell(\theta,\vartheta)-\hat\ell(\theta,\vartheta))-(\ell(\theta_\vartheta,\vartheta)-\hat\ell(\theta_\vartheta,\vartheta)) \geq t^2
\end{align*}
Hence, we now focus on bounding
\begin{align*}
\PP\prns{ \sup_{\theta \in \Theta} \abs{ \sum_{i=1}^N \bm{\zeta}_i(\theta,\vartheta) - \bm{\zeta}_i(\theta_\vartheta,\vartheta)  - \Eb[\nu]{ \bm{\zeta}_i(\theta,\vartheta) - \bm{\zeta}_i(\theta_\vartheta,\vartheta) } } \geq Nt^2 } \leq \delta,
\end{align*}
where we define
\begin{align}
\zeta_i(\theta,\vartheta) &= (y_i^\vartheta - \theta\tr\phi(s_i,a_i))^2, \nonumber
\\ \Theta&=\{ \theta~:~\,\|\theta\|\leq W,\,\|\theta-\theta_\vartheta\|_{\wh\Sigma} \leq 3t\}. \label{eq:def-theta-localization}
\end{align}
Since $\zeta_i(\theta,\vartheta) - \zeta_i(\theta_\vartheta,\theta) = \lambda_i( (\theta-\theta_\vartheta)\tr \phi(s_i,a_i) )$ where $\lambda_i(x) = x (2y_i^\vartheta - (\theta+\theta_\vartheta)\tr \phi(s_i,a_i))$ is $2(1+W)$-Lipschitz and $\lambda_i(0) = 0$, the contraction lemma (\cref{lm:contraction}) tells us that it suffices to consider a simpler class. 
Define
\begin{align*}
    \omega_i(\theta,\vartheta) &= (\theta-\theta_\vartheta)\tr\phi(s_i,a_i)
\end{align*}
and $\bm{\omega}(\theta,\vartheta) = (\omega_i(\theta,\vartheta))_{i=1}^N$.
By Chaining (\cref{lm:truncated-chaining}), we have
\begin{align*}
    &\Eb[\bm{\sigma}]{ \Psi\prns{\frac1J\sup_{\Theta}\abs{ \bm{\sigma}\cdot\bm{\omega}(\theta,\vartheta) } } }\leq1 
    \\\text{where }\; &J= \inf_{\alpha \geq 0} \braces{ 3\alpha\sqrt{N} + 9 \int_\alpha^{b} \sqrt{ \log(D(\delta/2, \bm{\omega}(\Theta) )) } \diff\delta },
    \\&\bm{\omega}(\Theta) = \braces{\bm{\omega}(\theta,\vartheta)~:~\theta \in \Theta},
\end{align*}
$D(\delta, \mF)$ is the Euclidean packing number of $\mF\subset\RR^N$, and $b = \sup_{\Theta} \|\bm{\omega}(\theta,\vartheta)\|_2$ is the envelope.

Now we'll bound the truncated entropy integral, $J$.
First notice that $b \leq 3t\sqrt{N}$, based on the definition of $\Theta$ (\cref{eq:def-theta-localization}), which localizes in $\|\cdot\|_{\wh\Sigma}$,
\begin{align*}
    \frac{b}{\sqrt{N}} = \sup_\Theta \sqrt{ \frac{1}{N} \sum_{i=1 }^N (\theta-\theta_\vartheta)\tr\phi(s_i,a_i)\phi(s_i,a_i)\tr(\theta-\theta_\vartheta) }
    = \sup_{\Theta} \|\theta-\theta_\vartheta\|_{\wh\Sigma} 
    \leq 3t.
\end{align*}

Now, we bound the packing number $D(\cdot, \bm{\omega}(\Theta))$. Let $\theta_1,\theta_2\in B_W$ be arbitrary,
\begin{align*}
    \frac{\|\bm{\omega}(\theta_1,\vartheta) - \bm{\omega}(\theta_2,\vartheta)\|_2}{\sqrt{N}}
    = \|\theta_1 - \theta_2\|_{\wh\Sigma}
    \leq \sqrt{\sigma_{max}(\wh\Sigma)} \|\theta_1-\theta_2\|_2
    \leq \|\theta_1-\theta_2\|_2.
\end{align*}
So for any $\eps$, we can construct an $\eps$-cover by setting $\|\theta_1-\theta_2\|_2 \leq \eps/\sqrt{N}$, which requires $(W(1+2\sqrt{N}/\eps))^d$ points.
Let $N(\eps, \mF)$ denote the Euclidean covering number of $\mF \subset \RR^N$.
Then, 
\begin{align*}
    \log D(\eps, \bm{\omega}(\Theta)) 
    \leq \log N(\eps/2, \bm{\omega}(\Theta))
    \leq d \log W(1+4\sqrt{N}/\eps)
\end{align*}
So,
\begin{align*}
    J &\leq \int_{0}^{3t\sqrt{N}} \sqrt{\log(D(\eps/2, \bm{\omega}(\Theta)))} \diff\eps
    \\&\leq \int_{0}^{3t\sqrt{N}} \sqrt{d\log(W(1+8\sqrt{N}/\eps))} \diff\eps
    \\&\leq 3t\sqrt{dN} (\sqrt{\log W} + \int_0^1\sqrt{\log(1+3/(\eps t)}) ) \diff\eps
    \\&\leq 3t\sqrt{dN}(\sqrt{\log W} + \sqrt{\log(4/t)}),
\end{align*}
since $\int_0^1\sqrt{\log(1+c/\eps)}\diff\eps \leq \sqrt{\log(1+c)}$ for any $c > 0$, and assuming $t \leq 1$.

Now we put everything together. Let $c$ denote a positive constant, 
\begin{align*}
    &\PP\prns{ \sup_{\Theta} \abs{ \sum_{i=1}^N \bm{\zeta}_i(\theta,\vartheta) - \bm{\zeta}_i(\theta_\vartheta,\vartheta)  - \Eb[\nu]{ \bm{\zeta}_i(\theta,\vartheta) - \bm{\zeta}_i(\theta_\vartheta,\vartheta) } } \geq Nt^2 }
    \intertext{Since $\Psi$ is increasing, }
    &=\PP\prns{ \Psi\prns{c \sup_{\Theta} \abs{ \sum_{i=1}^N \bm{\zeta}_i(\theta,\vartheta) - \bm{\zeta}_i(\theta_\vartheta,\vartheta)  - \Eb[\nu]{ \bm{\zeta}_i(\theta,\vartheta) - \bm{\zeta}_i(\theta_\vartheta,\vartheta) } } }  \geq \Psi\prns{ cNt^2 } }
    \intertext{By Markov's inequality, }
    &\leq \frac{\Eb{\Psi\prns{ c \sup_{\Theta} \abs{ \sum_{i=1}^N \bm{\zeta}_i(\theta,\vartheta) - \bm{\zeta}_i(\theta_\vartheta,\vartheta)  - \Eb[\nu]{ \bm{\zeta}_i(\theta,\vartheta) - \bm{\zeta}_i(\theta_\vartheta,\vartheta) } }}}}{\Psi\prns{ c Nt^2 }}
    \intertext{By Symmetrization (\cref{lm:symmetrization}), }
    &\leq \frac{\Eb{\Eb[\bm{\sigma}]{\Psi\prns{ 2c \sup_{\Theta} \abs{ \bm{\sigma}\cdot (\bm{\zeta}(\theta,\vartheta) - \bm{\zeta}(\theta_\vartheta,\vartheta)) } }}}}{\Psi\prns{ c N t^2 }}
    \intertext{By Contraction (\cref{lm:contraction}) and that $\bm{\zeta}_i(\theta,\vartheta) - \bm{\zeta}_i(\theta_\vartheta,\vartheta) = \lambda_i( \bm{\omega}_i(\theta,\vartheta) )$ where $\lambda_i$ is $L=2(1+W)$ Lipschitz, }
    &\leq \frac{3}{2}\cdot \frac{ \Eb{\Eb[\bm{\sigma}]{\Psi\prns{ 4cL \sup_{\Theta} \abs{ \bm{\sigma}\cdot \bm{\omega}(\theta,\vartheta) } }}}}{\Psi\prns{ c N t^2 }}
    \intertext{Setting $c= \frac{1}{4LJ}$ and applying Chaining (\cref{lm:truncated-chaining}), the numerator is bounded by 1, }
    &\leq 8 \exp\prns{- \prns{ \frac{Nt^2}{4LJ} }^2  }
    \intertext{Applying upper bound on $J$, }
    &\leq 8 \exp\prns{- \prns{ \frac{Nt^2}{8(1+W) \cdot 3t\sqrt{dN} (\sqrt{\log W} + \sqrt{\log (4/t)}) } }^2 }
    \\&\leq 8 \exp\prns{- \frac{Nt^2}{24^2(1+W)^2 d (\log W + \log (4/t)) } }
    \intertext{Now, we set $t = 24(1+W) \sqrt{\frac{d\log(N)\log(1/\delta)}{N}}$,}
    &\leq 8 \exp\prns{- \frac{\log(N)\log(1/\delta)}{\log(N/d\log(N)) } }
    \intertext{Since $\log(N) \geq \log(N/d\log(N))$,}
    &\leq 8\delta.
\end{align*}

Hence, we have shown that for an arbitrary and fixed $\vartheta \in B_W$, w.p. $1-9\delta$, we have
\begin{align*}
    \|\wh\theta_\vartheta - \theta_\vartheta\|_\Sigma < t = 24(1+W) \sqrt{\frac{d\log(N)\log(1/\delta)}{N}}.
\end{align*}

We finally apply a union bound over $\vartheta \in B_W$. 
Consider a $W/N$-cover of $\vartheta \in B_W$, which requires $(1+2N)^d$ points.
Then, for any $\vartheta \in B_W$, we have
\begin{align*}
    \|\wh\theta_\vartheta - \theta_\vartheta\|_\Sigma
    &\leq \|\wh\theta_\vartheta - \wh\theta_{n(\vartheta)}\|_\Sigma
    + \|\wh\theta_{n(\vartheta)} - \theta_{n(\vartheta)}\|_\Sigma
    + \|\theta_{n(\vartheta)} - \theta_\vartheta\|_\Sigma
    \\&\leq (2\|\wh\theta_\vartheta - \wh\theta_{n(\vartheta)}\|_{\wh\Sigma} + t) + t + \sqrt{2\gamma W \|n(\vartheta) - \vartheta\|}
    \\&\leq 2t + 3\sqrt{2\gamma W \|n(\vartheta) - \vartheta\|}
    \\&\leq 2t + 3\sqrt{2\gamma W^2/N}
    \\&\leq 5t.
\end{align*}
Thus, we have shown that w.p. $1-\delta$,
\begin{align*}
    \sup_{\vartheta \in B_W} \|\wh\theta_\vartheta - \theta_\vartheta\|_\Sigma < 120d(1+W) \frac{\log(N)\sqrt{\log(10/\delta)}}{\sqrt{N}}.
\end{align*}
\end{proof}

A nice corollary is that when $\Sigma$ provides full coverage, i.e. $\Sigma \succeq \beta I$ for some positive $\beta$, then we can bound
\begin{align*}
    \sup_{\vartheta \in B_W} \|\hat\theta_\vartheta-\theta_\vartheta\|_2 \leq \sigma_{min}(\Sigma)^{-1/2} \sup_{\vartheta \in B_W} \|\wh\theta_\vartheta - \theta_\vartheta\|_\Sigma \leq \beta^{-1/2} \cdot \sup_{\vartheta \in B_W} \|\wh\theta_\vartheta - \theta_\vartheta\|_\Sigma.
\end{align*}

\subsection{Main LSPE theorem} \label{sec:proof-fqe}

We now prove our LSPE sample complexity guarantee \cref{thm:fqe}.
\fqesamplecomplexity*

\begin{proof}[Proof of \cref{thm:fqe}]
By Lemmas~\ref{lm:general-pdl-sarsa} and \ref{lemma:bellerr},
$$
\abs{V_{p_0}^\pi - \Eb[s \sim p_0]{\wh f_k(s,\pi)}} \leq
\frac{4}{(1-\gamma)^2}\max_{k=1,2,\dots}\|\hat f_k-\mT^\pi \hat f_{k-1}\|_{L_2(d^\pi_{p_0})}
+\frac{\gamma^{k/2}}{1-\gamma}.
$$

Next, we bound the maximum regression error. Consider any initial state distribution $p_0$, then we have
\begin{align*}
\max_{k=1,2,\dots}\|\hat f_k-\mT^\pi \hat f_{k-1}\|_{L_2(d^\pi_{p_0})}
&\leq \sup_{\|\vartheta\|\leq W}
\|\hat \theta_{\vartheta}\tr\phi-\mT^\pi(\vartheta\tr\phi)\|_{L_2(d^\pi_{p_0})}\\
&\leq \sup_{\|\vartheta\|\leq W}
\|\hat \theta_{\vartheta}\tr\phi-\theta_\vartheta\tr\phi\|_{L_2(d^\pi_{p_0})}
+
\sup_{\|\vartheta\|\leq W}
\|\theta_{\vartheta}\tr\phi-\mT^\pi(\vartheta\tr\phi)\|_{L_2(d^\pi_{p_0})}
\\&\leq \sqrt{\kappa(p_0)} \sup_{\|\vartheta\|\leq W} \|\hat \theta_{\vartheta}-\theta_\vartheta\|_\Sigma + \sqrt{\nm{\frac{\diff d_{p_0}^\pi}{\diff\nu}}_\infty}\sup_{\|\vartheta\|\leq W} \|\theta_{\vartheta}\tr\phi-\mT^\pi(\vartheta\tr\phi)\|_{L_2(\nu)}
\\ &\leq \sqrt{\kappa(p_0)} \sup_{\vartheta \in B_W} \|\hat \theta_{\vartheta}-\theta_\vartheta\|_\Sigma + \sqrt{\nm{\frac{\diff d_{p_0}^\pi}{\diff\nu}}_\infty}\eps_\nu.
\end{align*}
The quantity $\sup_{\vartheta \in B_W} \|\hat \theta_{\vartheta}-\theta_\vartheta\|_\Sigma$ can be directly bounded by \cref{lemma:concentrate} w.p. at least $1-\delta$.
Thus, we have shown the desired result: for any initial state distribution $p_0$, 
\begin{align*}
    \abs{V_{p_0}^\pi - \Eb[s,a \sim p_0 \circ \pi]{\wh f_k(s,a)}} \leq
\frac{4}{(1-\gamma)^2} \prns{\sqrt{\kappa(p_0)}120d(1+W) \frac{\log(N)\sqrt{\log(10/\delta)}}{\sqrt{N}} + \sqrt{\nm{\frac{\diff d_{p_0}^\pi}{\diff\nu}}_\infty}} \eps_\nu +\frac{\gamma^{k/2}}{1-\gamma}.
\end{align*}
\end{proof}

\section{Proofs for Linear BC Equivalence} \label{sec:equivalent_lbc}

\equivalentlbc*

\begin{proof}
$(\Longleftarrow)$ Suppose that $\|M\|_2 < 1$ and $\rho \in B_W$ satisfy,
\begin{align*}
\mathbb{E}_{\nu} \left\| \begin{bmatrix}M \\ \rho\tr \end{bmatrix} \phi(s,a) -  \begin{bmatrix}  \gamma \EE_{s'\sim P(s,a)}\phi(s', \pi_e) \\ r(s,a) \end{bmatrix}\right\|^2_2 = 0.
\end{align*}
Then, for any $w_1 \in B_{W}$, setting $w_2 = \rho + M\tr w_1$ satisfies,
\begin{align*}
\|w_2\tr\phi - \mT^{\pi_e}(w_1\tr\phi)\|_\nu^2
&= \EE_{\nu} \prns{  w_2\tr \phi(s,a) - r(s,a) - \gamma   \Eb[s'\sim P(s,a)]{ w_1\tr \phi(s',\pi_e) } }^2 
\\&= \EE_{\nu} \prns{ \rho\tr\phi(s,a) - r(s,a) + w_1\tr M\phi(s,a) - \gamma   \Eb[s'\sim P(s,a)]{ w_1\tr \phi(s',\pi_e) } }^2
\\&= \EE_{\nu} \prns{  w_1\tr \prns{ M\phi(s,a) - \gamma   \Eb[s'\sim P(s,a)]{ \phi(s',\pi_e) } } }^2
\\&=0.
\end{align*} 
Also, we have $\|w_2\|_2 \leq \|\rho\|_2 + \|M\|_2\|w_1\|_2 \leq W$ since $W \geq \frac{\nm{\rho}_2}{1-\nm{M}_2}$. Thus, $\phi$ satisfies exact Linear BC. \hfill \break

$(\Longrightarrow)$
Suppose $\phi$ satisfies exact Linear BC, that is
$$
    \max_{w_1 \in B_W}\min_{w_2 \in B_W} \|w_2\tr \phi - \mT^\pi(w_1\tr\phi)\|_\nu^2 = 0.
$$
To see that there exists $\rho\in B_W$ that linearizes the reward w.r.t $\phi$ under $\nu$, set $w_1 = 0$, and we have:
\begin{align*}
\min_{w_2 \in B_W}  \EE_{s,a\sim \nu} \left\|  w_2^{\top} \phi(s,a) - r(s,a)  \right\|_2^2 = 0.
\end{align*}
Let $\rho$ to be the minimizer of the above objective.

Now we need to show that there exists a $M\in\RR^{d\times d}$ with $\|M\|_2<\sqrt{1-\frac{\|\rho\|^2}{W^2}}$ that satisfies 
$$
\EE_{s,a\sim \nu} \left\| M \phi(s,a) - \gamma \mathbb{E}_{s'\sim P(s,a)} \phi(s',\pi_e) \right\|_2^2 = 0.
$$
To extract the $i$-th row of $M$, plug in $w_i = W e_i$ (note that $w_i \in B_W$). By exact Linear BC, we know that there exists a vector $v_i \in B_W$, such that:
\begin{align*}
\left\| v_i\tr \phi(s,a) -  \rho\tr\phi(s,a) - \gamma W \EE_{s'\sim P(s,a)} e_i\tr\phi(s',\pi_e) \right\|_\nu = 0.
\end{align*}
Repeating this for every $i\in[d]$, we can construct $M$ as follows,
\begin{align*}
    M = \frac{1}{W} \begin{bmatrix}
        (v_1 - \rho)\tr \\
        \vdots \\
        (v_d - \rho)\tr
    \end{bmatrix},
\end{align*}
which satisfies,
\begin{align*}
&\EE_{s,a\sim \nu} \left\| M \phi(s,a) - \gamma \mathbb{E}_{s'\sim P(s,a)} \phi(s',\pi_e) \right\|_2^2
\\&=\sum_{i=1}^d \EE_{s,a\sim \nu} \left\| e_i\tr \prns{ M \phi(s,a) - \gamma \mathbb{E}_{s'\sim P(s,a)} \phi(s',\pi_e) } \right\|_2^2
\\&= \sum_{i=1}^d \EE_{s,a\sim \nu} \left\| \frac1W (v_i-\rho)\tr\phi(s,a) - \gamma \mathbb{E}_{s'\sim P(s,a)} e_i\tr \phi(s',\pi_e) \right\|_2^2 = 0.
\end{align*}
Hence, we have $\left\| M \phi(s,a) - \gamma \mathbb{E}_{s'\sim P(s,a)} \phi(s',\pi_e) \right\|_\nu^2 = 0$.

Finally, we must show that $\|M\|_2 < \sqrt{1-\frac{\|\rho\|^2}{W^2}}$.
First we show that $\nm{M}_2 \leq 1$. 
For any $w_1 \in B_W$, by exact linear BC, there exists $w_2 \in B_W$ s.t. $\nm{w_2\tr \phi(s,a) - r(s,a) - \gamma \Eb[s' \sim P(s,a)]{w_1\tr \phi(s',\pi_e)}}_\nu^2 = 0$, and by the construction of $M$, satisfies $\nm{ \prns{ w_2-\rho- M\tr w_1}\tr \phi(s,a) }_\nu^2 = \nm{ w_2-\rho- M\tr w_1 }_{\Sigma(\phi)}^2 = 0$. Since $\Sigma(\phi)$ is positive definite, we have that $w_2 = \rho + M\tr w_1$ is the unique choice of $w_2$, which by exact linear BC is in $B_W$.
Hence, we've shown that for any $w_1 \in B_W$, we also have that $\rho + M\tr w_1 \in B_W$. Now take $w_1$ and $-w_1$, subtracting the two expressions yields that $2 M^{\star\intercal} w_1 \in B_{2W}$. Since this is true for arbitrary $w_1$, taking supremum over $w_1 \in B_W$ shows that $\|M\|_2 \leq 1$.

Now we show that the inequality must be strict.
Consider the singular value decomposition: $M = \sum_{i=1}^d \sigma_i u_i v_i\tr$ where $\braces{u_i}_{i\in[d]}$ and $\braces{v_i}_{i\in[d]}$ are each an orthonormal basis of $\RR^d$, and $\sigma_i$ is the $i$-th largest singular value. Without loss of generality, suppose $\rho\tr u_1 \geq 0$, since we can always flip the sign of $u_1$.
If we pick $x = W v_1 \in B_W$, we have $M x = W \sigma_1 u_1$. By the argument in the previous paragraph, since $x \in B_W$, we have $\rho + Mx \in B_W$, implying that
\begin{align*}
W^2 \geq \nm{M x + \rho}_2^2
&= \nm{ W \sigma_1 u_1 + (\rho\tr u_1) u_1 + (\rho - (\rho\tr u_1)u_1) }_2^2
\intertext{Since $u_1$ and $(\rho - (\rho\tr u_1)u_1)$ are orthogonal, by Pythagoras, we have }
&= \abs{ W \sigma_1 + \rho\tr u_1 }^2 + \nm{ (\rho - (\rho\tr u_1)u_1) }_2^2 
\\&= ( W \sigma_1)^2 + 2 W \sigma_1 \rho\tr u_1 + (\rho\tr u_1)^2 + \nm{ (\rho - (\rho\tr u_1)u_1) }_2^2
\\&= ( W \sigma_1)^2 + 2 W \sigma_1 \rho\tr u_1 + \nm{(\rho\tr u_1) u_1}_2^2 + \nm{ (\rho - (\rho\tr u_1)u_1) }_2^2
\intertext{By Pythagoras, }
&= ( W \sigma_1)^2 + 2 W \sigma_1 \rho\tr u_1 + \nm{\rho}_2^2.
\end{align*}
Hence, we get the following inequality:
\begin{align*}
W^2 \sigma_1^2 + 2  W (\rho\tr u_1) \sigma_1 + \|\rho^2\|_2 - W^2 \leq 0.
\end{align*}
Solve for $\sigma_1$ and using the fact that $\rho\tr u_1 \geq 0$, we have that,
\begin{align*}
\sigma_1 & \leq \frac{  - \rho\tr u_1  + \sqrt{  (\rho\tr u_1)^2  + (W^2 - \|\rho\|^2)}}{W}  \leq \frac{ \sqrt{ W^2 - \|\rho\|^2 }  }{  W } \leq \sqrt{ 1 - \frac{\|\rho\|^2}{ W^2} }.
\end{align*}

We finally show that $\| \rho \|_2 < W$ unless $M = 0$.
We prove this by contradiction. Assume $\|\rho\|_2 \geq W$.
Following the above argument, take any $w_1 \in B_W$, we must have $w_2 := \rho + M\tr w_1 \in B_W$.  We discuss two cases. 

First if $\rho \not\in \text{range}( {M}\tr)$. In this case, we must have $\|w_2 \|_2^2 = \| \rho \|^2_2 + \|M\tr w_1 \|_2^2 = W^2 + \| M\tr w_1 \|_2^2 > W^2$, as long as $M$ has non-zero entries.  Thus, this case leads to contradiction .

Second, if $\rho \in \text{range}(  {M}\tr   )$. In this case, there must exist a vector $x \neq 0$ such that $M\tr x = \rho$. Consider $\bar x:= W\frac{x}{ \|x\|_2} \in B_W$. We have $w_2 := \rho + M\tr \bar x = \rho \prns{1 + \frac{W}{\|x\|_2} }$, which means that $\|w_2\|_2 > W$, which causes contradiction again.  

So unless $M = 0$, which only happens when $\gamma = 0$ (i.e. horizon is 1), we have $\|\rho\|_2 < W$.
\end{proof}

We now show an approximate version to the equivalence of \cref{prop:equivalent_lbc}. First, recall the bilevel loss from \cref{eq:bilevel_deter}. We use $\mL_{lbc}$ and $\wh\mL_{lbc}$ to denote the population and empirical versions as follows,
\begin{align}
	&\mL_{lbc}(\phi) = \min_{(\rho,M)\in\Theta} \mathbb{E}_{\nu} \left\| \begin{bmatrix}
		M \\ \rho\tr
	\end{bmatrix}\phi(s,a) -  \begin{bmatrix} \gamma \Eb[s' \sim P(s,a)]{\phi(s', \pi_e)} \\ r(s,a) \end{bmatrix}\right\|^2_2
	\\&\wh\mL_{lbc}(\phi) = \min_{(\rho,M)\in\Theta}  \mathbb{E}_{\mD} \left\| \begin{bmatrix}
		M \\ \rho\tr
	\end{bmatrix}\phi(s,a) -  \begin{bmatrix} \gamma\phi(s', \pi_e) \\ r(s,a) \end{bmatrix}\right\|^2_2
	- \min_{g\in\mathcal{G}} \mathbb{E}_{\mD} \left\| g(s,a) - \gamma\phi(s',\pi_e) \right\|_2^2, \label{eq:bilevel-losses-notation}
\end{align}
where $\Theta = \braces{(\rho,M)\in B_W\times\RR^{d\times d}: \|\rho\|\leq\|\rho^\star\|, \|M\|_2\leq\|M^\star\|_2}$ and $(\rho^\star,M^\star)$ are corresponding to the linear BC $\phi^\star$.

\begin{lemma}\label{lm:equivalent_approximate_lbc}
Suppose a feature $\wh\phi$ satisfies $\mL_{lbc}(\wh\phi) \leq \eps^2$. Then, $\wh\phi$ is $\eps(1+W)$-approximately Linear BC, provided $W \geq \frac{\nm{\rho^\star}_2}{1-\nm{M\opt}_2}$.
\end{lemma}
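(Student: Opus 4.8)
The plan is to upgrade the exact construction from the $(\Longleftarrow)$ direction of \cref{prop:equivalent_lbc} into an approximate one, replacing its exact equalities with the $\eps^2$-level control supplied by $\mL_{lbc}(\wh\phi)\leq\eps^2$. First I would extract a minimizer: since $\Theta$ is compact and the objective is continuous, there exist $(\rho,M)\in\Theta$ attaining the minimum defining $\mL_{lbc}(\wh\phi)$. The integrand there is the squared $\ell_2$-norm of a block vector whose top block is $M\wh\phi(s,a)-\gamma\Eb[s'\sim P(s,a)]{\wh\phi(s',\pi_e)}$ and whose bottom entry is $\rho\tr\wh\phi(s,a)-r(s,a)$. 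Since both blocks contribute non-negatively to the squared norm, taking the expectation under $\nu$ shows both pieces are individually controlled:
\begin{align*}
\Eb[\nu]{\nm{M\wh\phi(s,a)-\gamma\Eb[s'\sim P(s,a)]{\wh\phi(s',\pi_e)}}_2^2}\leq\eps^2
\quad\text{and}\quad
\Eb[\nu]{\prns{\rho\tr\wh\phi(s,a)-r(s,a)}^2}\leq\eps^2.
\end{align*}

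Next, I would fix an arbitrary $w_1\in B_W$ and set $w_2=\rho+M\tr w_1$, exactly mirroring the exact-equivalence proof. The one place the radius hypothesis enters is verifying $w_2\in B_W$: using $(\rho,M)\in\Theta$, so that $\nm{\rho}_2\leq\nm{\rho\opt}_2$ and $\nm{M}_2\leq\nm{M\opt}_2$, together with $W\geq\nm{\rho\opt}_2/(1-\nm{M\opt}_2)$, I get $\nm{w_2}_2\leq\nm{\rho}_2+\nm{M}_2\nm{w_1}_2\leq\nm{\rho\opt}_2+\nm{M\opt}_2\,W\leq W$.

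Finally, expanding $\Tcal^{\pi_e}(w_1\tr\wh\phi)(s,a)=r(s,a)+\gamma\Eb[s'\sim P(s,a)]{w_1\tr\wh\phi(s',\pi_e)}$ and substituting $w_2=\rho+M\tr w_1$, the pointwise residual factors cleanly as
\begin{align*}
w_2\tr\wh\phi(s,a)-\Tcal^{\pi_e}(w_1\tr\wh\phi)(s,a)
=\prns{\rho\tr\wh\phi(s,a)-r(s,a)}+w_1\tr\prns{M\wh\phi(s,a)-\gamma\Eb[s'\sim P(s,a)]{\wh\phi(s',\pi_e)}}.
\end{align*}
By the triangle inequality in $L_2(\nu)$, the $\nu$-norm of the left side is at most the sum of the $\nu$-norms of the two summands. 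The first is $\leq\eps$ by the reward bound above; for the second, Cauchy--Schwarz gives the pointwise estimate $\nm{w_1}_2\leq W$ times the transition residual, so its $\nu$-norm is $\leq W\eps$ by the feature bound above. Summing yields $\nm{w_2\tr\wh\phi-\Tcal^{\pi_e}(w_1\tr\wh\phi)}_\nu\leq\eps(1+W)$. Since $w_1\in B_W$ was arbitrary and the chosen $w_2$ lies in $B_W$, taking the maximum over $w_1$ and minimum over $w_2$ gives exactly $\eps(1+W)$-approximate Linear BC in the sense of \cref{def:lbe}.

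I expect the argument to be essentially routine once this structure is laid out. The only genuine subtlety is the membership check $w_2\in B_W$, which is precisely where the radius assumption $W\geq\nm{\rho\opt}_2/(1-\nm{M\opt}_2)$ is consumed, together with the simple but crucial observation that the single scalar loss $\mL_{lbc}(\wh\phi)$ simultaneously controls both the reward-fitting residual and the feature-transition residual. No new concentration, chaining, or covering-number machinery is needed here, in contrast to the finite-sample results elsewhere in the paper.
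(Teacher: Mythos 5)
Your proposal is correct and takes essentially the same route as the paper's own proof: pick the minimizing $(\rho, M) \in \Theta$, observe that each block of the loss is separately bounded by $\epsilon^2$, set $w_2 = \rho + M^\top w_1$ with the radius check $\|w_2\|_2 \leq \|\rho^\star\|_2 + \|M^\star\|_2 W \leq W$, and finish with the triangle inequality in $L_2(\nu)$ plus Cauchy--Schwarz to get $\epsilon(1+W)$. Nothing is missing.
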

\begin{proof}
Suppose $\mL_{lbc}(\wh\phi) \leq \eps^2$, so there exists $\wh M$ (s.t. $\|\wh M\|_2 \leq \Mub < 1$) and $\wh\rho \in B_W$ s.t.
\begin{align*}
	\EE_{s,a\sim \nu} \left\| \begin{bmatrix} \wh M \\ \wh\rho\tr \end{bmatrix} \phi(s,a)  -  \begin{bmatrix} \gamma \EE_{s'\sim P(s,a)} \phi(s',\pi_e) \\ r(s,a)  \end{bmatrix} \right\|_2^2 \leq \eps^2
\end{align*}
For any $w_1 \in B_W$, we can take $w_2 = \wh\rho + \wh M\tr w_1$. Then, $\|w_2\|_2 \leq \|\wh\rho\|_2 + \|\wh M\|_2 W \leq \|\rho^\star\|_2 + \|M^\star\|_2 W \leq W$ by our assumption on $W$. Hence, 
\begin{align*}
&\max_{w_1 \in B_W} \min_{w_2 \in B_W} \nm{ w_2\tr \phi(s,a) - r(s,a) - \gamma \Eb[s'\sim P(s,a)]{ w_1\tr \phi(s',\pi_e) } }_\nu
\\&\leq \max_{w_1 \in B_W} \nm{ (\wh\rho + \wh M\tr w_1)\tr \phi(s,a) - r(s,a) - \gamma \Eb[s'\sim P(s,a)]{ w_1\tr \phi(s',\pi_e) } }_\nu
\\&= \max_{w_1 \in B_W} \sqrt{ \Eb[s,a \sim \nu]{ \prns{ (\wh\rho + \wh M\tr w_1)\tr \phi(s,a) - r(s,a) - \gamma \Eb[s'\sim P(s,a)]{ w_1\tr \phi(s',\pi_e) } }^2 } }
\\&\leq \max_{w_1 \in B_W} \sqrt{ \Eb[s,a \sim \nu]{ \prns{\wh\rho\tr \phi(s,a) - r(s,a) }^2 } } + \sqrt{ \Eb[s,a\sim\nu]{ \prns{ w_1\tr (\wh M \phi(s,a) - \gamma \Eb[s'\sim P(s,a)]{ \phi(s',\pi_e) }) }^2 } } 
\\&\leq  \eps(1+W),
\end{align*}
as desired.

\end{proof}

\section{Proofs for Representation Learning}

To simplify analysis, assume that the functions in $\mG$ have bounded $\ell_2$ norm, i.e. $\forall g \in \mG, s, a \in \mS \times \mA, \|g(s,a)\|_2 \leq \gamma$. This is reasonable, and can always be achieved by clipping without loss of accuracy, since the target for $g(s,a)$ is $\gamma \Eb[s'\sim P(s,a)]{\phi(s',\pi_e)}$ and $\|\phi(s,a)\|_2 \leq 1$ for any $s,a\in\mS\times\mA$.

\subsection{Lemmas}
Recall that $\lambda_k(A)$ denotes the $k$-th largest eigenvalue of a matrix $A$, i.e. $\lambda_1(A),\lambda_n(A)$ give the largest and smallest eigenvalues respectively. 

\begin{lemma}[Weyl's Perturbation Theorem]\label{lm:weyl-perturbation}
	Let $A, B \in \CC^{n \times n}$ be Hermitian matrices. Then
	\begin{align*}
		\max_k \abs{ \lambda_k(A) - \lambda_k(B) } \leq \nm{A - B}_2.
	\end{align*}
\end{lemma}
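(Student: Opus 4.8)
The plan is to prove this classical result via the variational (Courant--Fischer min-max) characterization of eigenvalues, which reduces the eigenvalue perturbation bound to the elementary fact that the Rayleigh quotient $x^* E x$ of a Hermitian matrix $E$ is bounded in magnitude by $\|E\|_2$ for unit vectors $x$. First I would recall the Courant--Fischer theorem in the form matching the convention $\lambda_1 \geq \cdots \geq \lambda_n$ used in the paper: for any Hermitian $A \in \mathbb{C}^{n \times n}$ and any index $k$,
\begin{align*}
\lambda_k(A) = \max_{\substack{S \subseteq \mathbb{C}^n \\ \dim S = k}} \ \min_{\substack{x \in S \\ \|x\|_2 = 1}} x^* A x.
\end{align*}
Setting $E = A - B$, which is again Hermitian, I would note that for every unit vector $x$ we have $-\|E\|_2 \leq x^* E x \leq \|E\|_2$, since the Rayleigh quotient of a Hermitian matrix lies between its smallest and largest eigenvalues, both of which are bounded in magnitude by the spectral norm.

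Next, I would fix $k$ and let $S^\star$ be a $k$-dimensional subspace attaining the maximum in the characterization of $\lambda_k(B)$, so that $\lambda_k(B) = \min_{x \in S^\star,\, \|x\|_2 = 1} x^* B x$. Using $S^\star$ as a (possibly suboptimal) candidate subspace in the max-min expression for $A$, together with the lower bound $x^* E x \geq -\|E\|_2$, gives
\begin{align*}
\lambda_k(A) \geq \min_{\substack{x \in S^\star \\ \|x\|_2 = 1}} x^* (B + E) x \geq \min_{\substack{x \in S^\star \\ \|x\|_2 = 1}} x^* B x - \|E\|_2 = \lambda_k(B) - \|E\|_2,
\end{align*}
so that $\lambda_k(B) - \lambda_k(A) \leq \|E\|_2$. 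Interchanging the roles of $A$ and $B$ and using $\|B - A\|_2 = \|A - B\|_2$ yields the reverse inequality $\lambda_k(A) - \lambda_k(B) \leq \|E\|_2$. Combining the two gives $|\lambda_k(A) - \lambda_k(B)| \leq \|A - B\|_2$, and taking the maximum over $k$ completes the argument.

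The argument is essentially routine once the min-max principle is in hand, so I would not expect a genuine obstacle. The only points requiring care are bookkeeping ones: evaluating $A$'s max-min over the single fixed subspace $S^\star$ can only decrease the value, so the inequality points in the correct direction, and the symmetry step must be applied cleanly to recover the two-sided bound from the one-sided one. I would also be deliberate about stating Courant--Fischer in the ``max over $k$-dimensional subspaces of the minimum Rayleigh quotient'' form consistent with $\lambda_k$ denoting the $k$-th \emph{largest} eigenvalue, so that the perturbation bound transfers uniformly across all indices $k$.
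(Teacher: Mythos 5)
Your proof is correct. The paper does not actually prove this lemma; it simply cites Corollary III.2.6 of Bhatia's \emph{Matrix Analysis}, and the argument behind that cited result is precisely the Courant--Fischer min-max argument you give: bound the Rayleigh quotient of the Hermitian perturbation $E = A - B$ by $\|E\|_2$, plug the optimal subspace for $\lambda_k(B)$ into the max-min characterization of $\lambda_k(A)$ to get the one-sided bound, and symmetrize. Your write-up handles the two points that need care (the inequality direction when restricting to a suboptimal subspace, and the attainment of the maximizing subspace, e.g.\ by the span of the top $k$ eigenvectors of $B$), so it stands as a correct self-contained replacement for the citation.
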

\begin{proof}
    Please see \citep[Corollary III.2.6]{bhatia2013matrix}.
\end{proof}

We extend this to be uniform over all $\Phi$.
\begin{lemma}[Uniform spectrum concentration]\label{lm:uniform-spectrum-concentrate}
For any $\delta \in (0, 1)$, w.p. $1-\delta$,
\begin{align*}
\sup_{\phi \in \Phi, k \in [d]} \abs{ \lambda_k( \Sigma(\phi) ) - \lambda_k( \wh\Sigma(\phi) ) } \leq \frac{1}{\sqrt{N}} \prns{ 96 \kappa(\Phi) + 4d + 4\log^{1/2}(1/\delta) }
\end{align*}
\end{lemma}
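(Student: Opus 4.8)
The plan is to reduce the spectral statement to a single uniform matrix-deviation bound and then control that deviation by an empirical-process argument. First I would invoke Weyl's Perturbation Theorem (\cref{lm:weyl-perturbation}): since $\Sigma(\phi)$ and $\wh\Sigma(\phi)$ are symmetric, for every fixed $\phi$ we have $\max_{k}\abs{\lambda_k(\Sigma(\phi)) - \lambda_k(\wh\Sigma(\phi))} \le \nm{\Sigma(\phi) - \wh\Sigma(\phi)}_2$. Taking the supremum over $\phi$ shows it suffices to bound
\begin{align*}
Z := \sup_{\phi \in \Phi}\nm{\Sigma(\phi) - \wh\Sigma(\phi)}_2 = \sup_{\phi \in \Phi,\, x \in B_1} \abs{\frac1N\sum_{i=1}^N (x\tr\phi(s_i,a_i))^2 - \Eb[\nu]{(x\tr\phi(s,a))^2}},
\end{align*}
where I have written the operator norm of the symmetric difference as a supremum of the quadratic form over the unit ball $B_1$, and used $\nm{\phi}_2\le 1$, $\nm{x}_2\le 1$ so that each summand $f_{\phi,x}(s,a) := (x\tr\phi(s,a))^2$ lies in $[0,1]$.

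The bounded range $[0,1]$ is exactly what lets me split $Z$ into a mean term and a fluctuation term in the right (additive) way. By a bounded-differences / McDiarmid argument --- changing one datapoint perturbs $Z$ by at most $1/N$ --- I obtain $Z \le \Eb{Z} + \sqrt{\log(1/\delta)/(2N)}$ with probability $1-\delta$, which already produces the additive $\tfrac{1}{\sqrt N}\log^{1/2}(1/\delta)$ contribution. It then remains to bound $\Eb{Z}$ by the complexity terms $\kappa(\Phi)$ and $d$.

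To bound $\Eb{Z}$ I would run the standard chaining pipeline on the \emph{joint} index set $\Phi\times B_1$. Symmetrization (\cref{lm:symmetrization}) replaces the centered process by the Rademacher process $\sup_{\phi,x}\abs{\bm{\sigma}\cdot \bm{f}_{\phi,x}}$ up to a factor $2$; contraction (\cref{lm:contraction}), using that $u\mapsto u^2$ is $2$-Lipschitz on $[-1,1]$, peels off the square and reduces the class to the linear scores $\bm{g}_{\phi,x} = (x\tr\phi(s_i,a_i))_{i}$; and finally the truncated chaining bound (\cref{lm:truncated-chaining}) controls $\nm{\sup_{\phi,x}\abs{\bm{\sigma}\cdot\bm{g}_{\phi,x}}}_\Psi$ by the Euclidean entropy integral of $\bm{g}(\Phi\times B_1)\subset\RR^N$. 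The crux is to cover this joint class by a product of covers: for fixed $x$, $\tfrac1N\sum_i (x\tr(\phi-\phi')(s_i,a_i))^2 \le \tfrac1N\sum_i\nm{(\phi-\phi')(s_i,a_i)}_2^2 \le 2\,d_\Phi(\phi,\phi')$ by $\nm{\phi}_2\le1$, so a $d_\Phi$-cover of $\Phi$ controls the $\phi$-direction while a Euclidean cover of $B_1$ (of log-size $\lesssim d\log(1/\cdot)$) controls the $x$-direction. Because $\sqrt{a+b}\le\sqrt a+\sqrt b$, the chaining integral splits additively: the $\Phi$-part integrates to $O(\kappa(\Phi))$ (via \cref{def:phi-g-ent-integral}) and the volumetric $B_1$-part integrates to $O(\sqrt d)\le O(d)$. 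Combining gives $\Eb{Z}\lesssim \tfrac1{\sqrt N}(\kappa(\Phi)+d)$, and adding the McDiarmid fluctuation yields the claimed bound after tracking constants.

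I expect the main obstacle to be this third step: achieving the \emph{additive} combination of the feature-class complexity $\kappa(\Phi)$ and the ambient dimension $d$, rather than the multiplicative $\kappa(\Phi)\cdot\sqrt d$ that a naive net-over-the-sphere-then-chain-over-$\Phi$ (or vice versa) would give. This forces one to chain over the product index set simultaneously and to split the joint covering number before taking square roots. A secondary subtlety is the metric mismatch: $\kappa(\Phi)$ is defined through the $L_1$-type metric $d_\Phi$, whereas chaining needs Euclidean increments of the score vectors; the boundedness $\nm{\phi}_2\le1$ bridges the two (at the cost of only constant factors for finite or polynomially-growing covering numbers), and separating the mean from the $\delta$-dependence via bounded differences is precisely what keeps the final bound additive in $\log^{1/2}(1/\delta)$.
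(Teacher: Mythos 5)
Your proposal is correct and follows essentially the same route as the paper's proof: Weyl's perturbation theorem, rewriting the operator norm as a quadratic-form supremum over the joint index set $\Phi \times B_1$, concentration of the supremum around its mean, and a product-cover chaining argument that splits the entropy additively into the $\kappa(\Phi)$ and $\sqrt{d}$ contributions rather than multiplicatively. The only cosmetic difference is that the paper folds the Lipschitz (contraction) step into a single $L_2(\mD)$ covering bound for the squared-score class and then invokes Dudley's entropy integral together with functional Hoeffding, whereas you run symmetrization, contraction, and chaining explicitly; your handling of the $L_1$-versus-$L_2$ mismatch in the metric $d_\Phi$ is, if anything, more careful than the paper's.
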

\begin{proof}
First observe that,
\begin{align*}
    \sup_{\phi \in \Phi} \sup_{k} \abs{ \lambda_k( \Sigma(\phi) ) - \lambda_k( \wh\Sigma(\phi) ) }
    &\leq \sup_{\phi \in \Phi} \nm{ \Sigma(\phi) - \wh\Sigma(\phi) }_2
    \\&= \sup_{\phi \in \Phi, \nm{x}_2 \leq 1} x\tr (\Sigma(\phi) - \wh\Sigma(\phi)) x
    \\&= \sup_{\phi \in \Phi, \nm{x}_2 \leq 1} (\EE_\nu - \EE_\mD) (x\tr \phi(s,a))^2
\end{align*}
Now we want to bound the Rademacher complexity of the class
\begin{align*}
    \mF = \braces{ (s,a) \mapsto (x\tr \phi(s,a))^2, \phi \in \Phi, \nm{x}_2 \leq 1 }
\end{align*}
First, to bound the envelope, we have $(x\tr \phi(s,a))^2 \leq 1$.
To cover, consider any $\phi \in \Phi$ and $x \in \RR^d$ s.t. $\nm{x}_2 \leq 1$. Pick $\wt\phi, \wt x$ close to $\phi, x$, so that 
\begin{align*}
    &\sqrt{ \frac{1}{N} \sum_{i=1}^N \prns{ (x\tr \phi(s_i,a_i))^2 - (\wt x\tr \wt \phi(s_i,a_i))^2 }^2 }
    \\&\leq 2 \sqrt{ \frac{1}{N} \sum_{i=1}^N \prns{ (x\tr \phi(s_i,a_i) - \wt x\tr \wt \phi(s_i,a_i)) }^2 }
    \\&\leq 2 \prns{ \sqrt{ \frac{1}{N} \sum_{i=1}^N \prns{ x\tr (\phi(s_i,a_i) - \wt \phi(s_i,a_i)) }^2 } + \sqrt{ \frac{1}{N} \sum_{i=1}^N \prns{ (x-\wt x)\tr \wt \phi(s_i,a_i) }^2 } }
    \\&\leq 2 \prns{ d_\Phi(\phi,\wt\phi) + \nm{x-\wt x}_2 }
\end{align*}
So it suffices to take $d_\Phi(\phi,\wt\phi), \nm{x-\wt x}_2 \leq t/4$ to $t$-cover $\mF$ in $L_2(\mD)$. Note the $t/4$-covering number for $x$ in the unit ball is $(1+8/t)^d$. Thus, by Dudley's entropy bound ((5.48) of \citep{wainwright2019high}),
\begin{align*}
    \mR_N(\mF) 
    &\leq \frac{24}{\sqrt{N}} \int_0^1 \log^{1/2}( \mN(t/4, \Phi) \cdot ( 1+8/t )^d ) \diff t
    \\&\leq \frac{96}{\sqrt{N}} \prns{ \kappa(\Phi) + 4\sqrt{d} }
\end{align*}
Thus, by Theorem 4.10 of \citep{wainwright2019high}, w.p. $1-\delta$,
\begin{align*}
    \sup_{\phi \in \Phi, \nm{x}_2 \leq 1} (\EE_\nu - \EE_\mD) (x\tr \phi(s,a))^2 
    &\leq 2\mR_N(\mF) + \frac{4\log^{1/2}(1/\delta)}{\sqrt{N}} \\
    &\leq \frac{1}{\sqrt{N}} \prns{ 96 \kappa(\Phi) + 4\sqrt{d} + 4\log^{1/2}(1/\delta) }
\end{align*}
\end{proof}

We now prove the double sampling lemma, i.e. modified Bellman Residual Minimization \citep{chen2019information}. This will help deal with the double sampling issue when transitions are stochastic. 
Recall that $\mG$ is a function class of functions $g: \mX \mapsto \RR^d$.
Let $\nu$ be a distribution over $\mX \subset \RR^d$ and, for any $x \in \mX$, let $P(x)$ be a distribution over $\mY \subset \RR^d$.
\begin{lemma}[Double Sampling]\label{lm:double-sampling}
Suppose $x \mapsto \Eb[y \sim P(x)]{y} \in \mG$. Then,
\begin{align*}
    \Eb[x \sim \nu]{\nm{x - \Eb[y \sim P(x)]{y}}_2^2} = \Eb[x \sim \nu, y \sim P(x)]{\nm{x - y}_2^2} - \inf_{g \in \mG} \Eb[x \sim \nu, y \sim P(x)]{\nm{g(x) - y}_2^2}
\end{align*}
\end{lemma}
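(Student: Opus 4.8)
The plan is to reduce the identity to two applications of the bias--variance (Pythagorean) decomposition, keyed off the conditional mean $m(x) \defeq \Eb[y \sim P(x)]{y}$, which by hypothesis lies in $\mG$. The whole proof is essentially an exercise in conditioning on $x$ and exploiting that $m(x)$ is the $L_2$-projection of $y$ onto functions of $x$.

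First I would evaluate the infimum on the right-hand side. For any fixed $g \in \mG$, expanding $\nm{g(x) - y}_2^2 = \nm{g(x) - m(x)}_2^2 + 2\langle g(x) - m(x),\, m(x) - y\rangle + \nm{m(x) - y}_2^2$ and taking the inner expectation over $y \sim P(x)$ conditioned on $x$, the cross term vanishes because $\Eb[y \sim P(x)]{m(x) - y} = 0$. Hence $\Eb[x,y]{\nm{g(x) - y}_2^2} = \Eb[x]{\nm{g(x) - m(x)}_2^2} + \Eb[x,y]{\nm{m(x) - y}_2^2}$. Since $m \in \mG$, choosing $g = m$ drives the first (nonnegative) term to zero, so the infimum over $\mG$ equals $\Eb[x,y]{\nm{m(x) - y}_2^2}$, the irreducible conditional variance.

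Next I would apply the identical decomposition with the raw input $x$ playing the role of $g(x)$: writing $\nm{x - y}_2^2 = \nm{x - m(x)}_2^2 + 2\langle x - m(x),\, m(x) - y\rangle + \nm{m(x) - y}_2^2$ and again killing the cross term in the conditional expectation over $y$, I get $\Eb[x,y]{\nm{x - y}_2^2} = \Eb[x]{\nm{x - m(x)}_2^2} + \Eb[x,y]{\nm{m(x) - y}_2^2}$. Subtracting the infimum computed in the previous step cancels the common variance term $\Eb[x,y]{\nm{m(x) - y}_2^2}$ and leaves exactly $\Eb[x]{\nm{x - m(x)}_2^2}$, which is the claimed left-hand side.

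There is no serious obstacle here; the only point worth flagging is that the cancellation of the variance term in the infimum genuinely requires the realizability assumption $m \in \mG$. Without it the infimum would retain an approximation-error term $\inf_{g \in \mG}\Eb[x]{\nm{g(x) - m(x)}_2^2}$, and the identity would degrade to an inequality. I would also remark that both cross terms vanish by the tower property (integrating over $y$ before $x$), and that all expectations are finite under the standing boundedness assumptions, so the rearrangements are justified.
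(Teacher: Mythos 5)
Your proposal is correct and takes essentially the same route as the paper's proof: both arguments hinge on the bias--variance (Pythagorean) decomposition around the conditional mean $x \mapsto \Eb[y \sim P(x)]{y}$, with the cross terms killed by conditioning on $x$, and both use the realizability hypothesis to identify the infimum over $\mG$ with the conditional-variance term. The only cosmetic difference is organizational — you decompose the two expectations separately and subtract, whereas the paper expands the difference $\Eb[x \sim \nu, y \sim P(x)]{\nm{x-y}_2^2} - \Eb[x \sim \nu]{\nm{x - \Eb[y \sim P(x)]{y}}_2^2}$ directly and then asserts (rather than verifies, as you do) that the conditional mean minimizes the regression loss.
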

\begin{proof}
\begin{align*}
&\Eb[x \sim \nu, y \sim P(x)]{\nm{x - y}_2^2} - \Eb[x \sim \nu]{\nm{x - \Eb[y \sim P(x)]{y}}_2^2}
\\&= \Eb[x \sim \nu]{ \Eb[y \sim P(x)]{\nm{x}_2^2 - 2\ip{x}{y} + \nm{y}_2^2} - \prns{ \nm{x}_2^2 - 2\ip{x}{\Eb[y\sim P(x)]{y}} + \nm{\Eb[y\sim P(x)]{y}}_2^2 } } 
\\&= \Eb[x\sim \nu]{ \Eb[y\sim P(x)]{\nm{y}_2^2} - \nm{\Eb[y \sim P(x)]{y}}_2^2 }
\\&= \Eb[x\sim \nu]{ \Eb[y \sim P(x)]{ \nm{y - \Eb[y \sim P(x)]{y}}_2^2 } }
\end{align*}
where the last step uses the fact that $\nm{\Eb[y \sim P(x)]{y}}_2^2 = \Eb[y \sim P(x)]{\ip{y}{\Eb[y\sim P(x)]{y}}}$, and completing the square.
Now, observe that, assuming $g\opt(x) \mapsto \Eb[y \sim P(x)]{y} \in \mG$, we have that it is the minimizer of,
\begin{align}
    g\opt \in \argmin_{g\in\mG} \Eb[x \sim \nu, y \sim P(x)]{\nm{g(x) - y}_2^2}
\end{align}
which completes the proof.
\end{proof}

\subsection{Concentration lemmas}

For any $\phi$, define the optimal $\rho, M, g$ for our losses as follows:
\begin{align*}
    &\rho_\phi \in \argmin_{(\rho,\_)\in\Theta} \Eb[\nu]{ (\rho\tr \phi(s,a) - r(s,a))^2 }
    \\&M_\phi \in \argmin_{(\_,M)\in\Theta} \Eb[\nu\circ P]{ \|M\phi(s,a) - \gamma\phi(s',\pi_e)\|_2^2 }
    \\&g_\phi \in \argmin_{g \in \mG} \Eb[\nu\circ P]{ \|g(s,a) - \gamma\phi(s',\pi)\|_2^2 }
\end{align*}
Similarly, define $\wh\rho_\phi, \wh M_\phi, \wh g_\phi$ to be minimizers of the above losses when expectation is taken over the empirical distribution $\mD$, instead of the population distribution $\nu$.
Observe that the unconstrained minimization yields a closed form solution for $g_\phi$ as $g_{\phi}(s,a) = \gamma\Eb[s'\sim P(s,a)]{\phi(s',\pi)}$ -- \cref{asm:g-realizability} posits that $\mG$ is rich enough to capture this.

The key property of our squared losses is that the second moment can be upper bounded by the expectation, which allows us to invoke the second part of the above \cref{lm:modified-bernstein}.
We now combine this with covering to get uniform convergence results.

\begin{lemma}\label{lm:rep-learning-fast-rates}
For any $\delta\in(0,1)$, w.p. at least $1-\delta$, for any $\phi\in\Phi,\rho\in B_W$ and $M\in\RR^{d\times d}$ with $\|M\|_2 \leq 1$, we have
\begin{align*}
    &\abs{ \Eb[\mD]{ (\rho\tr\phi(s,a) - r(s,a))^2 }  -  \Eb[\nu]{ (\rho\tr\phi(s,a) - r(s,a))^2 } } \leq \frac{1}{2} \Eb[\nu]{ (\rho\tr\phi(s,a) - r(s,a))^2 } + \eps_\rho,
    \\&\abs{ \Eb[\mD]{ \| M\phi(s,a) - \gamma\phi(s',\pi) \|_2^2 - \|g_\phi(s,a) - \gamma\phi(s',\pi) \|_2^2 } - \Eb[\nu\circ P]{ \| M\phi(s,a) - \gamma\phi(s',\pi) \|_2^2 - \|g_\phi(s,a) - \gamma\phi(s',\pi) \|_2^2 } } 
	\\& \leq \frac{1}{2} \Eb[\nu\circ P]{ \| M\phi(s,a) - \gamma\phi(s',\pi) \|_2^2 - \|g_\phi(s,a) - \gamma\phi(s',\pi) \|_2^2 } + \eps_M,
\end{align*}
and, assuming realizability (\cref{asm:g-realizability}), for every $g \in \mG$, we have
\begin{align*}
    &\abs{ \Eb[\mD]{ \|g(s,a) - \gamma\phi\opt(s',\pi)\|_2^2 - \|g_{\phi\opt}(s,a) - \gamma\phi\opt(s',\pi)\|_2^2 } - \Eb[\nu]{\|g(s,a)-g_{\phi\opt}(s,a)\|_2^2} }
	\\&\leq \frac{1}{2} \Eb[\nu]{\|g(s,a)-g_{\phi\opt}(s,a)\|_2^2}  + \eps_g,
\end{align*}
where $\eps_\rho,\eps_M,\rho_g$ are defined below.
\end{lemma}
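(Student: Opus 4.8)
The plan is to establish each of the three bounds by the same two-step recipe. First, for a \emph{fixed} choice of parameters, I verify that the relevant per-sample loss (or loss-difference) $X$ is bounded and has second moment controlled by its mean, $\Eb{X^2}\le L\,\Eb{X}$, so that the self-bounding branch of \cref{lm:modified-bernstein} yields a deviation of the form $\frac12\Eb{X}+\frac{(c+L)\log(2/\delta)}{N}$. Second, I upgrade to a uniform statement by a union bound over the finite classes $\Phi$ and $\mG$, combined with a Euclidean $\zeta$-net of the ball $B_W$ for $\rho$ and an operator-norm $\zeta$-net of $\{\|M\|_2\le1\}$ for $M$, absorbing the discretization into the error via Lipschitzness of the squared loss. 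The constants $\eps_\rho,\eps_M,\eps_g$ collect the resulting $(c+L)(\log(\cdot)+d\log(\cdot))/N$ contributions together with the net resolution, and I take each of the three events to hold with probability $1-\delta/3$.

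For the reward term, set $X=(\rho\tr\phi(s,a)-r(s,a))^2\ge0$. Since $|\rho\tr\phi|\le W$ and $|r|\le1$ we have $X\le(1+W)^2$ almost surely, hence $\Eb{X^2}\le(1+W)^2\Eb{X}$; thus $L=c=(1+W)^2$ and the fixed-parameter bound is immediate. Uniformity follows by union bounding over the $|\Phi|$ features and a $\zeta$-net of $B_W$ of size $(1+2W/\zeta)^d$, using the Lipschitz bound $|(\rho\tr\phi-r)^2-(\tilde\rho\tr\phi-r)^2|\le 2(1+W)\|\rho-\tilde\rho\|_2$ to control the net error inside $\eps_\rho$.

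The two transition terms are excess-risk differences and require the double-sampling identity. Writing $u=M\phi(s,a)$, $w=g_\phi(s,a)$, $v=\gamma\phi(s',\pi_e)$, the key observation, exactly as in \cref{lm:double-sampling}, is that under \cref{asm:g-realizability} the minimizer $w=g_\phi(s,a)=\gamma\Eb[s'\sim P(s,a)]{\phi(s',\pi_e)}$ is the conditional mean of $v$, so $\Eb[s']{X\mid s,a}=\Eb[s']{\|u-v\|_2^2-\|w-v\|_2^2\mid s,a}=\|u-w\|_2^2\ge0$. Hence $\Eb{X}=\Eb[\nu]{\|M\phi-g_\phi\|_2^2}$, which matches the right-hand sides of the lemma. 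Using the factorization $X=\ip{u-w}{u+w-2v}$ together with $\|u\|_2\le1$, $\|w\|_2\le\gamma$, $\|v\|_2\le\gamma$ gives $|X|\le4\|u-w\|_2\le8$ and $\Eb{X^2}\le16\,\Eb[\nu]{\|u-w\|_2^2}=16\,\Eb{X}$, so $L=16$ and the self-bounding Bernstein applies. The identical computation with $u=g(s,a)$ and $\phi=\phi\opt$ handles the $g$ term, where realizability guarantees $g_{\phi\opt}$ is the true conditional mean. I then make these uniform by union bounding over $\Phi$ and an operator-norm $\zeta$-net of $\{\|M\|_2\le1\}$ of size $(1+2/\zeta)^{d^2}$ for the $M$ term (contributing the $d^2$ factor via the Lipschitz bound $|X(M)-X(\tilde M)|\le4\|M-\tilde M\|_2$, the $g_\phi$ piece cancelling), and over the finite class $\mG$ for the $g$ term with $\phi\opt$ fixed.

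The main obstacle is the variance-to-mean control for the loss \emph{differences}: a squared-loss difference is only bounded a priori, which would give slow fixed-parameter rates rather than the self-bounding $\frac12\Eb{X}$ form needed for the final $\wt\mO(N^{-1/2})$ guarantee. The resolution is precisely the conditional-mean cancellation above — the cross term $\ip{u-w}{w-v}$ has zero conditional expectation because $w$ is the Bayes-optimal regressor — which simultaneously makes $\Eb{X}$ a genuine nonnegative excess risk and forces $\Eb{X^2}\le16\,\Eb{X}$. Everything else (the nets, the Lipschitz discretization, and the final union bound) is routine bookkeeping that fixes the explicit forms of $\eps_\rho,\eps_M,\eps_g$.
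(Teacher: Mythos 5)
Your proposal matches the paper's proof essentially step for step: the same self-bounding branch of \cref{lm:modified-bernstein} applied to the per-sample (excess) losses, the same conditional-mean/double-sampling cancellation (\cref{lm:double-sampling}, \cref{asm:g-realizability}) to obtain $\Eb{X^2}\lesssim \Eb{X}$ for the loss differences, and the same union bounds over $\Phi$, $\mG$, and a net of $B_W$, with matching envelopes and variance constants. The only cosmetic difference is that you cover $\{\|M\|_2\le 1\}$ directly with an operator-norm net of size $(1+2/\zeta)^{d^2}$, whereas the paper passes through a Frobenius-norm net via $\|M\|_2\le\|M\|_F\le\sqrt{d}\,\|M\|_2$; both yield the same $d^2\log(\cdot)$ contribution to $\eps_M$.
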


\paragraph{Finite function classes} 
Assuming $\Phi$ and $\mG$ are finite, we have
\begin{align*}
    &\eps_\rho \leq \frac{ 6d(1+W)^2 \log(4W \abs{\Phi} N /\delta)}{N}
    \\&\eps_M \leq \frac{32 d^2 \log(4 \abs{\Phi} N/\delta)}{N}
    \\&\eps_g \leq \frac{20\gamma^2\log(2\abs{\mG}/\delta)}{N}.
\end{align*}

\begin{proof}[Proof for $\eps_\rho$]
    For a fixed $\phi \in \Phi, \rho \in B_W$, apply \cref{lm:modified-bernstein} to $X_i = (\rho\tr\phi(s_i,a_i) - r(s_i,a_i))^2$.
	The envelope is $\abs{X_i} \leq (1+W)^2$ and 
	the second moment is bounded $\Eb{X_i^2} \leq (1+W)^2 \Eb{X_i}$.
	So, the error from the lemma is $\frac{ 2(1+W)^2 \log(2/\delta)}{N}$.
	Now union bound over an $\eps$-net of $B_W$.
	Since $\abs{ (\rho\tr\phi(s,a) - r(s,a))^2 - (\wt\rho\tr\phi(s,a) - r(s,a))^2 } = \abs{ (\rho-\wt\rho)\tr\phi(s,a) ((\rho+\wt\rho)\tr\phi(s,a) + 2r(s,a)) } \leq 2(1+W) \| \wt\rho - \rho \|_2$, we consider a $\frac{1}{N}$-net of $B_W$ which requires $\prns{1 + 2WN}^d$ points. 
	The error from this $\eps$-net approximation is at most $\frac{4(1+W)}{N}$.
	Finally, union bound over $\Phi$.
\end{proof}
\begin{proof}[Proof for $\eps_M$]
    For a fixed $\phi \in \Phi$ and $M \in \RR^{d \times d}$ s.t. $\|M\|_2 < 1$, apply \cref{lm:modified-bernstein} to $X_i = \|a\|_2^2 - \|b\|_2^2$ where $a = M\phi(s_i, a_i) - \gamma\phi(s_i', \pi), b = g_\phi(s_i,a_i) - \gamma\phi(s_i', \pi)$.
	The envelope is $\abs{X_i} = \abs{X_i} \leq \|a\|_2^2 + \|b\|_2^2 \leq (1 + \gamma)^2 + (2\gamma)^2 \leq 8$. Further, observe that $\|a\|_2^2-\|b\|_2^2 = \langle a+b,a-b\rangle \leq \|a+b\|_2\|a-b\|_2$. So, the second moment is bounded $\Eb{X_i^2} \leq \Eb{\|a+b\|_2^2 \|a-b\|_2^2} \leq (1+3\gamma)^2 \Eb{\|M\phi(s_i, a_i) - \gamma g_\phi(s_i, a_i)\|_2^2} \leq 16 \Eb{X_i}$, where we used \cref{lm:double-sampling} to give us
	\begin{align*}
	   \Eb{\|M\phi(s_i, a_i) - \gamma g_\phi(s_i, a_i)\|_2^2} = \Eb{\|M\phi(s_i, a_i) - \gamma\phi(s_i', \pi)\|_2^2 - \|g_\phi(s_i,a_i) - \gamma\phi(s_i', \pi)\|_2^2 }.
	\end{align*}
	So, the error from the lemma is $\frac{24\log(2/\delta)}{N}$. 
	Now union bound over an $\eps$-net of $\braces{M\in\RR^{d\times d}: \|M\|_2 \leq 1}$.
	Observe that
	\begin{align*}
		&\abs{ \prns{ \| M\phi(s,a) - \gamma\phi(s',\pi) \|_2^2 - \|g_\phi(s,a) - \gamma\phi(s',\pi) \|_2^2 } - \prns{ \| \wt M\phi(s,a) - \gamma\phi(s',\pi) \|_2^2 - \|g_{\phi}(s,a) - \gamma\phi(s',\pi) \|_2^2 } }
		\\&\leq \nm{(M-\wt M)\phi(s_i,a_i)}_2 \nm{ M\phi(s_i,a_i) + \wt M\phi(s_i,a_i) - 2\gamma\phi(s_i', \pi) }_2
		\\&\leq \nm{(M-\wt M)}_2 \cdot (2+2\gamma) \leq 4\nm{(M-\wt M)}_2.
	\end{align*}
	Consider a $\frac{1}{N}$-net (under $\|\|_F$) for $\{M \in \RR^{d \times d}: \|M\|_F \leq \sqrt{d} \}$, which requires $(1+2N\sqrt{d})^{d^2}$ points since it is like the $\ell_2$ for a $d^2$-dimensional vector.
	This is a $\frac{1}{N}$-net (under $\|\|_2$) for the subset $\braces{M\in\RR^{d\times d}: \|M\|_2 \leq 1}$ since $\|M\|_2 \leq \|M\|_F \leq \sqrt{d} \|M\|_2$. 
	The error from this $\eps$-net approximation is at most $\frac{8}{N}$.
	Finally, union bound over $\Phi$.
\end{proof}
\begin{proof}[Proof for $\eps_g$]
    For a fixed $g \in \mG$, apply \cref{lm:modified-bernstein} to $X_i = \|g(s_i,a_i)-\gamma\phi\opt(s_i',\pi)\|_2^2 - \|g_{\phi\opt}(s,a) - \gamma\phi\opt(s',\pi)\|_2^2$, the excess regression loss.
	Under realizability \cref{asm:g-realizability}, we have $\Eb{X_i} = \nm{g - g_{\phi\opt}}_2^2$, since
	\begin{align*}
		\Eb{X_i} 
		&= \Eb{\nm{ g(s,a) -  \gamma\phi\opt(s',\pi) }_2^2 - \nm{ g_{\phi\opt}(s,a) - \gamma\phi\opt(s',\pi) }_2^2}
		\\&= \Eb{ \nm{g(s,a) - g_{\phi\opt}(s,a)}_2^2 + 2\langle g_{\phi\opt}(s,a) - \gamma\phi\opt(s',\pi), g(s,a) - g_{\phi\opt}(s,a) \rangle  }
		\intertext{By definition of $g_{\phi^\star}$, we have $\Eb[s' \sim P(s,a)]{g_{\phi\opt}(s,a) - \gamma\phi\opt(s',\pi)} = 0$, so, }
		&= \Eb{ \nm{g(s,a) - g_{\phi\opt}(s,a)}_2^2 }
	\end{align*}
	The envelope is $\abs{X_i} \leq (2\gamma)^2$ and the second moment is bounded $\Eb{X_i^2} \leq \Eb{\|g(s,a) + g_{\phi\opt}(s,a) - 2\gamma\Eb[a' \sim \pi(s')]{\phi\opt(s',a')}\|_2^2 \|g(s,a) - g_{\phi\opt}(s,a)\|_2^2 } \leq (4\gamma)^2 \Eb{X_i}$.
	So the error term from the lemma is $\frac{20\gamma^2\log(2/\delta)}{N}$.
	Now union bound over $\mG$.
\end{proof}

\paragraph{Infinite function classes}
When $\Phi$ and $\mG$ are infinite, we need to assume some metric entropy conditions. Then in the final step of the finite-class proofs above, we union bound on a well-chosen $\eps$-net and collect an additional approximation error which is on the order of $\mO(1/N)$.

\begin{assumption}\label{asm:entropy}
For $\mF\in\braces{\Phi,\mG}$, we assume there exists $p\in\RR_{++}$ such that $\mN(t,\mF) \lesssim t^{-p}$, where the net is under the following distances,
\begin{align*}
    d_\Phi(\phi, \wt \phi) &\defeq \Eb[\mD]{\nm{ \phi(s,a) - \wt\phi(s,a) }_2} + \Eb[\nu]{\nm{ \phi(s,a) - \wt\phi(s,a) }_2} 
    \\&+ \Eb[\mD]{\nm{ \phi(s',\pi) - \wt\phi(s',\pi) }_2} + \Eb[s,a \sim \mD, s' \sim P(s,a)]{\nm{ \phi(s',\pi) - \wt\phi(s',\pi) }_2} 
    \\&+ \Eb[s,a \sim \nu, s' \sim P(s,a)]{\nm{ \phi(s',\pi) - \wt\phi(s',\pi) }_2}
    \\ d_\mG(g, \wt g) &\defeq \sqrt{ \Eb[\mD]{ \nm{ g(s_i,a_i) - \wt g(s_i,a_i) }_2^2 } } + \sqrt{ \Eb[\nu]{ \nm{ g(s,a) - \wt g(s,a) }_2^2 } }
\end{align*}
\end{assumption}
Note that this assumption is automatically satisfied for any $p>0$ by VC classes \citep[Theorem 2.6.4]{vandervaart1996weak}.
Under this assumption, we have
\begin{align*}
    &\eps_\rho \lesssim \frac{ d(1+W)^2 (1+p) \log(W N /\delta)}{N}
    \\&\eps_M \lesssim \frac{ d^2 (1+p) \log(N/\delta)}{N}
    \\&\eps_g \lesssim \frac{\gamma^2p\log(N/\delta)}{N}.
\end{align*}

\begin{proof}[Proof for $\eps_\rho$]
From before, we showed for a fixed $\phi\in\Phi$, we have $\eps_\rho \lesssim \frac{d(1+W)^2\log(WN/\delta)}{N}$. Observe that $\abs{ (\rho\tr\phi(s,a)-r(s,a))^2 - (\rho\tr\wt\phi(s,a)-r(s,a))^2 }= \abs{ \rho\tr(\phi(s,a)-\wt\phi(s,a)) (\rho\tr(\phi(s,a)+\wt\phi(s,a)) + 2r(s,a)) } \leq 2W(1+W) \|\phi(s,a)-\wt\phi(s,a)\|_2$, and so the difference of the loss with $\phi$ and the loss with $\wt\phi$ is bounded by $2W(1+W)d_\Phi(\phi,\wt\phi)$. Now union bound over a $\frac{1}{N}$-net, which requires $\mO(N^p)$ points by \cref{asm:entropy}. The approximation error using the net is $\frac{2W(1+W)}{N}$.
\end{proof}
\begin{proof}[Proof for $\eps_M$]
From before, we showed for a fixed $\phi\in\Phi$, we have $\eps_M \lesssim \frac{d^2\log(N/\delta)}{N}$. Observe that
\begin{align*}
    &\abs{ \prns{ \| M\phi(s,a) - \gamma\phi(s',\pi) \|_2^2 - \|g_\phi(s,a) - \gamma\phi(s',\pi) \|_2^2 } - \prns{ \| M\wt\phi(s,a) - \gamma\wt\phi(s',\pi) \|_2^2 - \|g_{\wt\phi}(s,a) - \gamma\wt\phi(s',\pi) \|_2^2 } }
	\\&\leq \|M(\phi(s,a)-\wt\phi(s,a))-\gamma(\phi(s',\pi)-\wt\phi(s',\pi))\| \|M(\phi(s,a)+\wt\phi(s,a)) - \gamma(\phi(s',\pi)-\wt\phi(s',\pi))\|
	\\&+ \|g_\phi(s,a) - g_{\wt\phi}(s,a) - \gamma(\phi(s',\pi)-\wt\phi(s',\pi))\| \|g_\phi(s,a)+g_{\wt\phi}(s,a) - \gamma(\phi(s',\pi)+\wt\phi(s',\pi)) \|
	\\&\leq \prns{ \|\phi(s,a)-\wt\phi(s,a)\| + \gamma \|\phi(s',\pi)-\wt\phi(s',\pi)\| } \cdot (2+2\gamma) 
	\\&+ \prns{ \|g_\phi(s,a)-g_{\wt\phi}(s,a)\| + \gamma \|\phi(s',\pi)-\wt\phi(s',\pi)\| } \cdot (2+2\gamma)
	\intertext{Using closed form solution for $g_\phi$, }
	&\leq 16 d_\Phi(\phi,\wt\phi).
\end{align*}
Now union bound over a $\frac{1}{N}$-net, which requires $\mO(N^p)$ points by \cref{asm:entropy}. The approximation error using the net is at most $\frac{16}{N}$.
\end{proof}
\begin{proof}[Proof for $\eps_g$]
From before, we showed for a fixed $g\in\mG$, we have $\eps_g \lesssim \frac{\gamma^2\log(1/\delta)}{N}$. Observe that
\begin{align*}
    &\abs{ \|g(s,a)-\gamma\phi^\star(s',\pi)\|^2 - \|\wt g(s,a)-\gamma\phi^\star(s',\pi)\|^2 }
    \\&\leq \|g(s,a)-\wt g(s,a)\|\|g(s,a)+\wt g(s,a) - 2\gamma\phi^\star(s',\pi)\|
    \\&\leq (2+2\gamma) d_\mG(g,\wt g).
\end{align*}
Now union bound over a $\frac{1}{N}$-net, which requires $\mO(N^p)$ points by \cref{asm:entropy}. The approximation error using the net is at most $\frac{4}{N}$. 
\end{proof}

\subsection{Main Results}

\begin{lemma}\label{lm:fast-rates-rep-learning-bound}
    Suppose $\phi\opt \in \Phi$ is Linear BC.
	Suppose \cref{asm:g-realizability} if transitions are stochastic. 
	Moreover, suppose $\phi\opt$ is feasible in the bilevel optimization (and so $\wh\mL_{lbc}(\wh\phi) \leq \wh\mL_{lbc}(\phi\opt)$).
	Then, w.p. $1-5\delta$,
	\begin{align*}
	    \mL_{lbc}(\wh\phi) \leq \frac{24d(1+W)^2\log(4W|\Phi| N/\delta) + 128 d^2 \log(4|\Phi|N/\delta) + 40\gamma^2 \log(2|\mG|/\delta)}{N}
	\end{align*}
\end{lemma}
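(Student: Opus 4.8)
The plan is to run a localized ``fast-rates'' argument that converts the empirical optimality of $\wh\phi$ into a population bound, exploiting that $\phi\opt$ has \emph{zero} population loss. I first observe that $\Theta$ is a product set and the bilevel objective separates into a reward block and a transition block, so $\mL_{lbc}(\phi) = R(\phi) + T(\phi)$ with $R(\phi) := \min_\rho \Eb[\nu]{(\rho\tr\phi - r)^2}$ and $T(\phi) := \min_M \Eb[\nu]{\nm{M\phi - g_\phi}_2^2}$, where $g_\phi(s,a) := \gamma\Eb[s'\sim P(s,a)]{\phi(s',\pi)}$; likewise $\wh\mL_{lbc}(\phi) = \wh R(\phi) + \wh T(\phi)$ for the empirical (double-sampling-corrected) versions. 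The target then follows from the chain
\[ \mL_{lbc}(\wh\phi) \le 2\,\wh\mL_{lbc}(\wh\phi) + 2\eps_\rho + 2\eps_M \le 2\,\wh\mL_{lbc}(\phi\opt) + 2\eps_\rho + 2\eps_M \le 4\eps_\rho + 4\eps_M + 2\eps_g, \]
where the middle inequality is the assumed empirical optimality $\wh\mL_{lbc}(\wh\phi) \le \wh\mL_{lbc}(\phi\opt)$. Substituting the finite-class values of $\eps_\rho,\eps_M,\eps_g$ from \cref{lm:rep-learning-fast-rates} produces exactly the constants $24,128,40$.

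For the reward block I use the first bound of \cref{lm:rep-learning-fast-rates}. Its lower side, applied at $\wh\phi$ with the empirical minimizer $\wh\rho_{\wh\phi}$, gives $\Eb[\nu]{(\wh\rho_{\wh\phi}\tr\wh\phi - r)^2} \le 2\wh R(\wh\phi) + 2\eps_\rho$, and since the population minimizer $\rho_{\wh\phi}$ is no worse, $R(\wh\phi) \le 2\wh R(\wh\phi) + 2\eps_\rho$. Exact Linear BC gives $R(\phi\opt) = 0$, so the upper side evaluated at $\rho_{\phi\opt}$ gives $\wh R(\phi\opt) \le \tfrac32 R(\phi\opt) + \eps_\rho = \eps_\rho$.

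The transition block is the crux, owing to double sampling: $T(\phi)$ has the conditional expectation $\Eb[s']{\phi(s',\pi)}$ inside a square and is not directly estimable. Here I use the bias--variance identity of \cref{lm:double-sampling}, $\Eb[\nu]{\nm{M\phi - g_\phi}_2^2} = \Eb[\nu\circ P]{\nm{M\phi - \gamma\phi(s',\pi)}_2^2} - \Eb[\nu\circ P]{\nm{g_\phi - \gamma\phi(s',\pi)}_2^2}$, whose sampled form is precisely the second bound of \cref{lm:rep-learning-fast-rates} (valid since $\nm{\wh M_{\wh\phi}}_2 \le \nm{M\opt}_2 \le 1$). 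For $\wh\phi$, the lower side at $M = \wh M_{\wh\phi}$ controls $T(\wh\phi)$ by twice the empirical quantity formed with the population regressor $g_{\wh\phi}$; replacing $g_{\wh\phi}$ by the empirical minimizer $\wh g_{\wh\phi}$ only lowers the subtracted loss (pure optimality, no concentration), so this quantity is at most $\wh T(\wh\phi)$, giving $T(\wh\phi) \le 2\wh T(\wh\phi) + 2\eps_M$. For $\phi\opt$, \cref{prop:equivalent_lbc} with exact Linear BC forces $T(\phi\opt) = 0$ (take $M = M\opt$). I then split $\wh T(\phi\opt)$ by inserting $g_{\phi\opt}$ into the subtracted empirical $\mG$-loss: the first piece, $\Eb[\mD]{\nm{M_{\phi\opt}\phi\opt - \gamma\phi\opt(s',\pi)}_2^2 - \nm{g_{\phi\opt} - \gamma\phi\opt(s',\pi)}_2^2}$, is bounded by the upper side of the second bound (using $T(\phi\opt) = 0$) and is $\le \eps_M$; the second piece, $\Eb[\mD]{\nm{g_{\phi\opt} - \gamma\phi\opt(s',\pi)}_2^2 - \nm{\wh g_{\phi\opt} - \gamma\phi\opt(s',\pi)}_2^2}$, is exactly where the third (realizability) bound enters — applying it at $g = \wh g_{\phi\opt}$ shows this piece is $\le \eps_g$. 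Hence $\wh T(\phi\opt) \le \eps_M + \eps_g$.

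Assembling the two blocks gives $\wh\mL_{lbc}(\phi\opt) = \wh R(\phi\opt) + \wh T(\phi\opt) \le \eps_\rho + \eps_M + \eps_g$, and feeding this into the chain of the first paragraph produces $\mL_{lbc}(\wh\phi) \le 4\eps_\rho + 4\eps_M + 2\eps_g$, which is the claim after substituting the finite-class values; the $1-5\delta$ comes from a union bound over the concentration events of \cref{lm:rep-learning-fast-rates}. I expect the main obstacle to be the bookkeeping in the transition block — tracking which direction of each fast-rate inequality to invoke, and correctly handling the mismatch between the population next-feature regressor $g_\phi$ (which appears after the double-sampling identity) and the empirical regressor $\wh g_\phi$ (which appears inside $\wh\mL_{lbc}$): for $\wh\phi$ the swap is free by optimality, whereas for $\phi\opt$ it must be paid for through the realizability-based third bound.
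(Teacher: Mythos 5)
Your proof is correct and follows essentially the same route as the paper's: both pivot on the empirical optimality $\wh\mL_{lbc}(\wh\phi) \le \wh\mL_{lbc}(\phi\opt)$, invoke the three fast-rate bounds of \cref{lm:rep-learning-fast-rates} (with the double-sampling identity of \cref{lm:double-sampling} and realizability to handle the subtracted $\mG$-loss), and use exact Linear BC via \cref{prop:equivalent_lbc} to make the population loss at $\phi\opt$ vanish, arriving at the identical bound $4\eps_\rho + 4\eps_M + 2\eps_g$ and hence the same constants $24, 128, 40$. The only difference is organizational: you split the objective into reward and transition blocks and separately establish $\mL_{lbc}(\wh\phi) \le 2\wh\mL_{lbc}(\wh\phi) + 2\eps_\rho + 2\eps_M$ and $\wh\mL_{lbc}(\phi\opt) \le \eps_\rho + \eps_M + \eps_g$, whereas the paper runs a single monolithic chain of inequalities.
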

\begin{proof}
\begin{align*}
	&\mL_{lbc}(\wh\phi)
	\\&= \Eb[\nu]{ (\rho_{\wh\phi}\tr \wh\phi(s,a) - r(s,a) )^2 } + \Eb[\nu \circ P]{ \|M_{\wh\phi} \wh\phi(s,a) - \gamma\wh \phi(s',\pi)\|_2^2 } - \Eb[\nu\circ P]{ \|g_{\wh\phi}(s,a) - \gamma\wh\phi(s',\pi)\|_2^2 } 
	\intertext{Since $\rho_{\wh\phi}, M_{\wh\phi}$ are minimizers under $\nu$, }
	&\leq \Eb[\nu]{ (\wh\rho_{\wh\phi}\tr \wh\phi(s,a) - r(s,a) )^2 } + \Eb[\nu \circ P]{ \|\wh M_{\wh\phi} \wh\phi(s,a) - \gamma\wh \phi(s',\pi)\|_2^2 } - \Eb[\nu\circ P]{ \|g_{\wh\phi}(s,a) - \gamma\wh\phi(s',\pi)\|_2^2 } 
	\intertext{By the $\rho,M$ parts of \cref{lm:rep-learning-fast-rates}, }
	&\leq 2\Eb[\mD]{ (\wh\rho_{\wh\phi}\tr \wh\phi(s,a) - r(s,a) )^2 } + 2 \Eb[\mD]{ \|\wh M_{\wh\phi} \wh\phi(s,a) - \gamma\wh\phi(s',\pi)\|_2^2 } - 2 \Eb[\mD]{ \|g_{\wh\phi}(s,a) - \gamma\wh\phi(s',\pi)\|_2^2 }  + 2\eps_\rho + 2\eps_M
	\intertext{Since $\wh g_{\wh\phi}$ minimizes under $\mD$, }
	&\leq 2\Eb[\mD]{ (\wh\rho_{\wh\phi}\tr \wh\phi(s,a) - r(s,a) )^2 } + 2 \Eb[\mD]{ \|\wh M_{\wh\phi} \wh\phi(s,a) - \gamma\wh\phi(s',\pi)\|_2^2 } - 2 \Eb[\mD]{ \|\wh g_{\wh\phi}(s,a) - \gamma\wh\phi(s',\pi)\|_2^2 }  + 2\eps_\rho + 2\eps_M
	\intertext{By optimality of $\wh\phi$ under $\wh\mL_{lbc}$, }
	&\leq 2\Eb[\mD]{ (\wh\rho_{\phi\opt}\tr \phi\opt(s,a) - r(s,a) )^2 } + 2 \Eb[\mD]{ \|\wh M_{\phi\opt} \phi\opt(s,a) - \gamma\phi\opt(s',\pi)\|_2^2 } - 2 \Eb[\mD]{ \|\wh g_{\phi\opt}(s,a) - \gamma\phi\opt(s',\pi)\|_2^2 }  
	\\&+ 2\eps_\rho + 2\eps_M
	\intertext{By the $\mG$ part of \cref{lm:rep-learning-fast-rates}, we have $\Eb[\mD]{ \|g_{\phi\opt}(s,a) - \phi\opt(s',\pi)\|_2^2 - \|\wh g_{\phi\opt}(s,a) - \phi\opt(s',\pi)\|_2^2 } \leq -\frac{1}{2}\Eb[\nu]{ \|g_{\phi\opt} - \wh g_{\phi\opt}\|_2^2} + \eps_g \leq \eps_g$. Then, using the optimality of $\wh\rho$ and $\wh M$ under $\mD$, }
	&\leq 
	2\Eb[\mD]{ (\rho_{\phi\opt}\tr \phi\opt(s,a) - r(s,a) )^2 } + 2 \Eb[\mD]{ \|M_{\phi\opt} \phi\opt(s,a) - \phi\opt(s',\pi)\|_2^2 } - 2 \Eb[\mD]{ \|g_{\phi\opt}(s,a) - \phi\opt(s',\pi)\|_2^2 } 
	\\&+ 2\eps_\rho + 2\eps_M + 2\eps_g
	\intertext{By the $\rho,M$ parts of \cref{lm:rep-learning-fast-rates}, }
	&\leq  3\Eb[\nu]{ (\rho_{\phi\opt}\tr\phi\opt(s,a) - r(s,a))^2 } + 3\Eb[\nu\circ P]{ \|M_{\phi\opt}\phi\opt(s,a) - \phi\opt(s',\pi) \|_2^2 } - 3\Eb[\nu]{ \|g_{\phi\opt}(s,a) - \phi\opt(s',\pi) \|_2^2 } 
	\\&+ 4\eps_\rho + 4\eps_M + 2\eps_g
	\intertext{By \cref{asm:g-realizability} and \cref{lm:double-sampling}, }
	&= 3\mL_{lbc}(\phi^\star) + 4\eps_\rho + 4\eps_M + 2\eps_g
	\intertext{By assumption that $\phi\opt$ is Linear BC and \cref{prop:equivalent_lbc}, } %
	&= 4\eps_\rho + 4\eps_M + 2\eps_g.
\end{align*}

\end{proof}

We now prove \cref{thm:learning-phi}, in the general stochastic case. The deterministic transitions case is subsumed by ignoring the minimization over $\mG$, i.e. setting the complexity term of $\mG$ to zero.

\learningphi*
\begin{proof}[Proof of \cref{thm:learning-phi}]
	First, by our assumption that $\sqrt{N} \geq 4(96\kappa(\Phi) + 4\sqrt{d} + 4\log^{1/2}(1/\delta))/\beta$, \cref{lm:uniform-spectrum-concentrate} implies that w.p. at least $1-\delta$, we have $\sup_{\phi \in \Phi} \abs{ \lambda_{min}(\Sigma(\phi)) - \lambda_{min}(\wh\Sigma(\phi)) } \leq \beta/4$. Under this high probability event, we have two important consequences:
	\begin{enumerate}
		\item $\phi\opt$ is feasible in \cref{eq:phihat-def}, since $\lambda_{min}(\wh\Sigma(\phi\opt)) \geq \lambda_{min}(\Sigma(\phi\opt)) - \beta/4 \geq \beta(1-1/4) \geq \beta/2$. In particular, this means $\wh\mL_{lbc}(\wh\phi) \leq \wh\mL_{lbc}(\phi^\star)$.
		\item The covariance of $\wh\phi$ has lower-bounded eigenvalues, since $\lambda_{min}(\Sigma(\wh\phi)) \geq \lambda_{min}(\wh\Sigma(\wh\phi)) - \beta/4 \geq \beta(1/2-1/4) \geq \beta/4$.
	\end{enumerate}

	Now, apply \cref{lm:fast-rates-rep-learning-bound} to bound $\mL_{lbc}(\wh\phi)$, so w.p. $1-5\delta$,
	\begin{align*}
	    \mL_{lbc}(\wh\phi) \leq \frac{24d(1+W)^2\log(4W|\Phi| N/\delta) + 128 d^2 \log(4|\Phi|N/\delta) + 40\gamma^2 \log(2|\mG|/\delta)}{N}
	\end{align*}
	
	By \cref{lm:equivalent_approximate_lbc}, we have that $\wh\phi$ is $\wh\eps$-approximately Linear BC, with parameter
	\begin{align*}
		\wh\eps 
		&\leq (1+W) \cdot \sqrt{ \frac{24d(1+W)^2\log(4W|\Phi| N/\delta) + 128 d^2 \log(4|\Phi|N/\delta) + 40\gamma^2 \log(2|\mG|/\delta)}{N} }
		\\&\leq \frac{13d(1+W)^2\log^{1/2}(4W|\Phi|N/\delta)}{\sqrt{N}} + \frac{7\gamma(1+W)\log^{1/2}(2|\mG|/\delta)}{\sqrt{N}}
	\end{align*}
	
	Finally, we remark that the $\wh W$ for $\wh\phi$ (in the approximately Linear BC case) is upper bounded by a polynomial in $W\opt$ in the assumed exact Linear BC of $\phi\opt$. 
	Consider our assumption that $\phi\opt$ is exactly Linear BC with $W\opt = W$ (use $\star$ to highlight that it is the $W$ in the assumption, which we now show matches the $W$ in the result).
	Then, by \cref{prop:equivalent_lbc}, $\exists M\opt$ with $\|M\opt\|_2 \leq \sqrt{ 1-\frac{\|\rho^\star\|_2^2}{W^{\star 2}} }$. Hence, it suffices to minimize over this smaller ball for $\wh M$, so that $\|\wh M\|_2 \leq \|M\opt\|_2$.
	Now, take the smallest possible $W$ in \cref{lm:equivalent_approximate_lbc}, so that 
	\begin{align*}
		\wh W 
		&= \frac{\nm{\rho^\star}_2}{1- \nm{M\opt}_2} 
		\\&\leq \frac{\nm{\rho^\star}_2}{1-\sqrt{1 - \frac{\nm{\rho^\star}_2^2}{W^{\star 2}}}}
		\\&= \frac{\nm{\rho^\star}_2 (1 + \sqrt{1 - \frac{\nm{\rho^\star}_2^2}{W^{\star 2}}}) }{\frac{\nm{\rho^\star}_2^2}{W^{\star 2}}}
		\\&\leq \frac{2W^{\star 2}}{\nm{\rho^\star}_2},
	\end{align*}
	which is a polynomial in $W^\star$.
\end{proof}

Our end-to-end result is deduced by chaining our LSPE theorem and the above theorem together. 
\endtoend*
\begin{proof}[Proof of \cref{thm:end-to-end}]
	We first apply \cref{thm:learning-phi} to see that $\wh\phi$ satisfies the two properties needed for LSPE. It is indeed approximately Linear BC, with $\wh\eps$ specified in the theorem, and also has coverage (i.e. $\lambda_{min}(\Sigma(\wh\phi)) \geq \beta/4$). Using these two facts, and on a separate independent dataset $\mD_2$ (needs to be a separate dataset since $\wh\phi$ is data-dependent), we run LSPE and directly apply \cref{thm:fqe} for the result.
\end{proof}

\section{Implementation Details}
\label{app:implementation}
Here we detail all environment specifications and hyperparameters used in the main text.
\subsection{Dataset Details}
Using the publicly released implementation for DrQ-v2, we trained high quality target policies and saved checkpoints for offline behavior datasets. We refer the readers to \citet{yarats2021mastering} for exact hyperparameters.
\begin{table}[h]
    \centering
    \begin{tabular}{c|cc}
    \toprule
        Task & \shortstack{Target\\Performance} & \shortstack{Behavior\\Performance} \\
    \midrule
        \texttt{Finger Turn Hard} & 927 & 226 (24\%) \\
        \texttt{Cheetah Run}      & 758 & 192 (25\%)\\
        \texttt{Quadruped Walk}   & 873 & 236 (27\%)\\
        \texttt{Humanoid Stand}   & 827 & 277 (33\%)\\
    \bottomrule
    \end{tabular}
    \caption{Performance for target and behavior policies used to collect evaluation and offline datasets respectively.}
    \label{tab:perf}
\end{table}
\subsection{Environment Details}
Following the standards used by DrQ-v2 \cite{yarats2021mastering}, all environments have a maximum horizon length of 500 timesteps. This is achieved by the behavior/target policy having an action repeat of 2 frames. Furthermore, each state is 3 stacked frames that are each 84 $\times$ 84 dimensional RGB images (thus $9\times84\times84$). 
\begin{table}[h]
    \centering
    \begin{tabular}{c|ccc}
    \toprule
        Task & Action Space Dimension & Task Traits & Reward Type\\
    \midrule
        \texttt{Finger Turn Hard} & 2 & turn & sparse\\
        \texttt{Cheetah Run} & 6 & locomotion & dense\\
        \texttt{Quadruped Walk} & 12 & locomotion & dense\\
        \texttt{Humanoid Stand} & 21 & stand & dense\\
    \bottomrule
    \end{tabular}
    \caption{Task descriptions, action space dimension, and reward type for each tested environment.}
    \label{tab:my_label}
\end{table}
\subsection{Representation Architecture and Hyperparameter Details}
We adopt the same network architecture as DrQ-v2's critic, first introduced in SAC-AE \cite{yarats2019sacae}. 
More specifically, to process pixel input, we have a 5 layer ConvNet with $3\times3$ kernels and 32 channels with \texttt{ReLU} activations. The first convolutional layer has a stride of 2 while the rest has stride 1. The output is fed through a single fully connected layer normalized by LayerNorm. Finally, there is a \texttt{tanh} nonlinearity on the outputted 50 dimensional state-representation. The action is then concatenated with this output and fed into a 4-layer MLP all with \texttt{ReLU} activations. 

\begin{table}[h]
    \centering
    \begin{tabular}{c|c}
    \toprule
        Hyperparameter & Value \\
    \midrule
        Feature Dimension & 512 \\
        Weight Initialization & orthogonal init. \\
        Optimizer & Adam \\
        Learning Rate & $1\times 10^{-5}$ \\
        Batch Size & 2048 \\
        Training Epochs & 200 \\
        $\tau$ (target) & 0.005 \\
        $\lambda_{\text{Design}}$ & $5\times 10^{-6}$ \\
    \bottomrule
    \end{tabular}
    \caption{Hyperparameters used for~\alg}
    \label{tab:hyperparam}
\end{table}
\newpage
\subsection{Benchmarks and Metrics}
\textbf{Modifications to CURL: }
Originally, CURL only does contrastive learning between the image states with data augmentation. For OPE, apply the same CURL objective to the state-action feature detailed in the previous section. Note we also train CURL with the same random cropping image augmentations presented by the authors. Finally, since we are not interleaving the representation learning with SAC, we do not have a Q prediction head.

\textbf{Modifications to SPR: }
We use the same image encoder as our features for SPR. The main difference is in the architecture of the projection layers where we implement as 3-layer mlp with \texttt{ReLU} activations. Note that these are additional parameters that neither CURL nor~\alg~require. Finally, similarly to CURL, we do not have an additional Q-prediction head.

\textbf{Spearman Ranking Correlation Metric: } This rank correlation measures the correlation between the ordinal rankings of the value estimates and the ground truth returns. As defined in \citet{fu2021deepope}, we have for policies $1, 2, \ldots, N$, true returns $V_{1:N}$, and estimated returns $\hat{V}_{1:N}$:
\begin{equation*}
    \text{Ranking Correlation} = \frac{\text{Cov}\left(V_{1:N},\hat{V}_{1:N}\right)}{\sigma\left(V_{1:N}\right)\sigma(\hat{V}_{1:N})}
\end{equation*}

\end{document}